\icmltitlerunning{Optimal Kronecker-Sum Approximation of Real Time Recurrent Learning}
\newcommand{\MG}[1]{\comment{\textcolor{blue}{MG: #1}}}
\newcommand{\FB}[1]{\comment{\textcolor{red}{FB: #1}}}
\newcommand{\E}{\mathrm{E}}
\newcommand{\R}{\mathbb{R}}
\newcommand{\Var}{\mathrm{Var}}
\newcommand{\Id}{\mathrm{Id}}
\newcommand{\Tr}{\mathrm{Tr}}
\newcommand{\diag}{\mathrm{diag}}
\newtheorem{thm}{Theorem}
\newtheorem{lemma}{Lemma}
\newtheorem{defi}{Definition}
\newtheorem{obs}{Observation}
\begin{document}

\twocolumn[
\icmltitle{Optimal Kronecker-Sum Approximation of Real Time Recurrent Learning}



\icmlsetsymbol{equal}{*}

\begin{icmlauthorlist}
\icmlauthor{Frederik Benzing}{equal,to}
\icmlauthor{Marcelo Matheus Gauy}{equal,to}
\icmlauthor{Asier Mujika}{to}
\icmlauthor{Anders Martinsson}{to}
\icmlauthor{Angelika Steger}{to}
\end{icmlauthorlist}

\icmlaffiliation{to}{Department of Computer Science, ETH Zurich, Zurich, Switzerland}

\icmlcorrespondingauthor{FB}{benzingf@inf.ethz.ch}
\icmlcorrespondingauthor{MMG}{marcelo.matheus@inf.ethz.ch}

\icmlkeywords{Machine Learning, ICML}

\vskip 0.3in
]



\printAffiliationsAndNotice{\icmlEqualContribution} 

\begin{abstract}
One of the central goals of Recurrent Neural Networks (RNNs) is to learn long-term dependencies in sequential data. 
Nevertheless, the most popular training method, Truncated Backpropagation through Time (TBPTT), categorically forbids learning dependencies beyond the truncation horizon.
In contrast, the online training algorithm Real Time Recurrent Learning (RTRL) provides untruncated gradients, with the disadvantage of impractically large computational costs. 
Recently published approaches reduce these costs by providing noisy approximations of RTRL. 
We present a new approximation algorithm of RTRL, Optimal Kronecker-Sum Approximation (OK).
We prove that OK is optimal for a class of approximations of RTRL, which includes all approaches published so far. 
Additionally, we show that OK has empirically negligible noise: Unlike previous algorithms it matches TBPTT in a real world task (character-level Penn TreeBank) and can exploit online parameter updates to outperform TBPTT in a synthetic 
string memorization task. 
Code available at \href{https://github.com/marcelomatheusgauy/optimal_kronecker_approximation}{GitHub}.
\end{abstract}

\section{Introduction}
\label{sec:intro}
	Learning to predict sequential and temporal data is one of the core problems of Machine Learning arising for example in language modeling, speech generation and Reinforcement Learning. One of the main aims when modeling sequential data is to capture long-term dependencies. Most of the significant advances towards this goal have been achieved through Recurrent Neural Nets (RNNs). More specifically, different architectures (e.g. the Long Short-Term Memory (LSTM)~\cite{hochreiter1997long} and the Recurrent Highway Network (RHN)~\cite{zilly2017recurrent}) were developed to facilitate learning long-term dependencies and achieved notable successes.
	However, few improvements have been made regarding the training methods of RNNs. Since Williams and Peng~\yrcite{Williams90anefficient} developed Truncated Backpropagation through Time (TBPTT), it continues to be the most popular training method in many areas \cite{mnih2016asynchronous, mehri2016samplernn, merity2018analysis} - despite the fact that it does not seem to align well with the goal of learning arbitrary long-term dependencies.
	This is because TBPTT `unrolls' the RNN only for a fixed number of time steps $T$ (the truncation horizon) and backpropagates the gradient for these steps only. This almost categorically forbids learning dependencies beyond the truncation horizon. Unfortunately, extending the truncation horizon makes
	TBPTT increasingly memory consuming, since long input sequences need to be stored, and considerably slows down learning, since parameters are updated less frequently, a phenomenon known as `update lock'~\cite{jaderberg2016decoupled}.

	An alternative avoiding these issues of TBPTT is Real Time Recurrent Learning (RTRL)~\cite{williams1989learning}. The advantages of RTRL are that it provides untruncated gradients, which in principle allow the network to learn arbitrarily long-term dependencies,  and that it is fully online, so that parameters are updated frequently allowing faster learning. However, its runtime and memory requirements scale poorly with the network size and make RTRL infeasible for practical applications. As a remedy to this problem, Tallec and Ollivier~\yrcite{tallec2017unbiased} proposed replacing the full gradient of RTRL by an unbiased, less computationally costly but noisy approximation (Unbiased Online Recurrent Optimisation, UORO). Recently, Mujika et al.~\yrcite{mujika2018approximating} reduced the noise of this approach and demonstrated empirically that their improvement (Kronecker factored RTRL, KF-RTRL) allows learning complex real world data sets (character-level Penn TreeBank~\cite{marcus1993building}).
	Nevertheless, the noise introduced by KF-RTRL remains a problem, leading to slower learning and worse performance when compared to TBPTT. 
	
	To address this problem, we  propose a new approximation of RTRL, Optimal Kronecker-Sum Approximation (OK).  Extending ideas of KF-RTRL, it approximates the gradient by a sum of Kronecker-factors. It then introduces a novel procedure to perform the online updates of this approximation. We prove that this procedure has minimum achievable variance for a certain class of approximations, which includes UORO and KF-RTRL.
	Thus, OK does not only improve all previous approaches but explores the theoretical limits of the current approximation class. Empirically, we demonstrate that OK reduces the noise to a negligible level: In contrast to previous algorithms, OK matches the performance of TBPTT on a standard RNN benchmark (character-level Penn TreeBank) and also outperforms it on a synthetic string memorization task exploiting online parameter updates.  Similarly to KF-RTRL, OK is applicable to a subclass of RNNs. Besides standard RNNs this includes LSTMs and RHNs thereby covering some of the most widely used architectures.	
	
	Our theoretical findings include a construction (and proof) of a minimum-variance unbiased low-rank approximator of an arbitrary matrix, which might be applicable in other contexts of Machine Learning relying on unbiased gradients. 
	
	As a more exploratory contribution, we develop another algorithm, Kronecker-Triple-Product (KTP). Its main novelty is to match the runtime and memory requirements of TBPTT, even when measured per batch-element. Our experiments show that KTP is more noisy than OK, but that it can learn moderate time dependencies. In addition, we design an experiment to suggest directions for further improvements. 
	
	\section{Related Work}
	\FB{Feel free to add and change as you like}The most prominent training method for RNNs is Truncated Backpropagation through Time (TBPTT)~\cite{Williams90anefficient}, often yielding good results in practice. It calculates truncated gradients forbidding the network to learn long-term dependencies beyond the truncation horizon. The untruncated version of this algorithm, Backpropagation Trough Time (BPTT)~\cite{Rumelhart1986learning}, stores all past inputs and unrolls the network from the first time step, often making its computational cost unmanageable.
	
	We now review some alternatives to TBPTT. Besides Real Time Recurrent Learning and its approximations, which will be described in detail in the next section, this
	includes Anticipated Reweighted Backpropagation~\cite{tallec2017unbiasing} which samples different truncation horizons and weights the obtained gradients to calculate an overall unbiased gradient. Sparse Attentative Backtracking~\cite{ke2018sparse} uses an attention mechanism~\cite{vaswani2017attention} and  propagates the gradient along paths with high attention to extend the time span of learnable dependencies. 
	
	Other ideas avoid unrolling the network. For example, Decoupled Neural Interfaces~\cite{jaderberg2016decoupled} use neural nets to learn to predict future gradients, while Ororbia et al.~\yrcite{ororbia2018continual} propose a predictive coding based approach.
	
	For RNNs where the hidden state converges, it is also possible to avoid BPTT as shown for example by Recurrent Backpropagation~\cite{liao2018reviving} and the closely related Equilibrium Propagation~\cite{scellier2017equilibrium}.  
	
	Another approach fixes the recurrent weights and only trains the output weights. This is known as Reservoir computing~\cite{lukovsevivcius2009reservoir} and was applied for example in ~\cite{jaeger2001echo, maass2002real}. 
	
	\section{RTRL and its Approximations}\label{sec:rtrlapp}
	In this section, we derive Real Time Recurrent Learning (RTRL)~\cite{williams1989learning} and provide a common framework describing approximation algorithms for RTRL. We then embed previous algorithms and our contribution into this framework. 
	The class of approximators for which OK is optimal, as well as a precise optimality statement are given in section \ref{sec:OK}, Definition \ref{def:KS} and Theorem \ref{thm:opt}.
	The section concludes with a theoretical comparison of the different approximation algorithms including concrete examples illustrating the advantages of OK.
	
	Since the two main goals of approximating RTRL are (a) providing unbiased estimates of the gradient with (b) as little noise as possible,  we make these notions precise. For a matrix $A$ and a random variable $A'$, we say that $A'$ is an unbiased approximator of $A$ if $\E [A'] = A$ and define the noise/variance of $A'$ to be $\Var[A']=\E \left[\lVert A'-A\rVert^2\right]$, where we use the Frobenius norm for matrices.
	
	\subsection{RTRL and a General Approximation Framework}
	We start by formally defining RNNs before deriving RTRL. A RNN maintains a hidden state $h_t$ across several time steps. The next hidden state $h_{t+1}$ is computed as a differentiable function $f$ of $h_t$, the input $x_{t+1}$ and a set of learnable parameters $\theta$, $h_{t+1}=f(x_{t+1},h_t,\theta)$. Predictions for the desired output (for example predicting the next character of a given text) are a function of $h_t$ and $\theta$. We aim to minimize some loss function $L_t$ of our predictions and therefore compute $\frac{dL_t}{d\theta}$ to perform gradient descent on $\theta$. 
	
	To derive RTRL, we use the chain rule to rewrite $\frac{dL_t}{d\theta} = \frac{d L_t}{d h_t}\frac{dh_t}{d\theta}$. Next we observe
	\[
	\frac{dh_t}{d\theta} = \frac{\partial h_t}{\partial x_t}\frac{dx_t}{d\theta}+
	\frac{\partial h_t}{\partial h_{t-1}}\frac{dh_{t-1}}{d\theta}+
	\frac{\partial h_t}{\partial \theta}\frac{d\theta}{d\theta}.
	\]
	Writing $G_t:= \frac{dh_t}{d\theta}, H_t = \frac{\partial h_t}{\partial h_{t-1}}$ and $F_t = \frac{\partial h_t}{\partial \theta}$, this simplifies to (note $dx_t/d\theta =0$ as $x_t$ does not depend on $\theta$)
	\begin{eqnarray}\label{eq:RTRLREC}
	G_t = H_t G_{t-1} + F_t.
	\end{eqnarray}
	RTRL simply calculates and stores $G_t$ at each time step using the recurrence (\ref{eq:RTRLREC}) and uses it to calculate $\frac{dL}{d\theta}= \frac{d L_t}{d h_t}\frac{dh_t}{d\theta}$. This shows that RTRL is fully online and can perform frequent parameter updates.
	
	However, we already see why RTRL is impractical for applications. For a standard RNN with $n$ hidden units and $n^2$ parameters, $G_t$ has dimensions $n\times n^2$ and we need to evaluate the matrix multiplication $H_t G_{t-1}$, so that RTRL requires memory $n^3$ and runtime $n^4$ per batch element. This contrasts with TBPTT, which needs memory $Tn$ and runtime $Tn^2$, where $T$ is the truncation horizon~\cite{Williams90anefficient}.  To make RTRL competitive, we therefore need to find computationally efficient approximations.
	
	The core of RTRL is the recurrence equation (\ref{eq:RTRLREC}) and it should be the focus of any approximation. Previous approaches to approximate RTRL can be summarised as follows (see also Algorithm \ref{alg:RTRLapp}): Firstly, decide on a format in which the approximator $G'_t$ of the gradient $G_t$ is stored. This format should require less memory than storing all $n\times n^2$ numbers explicitly. Secondly, assuming $G'_{t-1}$ is given in the desired format, bring $H_{t} G'_{t-1}$ and $F_t$ in this format. Thirdly, `mix' the two terms $H_t G'_{t-1}$ and $F_t$, to bring their sum in the desired format. 
	
	\begin{algorithm}
		\caption{One step of unbiasedly approximating RTRL.\\
			\textit{This algorithm describes a framework for approximating RTRL. It assumes that the approximation is stored in a given format, called $\circledast$, and that routines $Ap(\cdot),Mix(\cdot,\cdot)$ for bringing (sums of) matrices into this format are known.}}
		\label{alg:RTRLapp}
		\begin{algorithmic}
			\STATE {{\bfseries Input:} input $x_t$, hidden state $h_{t-1}$, parameters $\theta$, unbiased approximator $G'_{t-1}$ of $G_{t-1}$ stored in prescribed format $\circledast$}.
			\STATE{{\bfseries Output:} hidden state $h_t$, unbiased approximator $G'_t$ of $G_t$ in format $\circledast$}.
			\STATE{\bfseries /* Preliminary calculations */}
			\STATE{$h_t \gets$ new hidden state, based on $h_{t-1},\theta,x_t$} 
			\STATE{$H\gets \frac{\partial h_t}{\partial h_{t-1}},\; F\gets \frac{\partial h_t}{\partial \theta}$}
			\STATE{\bfseries /* Bring addends $H G'_{t-1}, F_t$ in desired format $\circledast$*/}
			\STATE{$A_1\gets Ap(H G'_{t-1})$, where $Ap(x)$ is an unbiased approximator of $x$ in format $\circledast$}
			\STATE{$A_2\gets Ap\left(F\right)$}
			\STATE{\bfseries /* Mix two addends $A_1, A_2$ */}
			\STATE{$G'_t\gets Mix(A_1,A_2)$, where $Mix(x,y)$ is an unbiased approximator of $x+y$ in format $\circledast$}			
		\end{algorithmic}
	\end{algorithm}
	
	
	In order to obtain convergence guarantees for gradient descent with noisy gradients, it is crucial that the approximator ${G'_t}$ be unbiased and we therefore make all approximations unbiased. In the appendix (A.2.3), we empirically evaluate the difference between biased and unbiased approximators. 
	
	Another important consideration for the convergence of RTRL and its approximations is the boundedness of gradients and noise, which could accumulate indefinitely over time. Under reasonable assumptions, which are standard to avoid the exploding gradient issue in RNNs \cite{pascanu2013difficulty}, it is shown in Theorem 1 of \cite{mujika2018approximating}  that the approximations of RTRL are stable over time. This result applies to all the approximations presented below.

	\subsection{Unbiased Online Recurrent Optimization (UORO)}
	We now present UORO~\cite{tallec2017unbiased}, the first approximation algorithm of RTRL, see also \cite{cooijmans2019variance} for an analysis of its variance and related ideas. UORO follows the framework described above. It stores the approximation $G'_{t-1}$ as the outer-product (or Kronecker-product) of two vectors\footnote{For concreteness, we consider a standard RNN with $n$ hidden units and $n^2$ parameters, so that $G_t$ has dimension $n\times n^2$.} $u_{t-1},v_{t-1}$ of dimensions $n$ and $n^2$, i.e. $G'_{t-1} = u_{t-1} \otimes v_{t-1}$. Next, it rewrites $H_{t}G'_{t-1}$ observing $H_{t} (u_{t-1}\otimes v_{t-1}) = (H_{t}u_{t-1})\otimes v_{t-1} =\overline{u}_{t-1}\otimes v_{t-1}$. We omit the details of approximating $F_{t}$ by a product $r_t\otimes s_t$ and simply note that this process creates noise. We now explain how the two terms $\overline{u}_{t-1}\otimes v_{t-1}$ and $ r_{t}\otimes s_{t}$ are `mixed' in an unbiased way. This can be achieved by the so called `sign-trick'. This means choosing a uniformly random sign $c\in\{\pm 1\}$ and writing\footnote{We remark that UORO additionally introduces a variance reduction technique which rescales the factors of each outer-product to have the same norm.} $G'_{t} = (\overline{u}_{t-1} + c\cdot r_t)\otimes(v_{t-1} + c\cdot s_t)$. A simple calculation shows 
	\[
	E\bigl[(\overline{u}_{t-1} + c\cdot r_t)\otimes(v_{t-1} + c\cdot s_t)\bigr] = \overline{u}_{t-1}\otimes v_{t-1} + r_t\otimes s_t,
	\]
	so that $G'_{t+1}$ is an unbiased approximator of $H_t G'_{t-1} + F_t$. Induction on $t$ and linearity of expectation now show that $G'_t$ is an unbiased approximator of $G_t$. It is easily checked that UORO needs runtime and memory of order $n^2$.

\subsection{Kronecker Factored RTRL (KF-RTRL)}\label{sec:RTRL}
	The algorithm KF-RTRL~\cite{mujika2018approximating} is similar in spirit to UORO. The main difference is that it approximates $G'_t$ as the Kronecker-product of a vector $u_t\in\R^{1\times n}$ and a matrix $A_t\in\R^{n\times n}$, i.e. $G'_t = u_t \otimes A_t$. While this looks equivalent to UORO at first glance, Mujika et al.~observed that for many RNN architectures, including standard RNNs, RHNs and LSTMs, it is possible to  factor $F_t$ as a Kronecker-product $F_t = h_t\otimes D_t$, without adding any noise. Here, $D_t$ is a diagonal matrix. Similarly to UORO, we can exploit properties of the Kronecker-product to rewrite $H_t G'_{t-1} = u_{t-1} \otimes (H_t A_{t-1}) $ in the desired format and use a sign trick to mix the two addends $H_t G'_{t-1}$ and $F_t$ to obtain $G'_t$. 
	
	Note that KF-RTRL has memory requirements of roughly $n^2$ for storing the matrix $A_t$ and runtime of order $n^3$ due to the matrix-matrix multiplication $H_t A_{t-1}$. No additional memory is required to obtain $\frac{dL_t}{d\theta}$ as we can write $\frac{dL_t}{d\theta} = \frac{d L_t}{d h_t}\cdot G'_t = \frac{d L_t}{d h_t} \cdot \left(u_t\otimes A_t\right) =  u_t\otimes \left(\frac{d L_t}{d h_t}A_t\right)$.
	\FB{should we include the following? On the one hand, theory-minded people might be interested, on the other hand i don't think it provides much insight}

\subsection{Optimal Kronecker-Sum Approximation (OK)}
\label{sec:OK}	
	We now describe our algorithm OK. The calculations carried out by OK are reasonably simple as can be seen in the pseudo-code below. However, their correctness is not immediate and relies on the proof given in the appendix. We give some intuition and concrete examples illustrating the improvements of OK in Section \ref{subsec:Comparison}, which can be read independently of the detailed implementation.
	
	Let us start by briefly reconsidering the previous two algorithms. In our framework, there are two noise sources for approximating RTRL. Firstly, we rewrite addends $F_t$ and $H_t G_{t-1}$ in the desired format and secondly, we mix them. The first noise source is eliminated by KF-RTRL since it factors $F_t = h_t\otimes D_t$ and $H_t G'_{t-1} = u_{t-1}\otimes (H_t A_{t-1})$ noiselessly. The second noise source, stemming from mixing the terms, is the focus of our algorithm, and we shall prove below that OK not only improves previous algorithms in this step but has minimum achievable variance. 
	
	We also note that UORO and KF-RTRL both approximate $G_t$ by a `$1$-Kronecker-Sum' $G'_t$ as defined below.
	\begin{defi}[Kronecker-Sum, format]
		\label{def:KS}
		For a matrix $G\in\R^{m\times n}$, we say that $G$ is given as a $r$-Kronecker-Sum, if we are given $u_1,\ldots,u_r\in\R^{a\times b}$ and $A_1,\ldots, A_r\in \R^{c\times d}$ with $ac = m, bd = n$ so that $G = \sum_{i=1}^r u_i \otimes A_i$. We refer to $(a,b,c,d)$ as the format of the Kronecker-Sum.
	\end{defi}
	\subsubsection{Outline of OK}
	Our new algorithm, OK, has a parameter $r$, and approximates $G_t$ by a $r$-Kronecker-Sum, where each summand is the product of a vector $u_i\in \R^{1\times n}$ and a matrix $A_i\in \R^{n\times n}$, similar to KF-RTRL. 
	Concretely, we have $G'_{t-1} = \sum_{i=1}^r u_i\otimes A_i$.
	We will refer to the algorithm as $r$-OK, or simply OK depending on the context.
	 Usually, $r$ is a small constant and much smaller than the network size $n$.  
	 
	 Analogously to KF-RTRL, we focus on situations where $F_t=h_t\otimes D_t$ can be factored as a Kronecker product. A precise condition~\cite{mujika2018approximating}[Lemma 1] for when this is possible is given in the appendix (A.0.1). When we can factor $F_t=h_t\otimes D_t$, the remaining task for OK is to unbiasedly approximate the $(r+1)$-Kronecker-Sum 
	 \begin{eqnarray}\label{eq:product}
	 G = u_1\otimes (H_t A_1) +\ldots + u_r \otimes (H_t A_r) + h\otimes D
	 \end{eqnarray}
	  by a $r$-Kronecker-Sum $G'_{t}$. Equivalently, OK finds random vectors $u'_1,\ldots,u'_r$ and  matrices $A'_1,\ldots,A'_r$ so that for $$G' = \sum_{i=1}^r u'_i \otimes A'_i$$ we have $\E[G'] = G$. We now state the main optimality property of our algorithm.
	\begin{thm}
		\label{thm:opt}
		Let $G$ be an $(r+1)$-Kronecker-Sum and let $G'$ be the random $r$-Kronecker-Sum constructed by OK. Then $G'$ unbiasedly approximates $G$. Moreover, for any random $r$-Kronecker-Sum $Y$ of the same format as $G'$ which satisfies $\E[Y] = G$,  it holds that $\Var[Y]\ge \Var[G']$.
	\end{thm}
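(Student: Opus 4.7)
My plan is to recast the problem as a finite-dimensional matrix optimization: among random $(r+1)\times(r+1)$ matrices $M$ with $\E[M]=I_{r+1}$ and $\mathrm{rank}(M)\le r$, find the one that minimizes a specific weighted Frobenius seminorm of $M-I$. I then produce a distribution achieving the minimum and prove a matching lower bound.

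First I reduce the set of feasible $Y$. Writing $Y = \sum_{j=1}^r u'_j \otimes A'_j$, I argue that one may assume $u'_j \in V := \mathrm{span}(u_1,\dots,u_{r+1})$ and $A'_j \in W := \mathrm{span}(A_1,\dots,A_{r+1})$: since $G \in V \otimes W$, projecting each summand onto $V \otimes W$ preserves $\E[Y] = G$ and, by a Pythagorean argument on $\|Y - G\|_F^2$, only decreases the variance. Writing $u'_j = \sum_i C_{ji} u_i$ and $A'_j = \sum_i D_{ji} A_i$ with random $C, D \in \R^{r\times(r+1)}$, one gets $Y = \sum_{i,k}(C^T D)_{ik}\, u_i \otimes A_k$. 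Setting $M := C^T D$ and $N := M - I$, a direct expansion using $P_{ij}:=\langle u_i, u_j\rangle$ and $Q_{ij}:=\langle A_i, A_j\rangle$ gives
\[
\Var[Y] = \E\!\left[\Tr(N^T P N Q)\right].
\]
The constraints translate to $\E[M]=I_{r+1}$ and $\mathrm{rank}(M)\le r$, and conversely every rank-$\le r$ matrix factors as $C^T D$, so the two formulations are equivalent.

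Second I simplify and construct an explicit optimizer. The gauge freedom $u_i \to \alpha_i u_i$, $A_i \to A_i/\alpha_i$ together with the invertible change of representation $u_i \to \sum_k T_{ik} u_k$, $A_i \to \sum_k (T^{-1})_{ki} A_k$ leaves $G$ invariant while transforming the Gram matrices as $P \to T P T^T$ and $Q \to T^{-T} Q T^{-1}$, so I can pick $T$ to reduce to the canonical case $P = I$ and $Q$ diagonal. In this canonical form the OK pseudocode produces $M$ of the form $I - v w^T$ with $w^T v = 1$ (hence $M v = 0$, forcing rank $\le r$), where the joint law of $(v,w)$ is chosen so that $\E[v w^T] = 0$ (giving $\E[M] = I$) and $\E[\,\|v\|^2\, w^T Q w\,]$ is minimized; this expectation equals $\Var[Y]$ under OK by a short calculation.

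The main obstacle will be the matching lower bound. Here my plan is: for any rank-$\le r$ competitor $M$, the transpose $M^T$ has a nontrivial kernel, so I can pick a (possibly randomized) unit vector $\nu$ with $\nu^T M = 0$, giving the identity $\nu^T N = -\nu^T$. Changing variables to $\tilde N := P^{1/2} N Q^{1/2}$ turns $\Tr(N^T P N Q)$ into $\|\tilde N\|_F^2$ and converts the identity into a prescription on a specific weighted row of $\tilde N$. A Cauchy--Schwarz inequality applied to this row---combined with the symmetric argument on the columns using the right null space of $M$---yields a pointwise lower bound on $\|\tilde N\|_F^2$ as a quadratic form in $\nu$; taking expectations and using $\E[M] = I$ to constrain the distribution of $\nu$ then matches the variance computed above. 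The delicate point is arranging Cauchy--Schwarz to be tight precisely on OK's distribution: the diagonalization of $P$ and $Q$ in the preceding paragraph is chosen so this tightness is transparent, and the optimal direction of $\nu$ coincides with the null direction $v$ used by OK.
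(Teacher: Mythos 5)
Your reductions are sound and essentially match the paper's: projecting the $u'_j,A'_j$ onto the spans, passing to coefficient matrices, and normalizing the Gram matrices to $P=\Id$, $Q=\diag(d_1^2,\ldots,d_{r+1}^2)$ is the same as the paper's orthonormal-basis-plus-SVD reduction to the problem of unbiasedly approximating a diagonal matrix $D=\diag(d_1,\ldots,d_{r+1})$ by a random matrix of rank at most $r$. The gap is in your construction and, consequently, in the matching lower bound. The proposed optimizer $M=\Id-vw^T$ with $w^Tv=1$ almost surely and $\E[vw^T]=0$ does not exist: taking traces, $w^Tv=\Tr(vw^T)=1$ forces $\Tr(\E[vw^T])=1\neq 0$. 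This is not a notational slip but a structural obstruction --- an unbiased singular approximator of the identity cannot be a rank-one perturbation of it --- and indeed the approximator OK actually builds has defect $D-D'$ of rank $k+1$, not $1$ (only its kernel is one-dimensional). The correct construction needs two ingredients absent from your sketch: (i) a thresholding step that keeps the $m-1$ largest singular values deterministically and randomizes only the tail block $\diag(d_m,\ldots,d_{r+1})$, where $m$ is the first index with $(r-m+1)d_m\le\sum_{j\ge m}d_j$; and (ii) for the rescaled tail, the existence of a rank-$k$ orthogonal projection with prescribed diagonal (a Schur--Horn-type statement, Lemma 5 in the appendix), whose off-diagonal entries are then cancelled in expectation by conjugation with i.i.d.\ random signs.

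The lower bound inherits the same defect. A Cauchy--Schwarz argument applied symmetrically to a null vector of $M$ can at best yield $\frac{1}{r}\bigl(\sum_i d_i\bigr)^2-\sum_i d_i^2$, which becomes negative (hence vacuous) as soon as one singular value dominates, whereas the true optimum is $\frac{s_1^2}{k}-s_2$ with $s_1,s_2$ the tail sums past the threshold $m$; your sketch has no mechanism to produce this case distinction, and the claimed tightness ``precisely on OK's distribution'' cannot be checked against a distribution that does not exist. Moreover, ``using $\E[M]=\Id$ to constrain the distribution of $\nu$'' is delicate, since the null vector depends nonlinearly on $M$. The paper avoids both difficulties with a short duality argument: for any fixed matrix $B$, unbiasedness gives $\Var[D']\ge\min_{\mathrm{rank}(X)\le r}\lVert X-(D-B)\rVert^2-\lVert B\rVert^2$, and choosing $B$ so that $D-B$ agrees with $D$ on the head and equals $\frac{s_1}{k}\Id$ on the tail lets the Eckart--Young theorem deliver the tight constant together with the equality conditions. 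You would need to replace your Cauchy--Schwarz step with an argument of this kind.
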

	We defer the proof to the appendix and only describe the main ideas for constructing $G'$. The first step, carried out in Algorithm \ref{alg:OK}, is to use linear algebra to reduce the problem to the following: Given a matrix $C\in\R^{(r+1)\times(r+1)}$, find a minimum-variance, unbiased approximator $C'$ of $C$, so that the (matrix-)rank of $C'$ is always at most $r$, i.e. $C'$ can be factored as $L'R'^T$ for some $L',R'\in\R^{(r+1)\times r}$.
	The next two steps are handled by Algorithm \ref{alg:Opt}: It calculates the singular value decomposition (SVD) of $C$, so that it remains to approximate a diagonal matrix $D$, and then constructs an optimal approximator $D'$ of $D$. In the appendix, we give a duality argument to prove that $D'$ is indeed optimal. 
	
	In total, the runtime of OK is of order $rn^3$, due to performing $r$ matrix-matrix multiplications (see equation \eqref{eq:product}), and the memory requirement is of order $rn^2$. The cost of calculating the optimal approximator $G'$ is asymptotically negligible.
	
	We also state the following, more general theorem. It might be useful in other settings where unbiased gradient approximations are important. Its proof is given in the appendix.
	\begin{thm}
		Given $C\in\R^{m\times n}$ and  $r\le \min\{m,n\}$, one can (explicitly) construct an unbiased approximator $C'$ of $C$, so that $C'$ always has rank at most $r$, and so that $C'$ has minimal variance among all such unbiased, low-rank approximators. This can be achieved asymptotically in the same runtime as computing the SVD of $C$.
	\end{thm}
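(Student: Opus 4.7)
The plan is to follow the blueprint used for Theorem~\ref{thm:opt}: use the singular value decomposition to reduce the problem to approximating a diagonal matrix, write down an explicit importance-sampling estimator on the singular values, and then establish its optimality via a Ky Fan / Cauchy-Schwarz lower bound. First, compute the SVD $C = U \Sigma V^T$ with $\Sigma = \diag(\sigma_1,\ldots,\sigma_k)$, $k=\min\{m,n\}$. Because the Frobenius norm is invariant under left and right orthogonal multiplication and rank is preserved, the map $D' \mapsto U D' V^T$ is a variance-preserving bijection between unbiased rank-$\le r$ approximators of $\Sigma$ and of $C$. It therefore suffices to construct an optimal rank-$r$ unbiased approximator $D'$ of $\Sigma$.

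For the diagonal problem the proposed estimator is the following. Pick marginal probabilities $p_i \in (0,1]$ with $\sum_i p_i = r$, sample a random subset $S \subseteq [k]$ of size exactly $r$ with $\Pr[i \in S] = p_i$ (via any standard systematic- or conditional-Poisson-sampling scheme, both of which are linear-time), and set $D' := \sum_{i \in S} (\sigma_i/p_i)\, e_i e_i^T$. Then $D'$ has rank $r$, satisfies $\E[D']=\Sigma$, and has Frobenius variance $\sum_i \sigma_i^2(1/p_i - 1)$. Minimizing this subject to $\sum_i p_i = r$ and $p_i \le 1$ is a standard water-filling problem: in the uncapped regime the optimum is $p_i = r\sigma_i/\sum_j \sigma_j$; when this would exceed $1$ for some indices, cap the corresponding $p_i$ at $1$ (placing those directions in $S$ deterministically) and recurse on the remaining indices with a budget of $r$ minus the number of capped entries.

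The main obstacle is proving optimality among \emph{all} unbiased rank-$r$ approximators of $\Sigma$, not only diagonal ones. A naive symmetrization via sign conjugations $M \mapsto SMS$ fails because a mixture of rank-$r$ matrices need not have rank $r$. The clean approach is via Horn's inequality $\sum_i |M_{ii}| \le \sum_j \sigma_j(M)$: combining this with Cauchy-Schwarz and the fact that $M$ has at most $r$ nonzero singular values yields the deterministic bound $(\sum_i M_{ii})^2 \le r\,\|M\|_F^2$. Taking expectations over any unbiased rank-$r$ $M$ with $\E[M]=\Sigma$ and applying Jensen's inequality to the left side gives $\E[\|M\|_F^2] \ge \tfrac{1}{r}(\sum_i \sigma_i)^2$, so $\Var(M) \ge \tfrac{1}{r}(\sum_i \sigma_i)^2 - \sum_i \sigma_i^2$, exactly matching the variance of $D'$ in the uncapped regime; the capped regime follows by applying the same inequality to the residual subproblem after removing the deterministically-included directions. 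Finally, the cost is dominated by the SVD: computing the $p_i$, drawing $S$, and emitting $C'$ in the factored form $U_S \Lambda V_S^T$ (with $U_S, V_S$ the selected $r$ columns of $U,V$ and $\Lambda = \diag(\sigma_i/p_i)_{i \in S}$) together take $O((m+n)r + k \log k)$ time, establishing the runtime claim.
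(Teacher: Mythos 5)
Your construction and the reduction to the diagonal case are correct, and the estimator itself is genuinely different from the paper's: where the paper realizes the optimum by a Schur--Horn-type lemma (orthonormal $z_1,\ldots,z_k$ with $\sum_i z_iz_i^T$ having a prescribed diagonal) followed by a random-sign conjugation, you use fixed-size importance sampling of the singular directions, $D'=\sum_{i\in S}(\sigma_i/p_i)e_ie_i^T$. One can check that both achieve exactly the same variance $s_1^2/k-s_2$ (in the paper's notation, with $m$ the first index at which your water-filling stops capping), so your construction is a valid and arguably more elementary alternative; it also shows the optimum is attained by an approximator that is diagonal in the SVD basis, which is not obvious from the paper's construction. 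Your Horn/Cauchy--Schwarz lower bound is likewise correct and is in fact the special case of the paper's duality bound $\Var[D']\ge\min_{\mathrm{rank}(X)\le r}\lVert X-(D-B)\rVert^2-\lVert B\rVert^2$ obtained by taking $D-B=\frac{\Tr D}{r}\Id$.

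The gap is the capped regime, and it is not cosmetic. Your global bound $\frac{1}{r}(\sum_i\sigma_i)^2-\sum_i\sigma_i^2$ is strictly weaker than the achieved variance whenever capping occurs, and can even be negative: for $\Sigma=\diag(10,1,1)$ and $r=2$ it gives $-30$, while the true optimum (attained by your own estimator) is $2$. The proposed fix --- ``apply the same inequality to the residual subproblem after removing the deterministically-included directions'' --- is not a proof, because an arbitrary unbiased rank-$r$ competitor $M$ has no deterministically-included directions: it need not be block-diagonal, need not reproduce the large singular values exactly, and need not decompose into a rank-$(m-1)$ part plus a residual to which the uncapped bound could be applied. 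Some argument is required to show that every competitor must effectively ``spend'' $m-1$ units of rank on the large entries. The paper closes exactly this hole with its duality argument: for an arbitrary shift $B$ one has $\Var[D']\ge\min_{\mathrm{rank}(X)\le r}\lVert X-(D-B)\rVert^2-\lVert B\rVert^2$, and choosing $B$ so that $D-B$ keeps $d_1,\ldots,d_{m-1}$ and flattens the tail to $\frac{s_1}{k}\Id$ (which is where the definition of $m$ guarantees the entries of $D-B$ remain non-increasing, so Eckart--Young applies) yields the matching bound $\frac{s_1^2}{k}-s_2$ against all competitors. Alternatively, you could upgrade your own argument by replacing $\Tr(M)$ with $\Tr(WM)$ for a suitable diagonal weight matrix $W$ with entries in $[0,1]$ and invoking von Neumann's trace inequality restricted to the top $r$ singular values, but as written the capped case is unproven.
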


	\subsubsection{Details of OK}
	Here, we present pseudo-code for OK (Algorithm \ref{alg:OK}). We make use of the algorithm $\mathrm{SVD}$, a standard Linear Algebra algorithm~\cite{golub1996matrix, cline2006computation} calculating the singular value decomposition of a matrix $C$. $\mathrm{SVD}$ finds a diagonal matrix $D$ and orthogonal matrices $U,V$ so that $C=UDV^T$. We only apply $\mathrm{SVD}$ to `small' matrices $C\in\R^{(r+1)\times(r+1)}$, where it needs runtime $O\left(r^3\right)$ and memory $O\left(r^2\right)$. 
	
	
\begin{algorithm}[t]
	\caption{The OK approximation}
	\label{alg:OK}
	\begin{algorithmic}
		\STATE{{\bfseries Input:} Vectors $u_1,\ldots, u_{r+1}$ and matrices $A_1,\ldots,A_{r+1}$}
		\STATE{{\bfseries Output:} Random vectors $u'_1,\ldots, u'_r$ and matrices $A'_1\ldots A'_r$, such that $\sum_{i=1}^r u'_i\otimes A'_i$ is an unbiased, minimum-variance approximator of $\sum_{i=1}^{r+1} u_i\otimes A_i$}
		\vspace{3pt}
		\STATE{\bfseries /*Rewrite in terms of orthonormal basis (onb)*/}
		\STATE{$v_1,\ldots,v_{r+1}\gets$ onb of $\text{span}\{u_1,\ldots,u_{r+1}\}$}
		\STATE{$B_1,\ldots,B_{r+1}\gets$ onb $\text{span}\{A_1,\ldots,A_{r+1}\}$}
		\FOR{$1\leq i,j \leq r+1$}
		\STATE{$L_{i,j}\gets \langle v_i,u_j\rangle,\quad R_{i,j}\gets \langle B_i,A_j\rangle$}
		\ENDFOR
		\vspace{3pt}
		\STATE{\bfseries /*Find optimal rank $r$ approximation of matrix $C$ */ }
		\STATE{$C\gets  LR^T$}
		\STATE{$(L',R')\gets Opt(C)$} \COMMENT{see Algorithm \ref{alg:Opt} for $Opt(\cdot)$}
		\vspace{3pt}
		\STATE{\bfseries /*Generate output*/ }
		\FOR{$1\leq j \leq r$}
		\STATE $u'_j\gets \sum_{i=1}^{r+1} L'_{i,j}v_i,\quad A'_j\gets \sum_{i=1}^{r+1} R'_{i,j}B_i$
		\ENDFOR
	\end{algorithmic}
\end{algorithm}

\begin{algorithm}[h]
	\caption{$Opt(C)$}
	\label{alg:Opt}
	\begin{algorithmic}
		\STATE{{\bfseries Input:} Matrix $C\in\mathbb{R}^{(r+1)\times (r+1)}$}
		\STATE{{\bfseries Output:} Random matrices $L',R'\in\mathbb{R}^{(r+1)\times r}$, so that $L'R'^T$ is an unbiased, min-variance approximator of $C$}
		\vspace{3pt}
		\STATE{\bfseries /* Reduce to diagonal matrix $D$*/}
		\STATE $(D,U,V) \gets \mathrm{SVD}(C)$ 
		\STATE $(d_1,\ldots,d_{r+1}) \gets$ diagonal entries of $D$
		\vspace{3pt}
		\STATE{\bfseries /* Find approximator $ZZ^T$ for small $d_i$ ($i\geq m$)*/}
		\STATE $m \gets \min\{i\colon (r-i+1)d_i\leq \sum_{j=i}^r d_j\}$ 
		\STATE $s_1\gets \sum_{i=m}^{r+1} d_i,\quad k \gets r-m +1$
		\STATE $z_0 \gets \left(\sqrt{1 - \frac{d_m k} {s_1}},\ldots,\sqrt{1 - \frac{d_{r+1} k} {s_1}}\right)^T \in \R^{(k+1)\times 1}$
		\STATE $z_1,\ldots, z_k \gets$ so that $z_0,z_1,\ldots,z_k$ is an onb of $\R^{(k+1)\times 1}$
		\STATE $s\gets$ vector of $k+1$ uniformly random signs
		\STATE $Z\gets \sqrt{\frac{s_1}{k}}\cdot (s\odot z_1,\ldots,s\odot z_k)$ \COMMENT{\textit{pointwise product }$\odot$}
		\vspace{3pt}
		\STATE {\bfseries /* Initialise $L',R'$ to approximate $D$*/}
		\STATE $L',R' \gets \mathrm{diag}(\sqrt{d_1},\ldots, \sqrt{d_{m-1}},Z)$ \COMMENT{\textit{Block-diagonal}}
		\STATE{\bfseries{/*Approximate $C=UDV^T$*/}}
		\STATE $L'\gets UL',\quad R'\gets  VR'$
	\end{algorithmic}
\end{algorithm}	

	\subsection{Kronecker Triple Product (KTP)}\label{subsec:KTP}
	Finally, we present another, more exploratory algorithm approximating RTRL still following the framework from Algorithm \ref{alg:RTRLapp}.  KTP approximates $G_t$ by a sum of $r$ Kronecker-triple-products, i.e. $G'_t = \sum_{i=1}^r a_i\otimes b_i\otimes c_i$ where $a_i, c_i\in\R^{1\times n}$ and $b_i\in\R^{n\times 1}$. Before describing the remaining details of KTP, we motivate the suggested changes: On the one hand, KTP only requires memory of order $rn$ rather than $n^2$ for each batch element. On the other hand, when computing $H_tG_{t-1}$ we can write each of the addends as $H_t (a_i\otimes b_i\otimes c_i) = a_i \otimes (H_tb_i) \otimes c_i$. Computing $H_t b_i$ only\footnote{It is possible to evaluate $H_t b$ without storing $H_t$ for each batch element, see appendix~A.0.3.} takes time $n^2$, as opposed to time $n^3$ for the matrix-matrix multiplications of KF-RTRL and OK. Thus, KTP matches the memory and runtime of TBPTT.
	
	We now describe the remaining details of KTP. In order to bring $H_tG_{t-1}$ into the original format, we simply rewrite $H_t (a_i\otimes b_i\otimes c_i) = a_i \otimes (H_tb_i) \otimes c_i = a_i \otimes \overline{b}_i\otimes c_i$. To bring $F_t$ in the same format, we again make use of the fact, that $F_t$ can be factored as $F_t = h\otimes D$ where $D$ is a diagonal matrix. This allows us to easily find an optimal, unbiased rank-r approximator $D'=\sum_{i=1}^r d_i\otimes d_i^T$ of $D$, where the $d_i$ are random vectors constructed with Algorithm \ref{alg:Opt}. Note that this algorithm is similar to the original UORO approach, but uses its knowledge about $D$ in order to construct an optimal (rather than non-optimal) low-rank approximator of $D$ and in order to reduce memory requirements from $n^2$ to $n$.\\
	 All in all, we have rewritten $F'_t=\sum_{i=1}^r h\otimes d_i\otimes d_i^T$ in the desired format. It remains to mix the two addends $H_t G'_{t-1}$ and $F'_t$. To this end, we mix, for each $i$, the $i$-th summands of $H_t G'_{t-1}$ and $F'_t$ using a generalisation of the sign trick: We choose a vector of three signs $(s_1,s_2,s_1s_2)$, where $s_1,s_2$ are uniform and independent, and approximate ${a_i\otimes\overline{ b}_i\otimes c_i  + h\otimes d_i\otimes d_i^T}$ by 
	{${(a_i + s_1\cdot h)\otimes( \overline{b}_i + s_2\cdot d_i)\otimes(c_i +s_1s_2 \cdot d_i^T)}$}, so that altogether we obtain
	$$ G'_t = \sum_{i=1}^r(a_i + s_1\cdot h)\otimes(\overline{b}_i + s_2\cdot d_i)\otimes(c_i +s_1s_2 d_i^T).$$
	
	The `mixing' procedure presented above is based on heuristics. We show in the appendix~(A.0.2) that heuristics are somewhat necessary since  the concept of an `optimal' approximator is not well-defined in this case and related to NP-hard problems~\cite{hillar2013most}.
	
	\subsection{Comparison}\label{subsec:Comparison}
	When comparing different unbiased approximations of RTRL, the focus lies on comparing noise/variance of the respective approximators, since this determines the speed of convergence and the final performance. To make comparisons as fair as possible we will also consider the different runtime and memory requirements, see also Table \ref{table:cost_vert} for an overview. Here, we focus on theoretical  considerations. Experimental evaluations of the noise will be presented in the next section.
	
	 In \cite{mujika2018approximating} it was already shown that KF-RTRL has significantly less noise than UORO, so that we shall focus on KF-RTRL and OK only. 
	 
	We first compare $1$-OK to KF-RTRL, as the memory and runtime requirements for these two algorithms are asymptotically equal. The difference between the two algorithms is how they mix a sum of two Kronecker-products to obtain one Kronecker-product. From Theorem \ref{thm:opt} it is immediate that OK performs at least as well as KF-RTRL. In general, it depends on the two Kronecker-products, which need to be mixed, how much better OK performs than KF-RTRL. We show two extreme cases here - the `average' case arising during learning lies somewhere inbetween, and is assessed empirically in the next section. Suppose we need to approximate $u\otimes A + h\otimes D$ unbiasedly by a single Kronecker-product. For simplicity, let us assume that all vectors and matrices have norm 1. This makes the variance reduction technique of UORO and KF-RTRL, which is also implicitly included in the OK algorithm, unnecessary. \\
	\underline{Case 1:} $u=h$. Then, we can simply rewrite $u\otimes A + h\otimes D = u\otimes(A+D)$, which is a single Kronecker-product and doesn't need a noisy approximation. This shows that an optimal approximator like OK has variance 0. On the other hand, KF-RTRL performs the sign trick and approximates the sum either by $(u+h)\otimes(A+D)=2u\otimes(A+D)$ or by $(u-h)\otimes(A-D)=0$ and thus has variance $\Vert A+D\Vert^2$.\\ 
	\underline{Case 2:} $u\perp h$ and $A\perp D$. With the methods presented in the appendix, it can be shown that the sign-trick performed by KF-RTRL is optimal and that therefore there is no difference between KF-RTRL and OK in this case.
	
	We now inspect $r$-OK for $r>1$. In this case, OK takes $r$ times more runtime and memory than KF-RTRL. To make comparisons fair, we compare OK to running $r$ independent copies of KF-RTRL and taking their average to perform gradient descent\footnote{This reduces the noise of KF-RTRL by a factor of $r$.}, let us refer to this algorithm as $r$-KF-RTRL-AVG, or simply KF-AVG. Again, it can be deduced from Theorem \ref{thm:opt} that the noise of OK is at most that of KF-AVG. To see this, observe that KF-AVG stores $r$ Kronecker-products at each step and mixes them with $h_t\otimes D_t$ to obtain another $r$ Kronecker-factors for the next step.\\ 
	For $r>1$ there is an additional, important phenomenon improving OK over KF-AVG, which we illustrate now. Suppose $r=2$ and we want to approximate $u_1\otimes A_1 + u_2\otimes A_2 + h\otimes D$ unbiasedly by a sum of two Kronecker-products. Assume that $u_1,u_2,h$ and $A_1,A_2,D$ respectively are pairwise orthogonal\footnote{In fact, any sum can be rewritten in such a way. This is equivalent to SVD.} and consider the case where one of the summands is larger than the other two, say $\Vert u_1\otimes A_1\Vert = 10$ and $\Vert u_2\otimes A_2\Vert = \Vert h\otimes D\Vert = 1$. Then, the optimal approximator OK will keep $u_1\otimes A_1$ fixed and mix only the other two summands creating noise of order 1. More na\"ive approaches, including the sign trick of KF-AVG, mixes all factors and create noise of order 10/$r$, $r=2$. This phenomenon of keeping important parts of the gradient and only mixing less important parts to reduce the noise becomes important and appears more frequently as $r$ gets larger, see also Figure \ref{fig:noise_ptb_comparison} for experimental evidence.
	
	\begin{table}[t]
		\caption{Computational costs for different algorithms, measured per batch element and parameter update. These values reflect the cost in an actual implementation. The additional cost of storing the model (memory $n^2$) does not scale with the batch size and is therefore negligible when training with large mini-batches.\\ 
		Dashed horizontal lines group algorithms with comparable costs. $r$ is a parameter of the algorithms, $T$ is the truncation horizon of TBPTT. See Section \ref{sec:rtrlapp} for details.}
		\label{table:cost_vert}
		\vskip 0.15in
		\begin{center}
			\begin{small}
				\begin{sc}
					\begin{tabular}{lcc}
						\toprule
						&Memory & Runtime \\
						\midrule
						RTRL & $n^3$ & $n^4$\\  \hdashline
						$r$-OK    & $rn^2$ & $rn^3$\\
						$r$-KF-RTRL-AVG    & $rn^2$ & $rn^3$ \\ \hdashline
						UORO & $n^2$ & $n^2$ \\ \hdashline
						$r$-KTP & $rn$ & $rn^2$ \\
						TBPTT-$T$    & $Tn$ & $Tn^2$ \\
						\bottomrule
					\end{tabular}
				\end{sc}
			\end{small}
		\end{center}
		\vskip -0.1in
	\end{table}
	
	\section{Experiments}\label{sec:exp}
	Here, we empirically analyze the advantage of using the optimal approximation from OK as opposed to the sign trick from KF-RTRL. Moreover, we compare OK to TBPTT, showing that the noise in OK is so small that it does not hinder its learning performance. We posit that this is due to the noise of OK being smaller than that of Stochastic Gradient Descent~\cite{Robbins1951sgd}. This is independent of the batch size $b$, as both sources of noise are divided by the same factor $b$. Figure A.1 in the appendix, which is similar to Figure~\ref{fig:ptb_comparison} but with larger batches, portrays this point. 

	 Following~\cite{mujika2018approximating}, we assess the learning performance of OK in two tasks. The first, termed Copy task, is a synthetic binary string memorization task which evaluates the RNN's ability to store information and learn long-term dependencies. The second, character-level language modeling on the Penn TreeBank dataset (CHAR-PTB), is a complex real-world task commonly used to assess the capabilities of RNNs. We compare the performance of OK to KF-RTRL and TBPTT based on `data time', i.e. on how much data the algorithm is given. Moreover, we perform an experiment analyzing the variance of OK and KF-RTRL by comparing them to the exact gradients given by RTRL. Lastly, we measure the performance of our second algorithm KTP on the Copy task and show that it can learn moderate time dependencies. For all experiments, we use a single-layer Recurrent Highway Network~\cite{zilly2017recurrent}\footnote{For implementation simplicity, we replace $\tanh(x)$ by $2*\textrm{sigmoid}(x)-1$ as the non-linearity function. These functions have similar properties, so this should not have any significant effect on learning.}.	

	\subsection{Comparisons between OK, KF-RTRL and TBPTT}
		\subsubsection{Copy task}\label{subsec:copy_task}
			For the Copy task, a binary string of length $T$ is presented sequentially to an RNN. Once the full string is presented, the RNN should reconstruct the original string without any extra information (example for a sequence of length $5$: input \#01101****** and target output ******\#01101). The results are shown in Figure~\ref{fig:string_memorization_comparison}. OK outperforms KF-RTRL when making a comparison that equates the memory and runtime requirements between the two approaches (see Section~\ref{subsec:Comparison}). Furthermore, by exploiting the online updates, OK also outperforms TBPTT when giving both algorithms the same network and batch sizes. 
			\MG{perhaps the point in the last two sentences should be in Comparisons and not here}\FB{i would leave it here}\FB{I would make the previous sentences less extreme, maybe: When comparing KF-RTRL with TBPTT it is important to note that TBPTT could be run longer and with larger batch-size due to lower memory and runtime requirements than OK. The comparison is fair in the sense that it gives both algorithms access to the same amount of data. Remarkably, in this setting OK outperforms TBPTT indicating that it reduces the noise enough to take advantage of its online updates over TBPTT.}
			It is important to note that the runtime and memory advantage of TBPTT imply that it could be run on a larger network and for longer. The comparison done here is fair in the sense of giving both algorithms the same amount of data and assesses whether the noise of OK has been reduced to the point where it does not harm learning. 

			\FB{maybe start with: We now describe the details of our implementation.}We now describe the details of our implementation. As in~\cite{mujika2018approximating}, we use curriculum learning and start with $T=1$, increasing $T$ by one when the RNN error drops below $0.15$ bits/char. After each sequence, the hidden states are reset to zero. To improve performance, the length of the sequence is sampled uniformly at random from $T-5$ to $T$. This forces the network to learn a general algorithm, as opposed to one suited only for sequences of length $T$. We use a RHN with $128$ units and a batch size of $16$. We optimize the log-likelihood using the Adam optimizer~\cite{kingma2014adam} with default Tensorflow~\cite{abadi2016tensorflow} parameters, $\beta_1 = 0.9$ and $\beta_2 = 0.999$. For each model, we pick the best learning rate from $\{10^{-2.5}, 10^{-3}, 10^{-3.5}, 10^{-4}\}$. We repeat each experiment $5$ times. 

	\begin{figure}[t]
		\vskip 0.2in
		\begin{center}
			\centerline{\includegraphics[width=0.8\columnwidth]{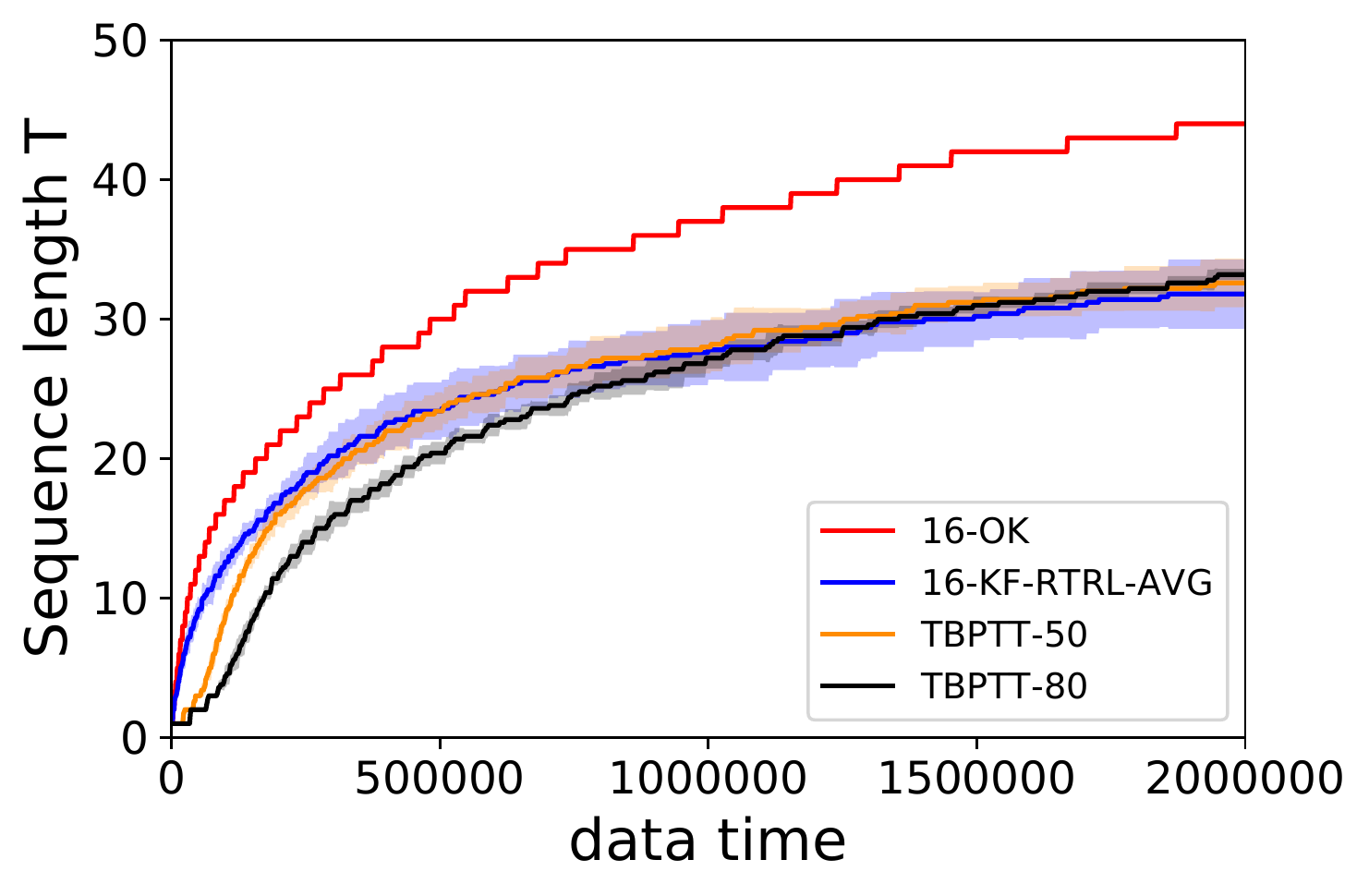}}
			\caption{Copy task. We plot the mean and standard deviation (shaded area) over $5$ trials. $16$-OK learns sequences on average up to $42$, $16$-KF-RTRL-AVG up to $32$, TBPTT-$50$ and $80$ up to $33$. We trained an RHN with $128$ units for all models.}
			\label{fig:string_memorization_comparison}
		\end{center}
		\vskip -0.2in
	\end{figure}

	\subsubsection{CHAR-PTB on the Penn Treebank dataset}
	
	For the CHAR-PTB task, the network receives a text character by character, and at each time step it must predict the next character. This is a standard, challenging test for RNNs which requires capturing long- and short-term dependencies. It is highly stochastic, as there are many potential continuations for most input sequences. Figure~\ref{fig:ptb_comparison} and Table~\ref{table:ptb_val_test} show the results. $8$-OK outperforms $8$-KF-RTRL-AVG, and matches the performance of TBPTT-25. In fact, $8$-OK even takes advantage of its online updates to achieve faster convergence when compared to TBPTT-25. The advantage observed in Figure~\ref{fig:ptb_comparison} is even larger when using longer truncation horizons as suggested by Figure~\ref{fig:string_memorization_comparison}. This fact showcases the strength of performing online updates as in RTRL as opposed to having an update lock as in TBPTT.

	For this experiment we use the Penn TreeBank~\cite{marcus1993building} dataset, a collection of Wall Street Journal articles commonly used for training character level models. We split the data following~\cite{mikolov2012subword}. In addition, we reset the hidden state to zero with a probability of $0.01$ at every step~\cite{melis2017state}. The experimental setup is the same as in Section~\ref{subsec:copy_task}, except the RHN has $256$ units and the batch size is $32$. The learning rates are chosen in the same range. 

	\begin{figure}[ht]
		\vskip 0.2in
		\begin{center}
			\centerline{\includegraphics[width=0.8\columnwidth]{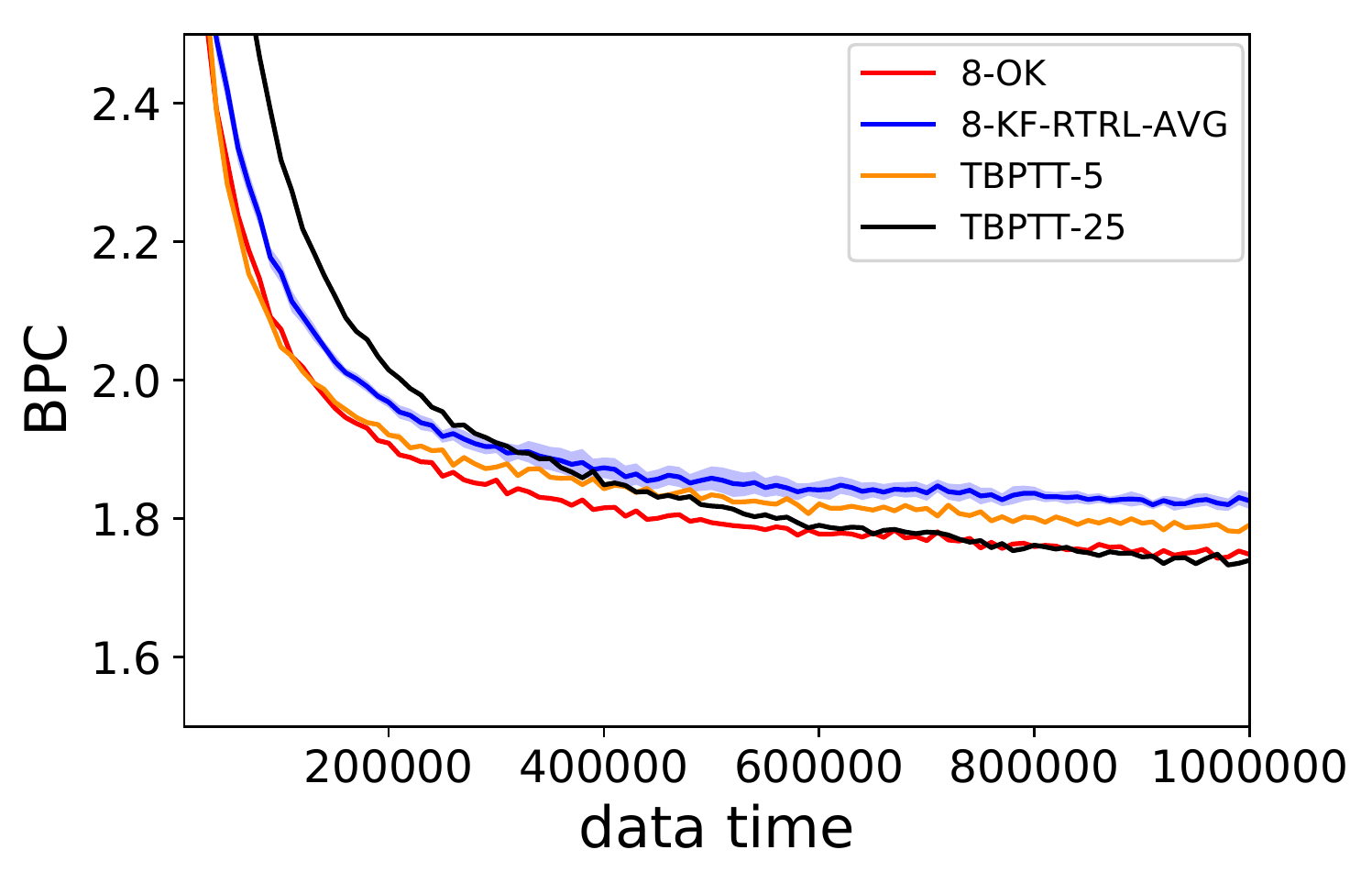}}
			\caption{Validation performance on Penn TreeBank in bits per character (BPC). $8$-OK matches the performance of TBPTT-$25$. We trained a RHN with $256$ units for all models. Table~\ref{table:ptb_val_test} summarizes the performances.
			}
			\label{fig:ptb_comparison}
		\end{center}
		\vskip -0.2in
	\end{figure}

	\begin{table}[ht]
		\caption{Results on Penn TreeBank. Merity et al.~\yrcite{merity2018analysis} is the current state of the art. Standard deviations are smaller than $0.01$.}
		\label{table:ptb_val_test}
		\vskip 0.15in
		\begin{center}
			\begin{small}
				\begin{sc}
					\begin{tabular}{lcccr}
						\toprule
						Name & Validation & Test & \#params \\
						\midrule
						$8$-KF-RTRL-AVG    & $1.82$ & $1.77$ & $133K$ \\
						$8$-OK    & $1.74$ & $1.69$ & $133K$ \\
						TBPTT-$5$    & $1.78$ & $1.73$ & $133K$ \\
						TBPTT-$25$    & $1.73$ & $1.69$ & $133K$ \\
						Merity et al.~\yrcite{merity2018analysis}    & $-$ & $1.18$ & $13.8M$ \\
						\bottomrule
					\end{tabular}
				\end{sc}
			\end{small}
		\end{center}
		\vskip -0.1in
	\end{table}

	\subsection{Empirical Exploration of Noise}\label{subsec:empirical_noise}
		Here, we empirically evaluate how the noise evolves over time. We report the cosine between the true gradient and the approximated one. The results for an untrained network are given in the appendix (Figure A.2). There, already $2$-OK achieves a cosine of almost exactly $1$. However, the most interesting behavior arises later in training. Figure~\ref{fig:noise_ptb_comparison} shows that, after a million steps of training on CHAR-PTB, the cosine is much smaller for $8$-KF-RTRL-AVG than for $8$-OK. 

	For this experiment, we train a RHN with $256$ units on CHAR-PTB for $1$ million steps. Then, we freeze the weights of the network and compute the angle $\phi$ between the gradient estimates provided by OK and KF-RTRL and the true RTRL gradient for $1000$ steps. We plot the mean and standard deviation of $20$ repetitions of each experiment. In the appendix, Figure A.3 provides similar experiments for the Copy task.

	\begin{figure}[ht]
		\vskip 0.2in
		\begin{center}
			\centerline{\includegraphics[width=0.8\columnwidth]{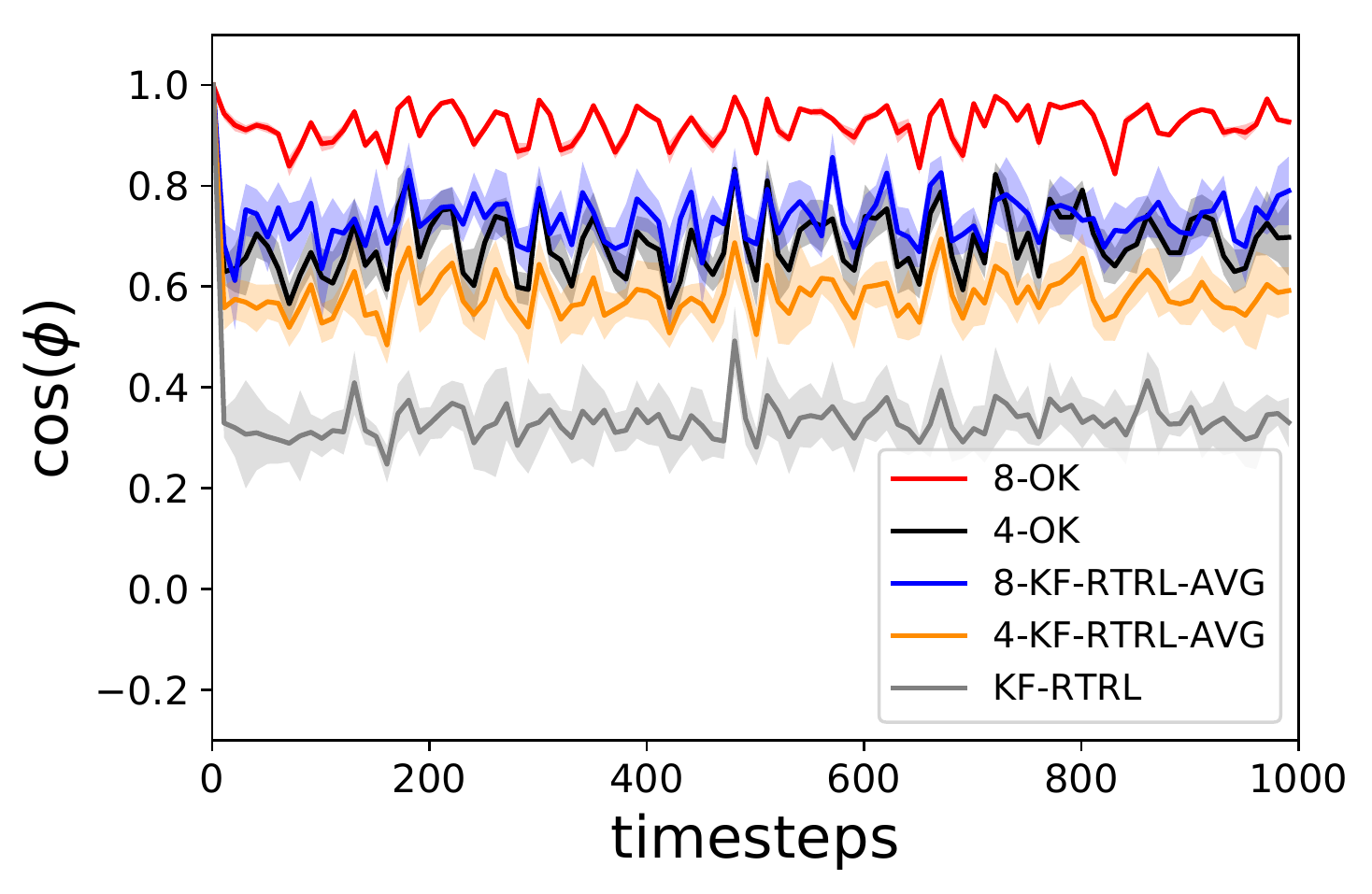}}
			\caption{Variance analysis in a RHN trained for $1$ million steps on CHAR-PTB. We plot the cosine of the angle between the approximated and the true value of $\frac{dL}{d\theta}$. A cosine of $1$ implies that the approximation and the true value are aligned, whereas a random vector has an expected cosine of $0$.
			}
			\label{fig:noise_ptb_comparison}
		\end{center}
		\vskip -0.2in
	\end{figure}

	\subsection{Kronecker Triple Product}
		We now analyze the performance of KTP. While KTP has the same memory and runtime requirements as TBPTT, this comes at the cost of extra noise. KTP possesses two sources of noise (see Section~\ref{subsec:KTP}). The first is added by a low-rank approximation $D'$ of the diagonal $D$, the second is added in the mixing procedure\FB{replace 'by the sign trick' by 'in the mixing procedure'?}. Here, we show experiments indicating\FB{or show experiments indicating} that the first noise source is not significant by artificially introducing it to KF-RTRL. This is done by unbiasedly approximating $F_t=h\otimes D$ by $h\otimes D'$, where $D'$ is as in Section~\ref{subsec:KTP}. The rest of the KF-RTRL algorithm remains as usual. We term this adapted version KF-RTRL-$r$-APPROX when $D$ is approximated by a rank $r$ matrix. Figure~\ref{fig:cp_diag_approx_A_full_rank} shows that KF-RTRL-$16$-APPROX performs almost as well as the original KF-RTRL, suggesting that the noise added in the mixing procedure is what hurts KTP the most.\FB{maybe add conclusion, sth like: This indicates that noise added in the mixing procedure is what hinders learning of KTP most.}

For the experiment, we use the same setup as in Section~\ref{subsec:copy_task} except that the batch sizes used were $256$. We plot the mean for $5$ repetitions of the experiment.

		\begin{figure}[ht]
			\vskip 0.2in
			\begin{center}
				\centerline{\includegraphics[width=0.8\columnwidth]{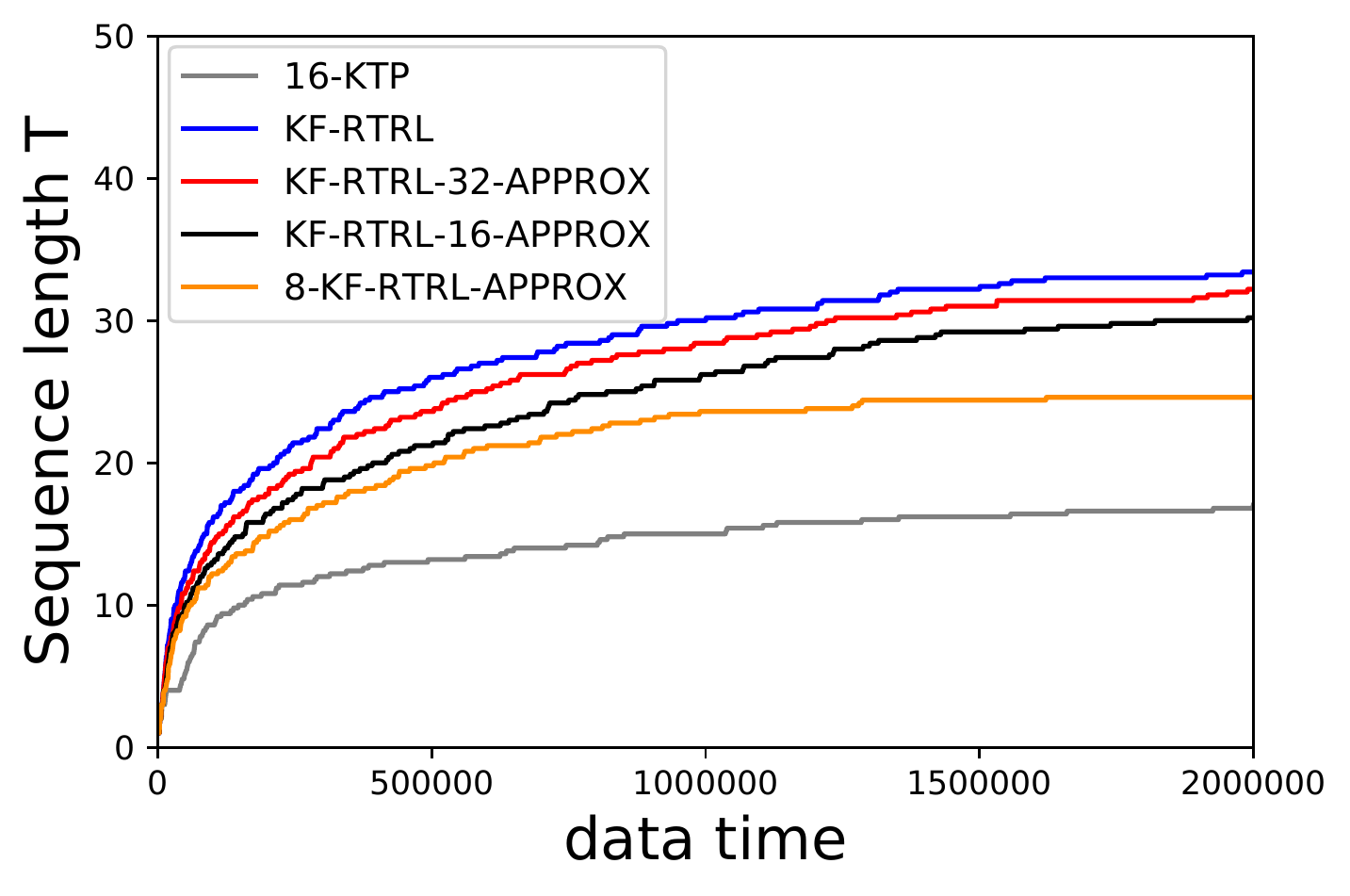}}
				\caption{KTP performance on the Copy task. From top to bottom as in the legend, the learned sequence lengths are: $17$, $33$, $32$, $30$, $25$. Standard deviations were around $3$ for all algorithms.}
				\label{fig:cp_diag_approx_A_full_rank}
			\end{center}
			\vskip -0.2in
		\end{figure}

	\section{Conclusions}
	We presented two new algorithms, OK and KTP, for training RNNs. Both are unbiased approximations of Real Time Recurrent Learning (RTRL), an online alternative to Truncated Backpropagation through Time (TBPTT) giving untruncated gradients. For OK, we do not only show that it has less variance than previous approximations, but show that our approximation is in fact optimal for the class of Kronecker-Sum approximations, which includes all previously published approaches. We empirically show that this improvement makes the noise of OK negligible, which distinguishes OK from previous approximations of RTRL. This is evaluated on the standard benchmark of Penn TreeBank (PTB) character-level modeling, where OK matches the performance of TBPTT. In the case of a synthetic string-memorization task, OK can exploit frequent online parameter updates to outperform TBPTT. Our second algorithm, KTP, is more exploratory and paves the way towards more memory and runtime efficient approximations of RTRL. Its computational cost matches that of TBPTT and we show that it can learn moderate time dependencies. Reducing the noise of KTP provides an interesting problem for further research.
	
	Our theoretical optimality result shows that, if the noise of RTRL is to be reduced further, new classes of approximations need to be explored. Moreover, the result can be extended to test the theoretical limitations of new approximations of RTRL which obey a similar structure as the Kronecker-Sum.
	 We also presented a more general algorithm to construct unbiased, low-rank approximators of matrices with minimum achievable variance. This might be useful in other areas of machine learning which rely on unbiased gradients.
	 
	Conceptually, we explore an alternative to TBPTT. We believe that this is a necessary step towards learning long-term dependencies and for making full use of the architectural developments that have recently advanced RNNs.
	While RTRL itself is infeasible due to large computational costs, our results indicate that it is possible to reduce its memory and runtime requirements by a factor of $n$ while keeping the noise small enough to not harm learning. Further improvements in this direction would already make RTRL a viable alternative to TBPTT and impact modeling data with inherent long-term dependencies.

	\section*{Acknowledgements}
	We would like to thank Florian Meier and Pascal Su for helpful discussions and valuable comments on the presentation of this work. We also thank the anonymous reviewers for their helpful feedback.
	
	Frederik Benzing was supported by grant no. 200021\_169242 of the Swiss National Science Foundation.
	Marcelo Matheus Gauy was supported by CNPq grant no. 248952/2013-7. 
	Asier Mujika was  supported by grant no. CRSII5\_173721 of the Swiss National Science Foundation.

\bibliography{OK}

\begin{thebibliography}{34}
\providecommand{\natexlab}[1]{#1}
\providecommand{\url}[1]{\texttt{#1}}
\expandafter\ifx\csname urlstyle\endcsname\relax
  \providecommand{\doi}[1]{doi: #1}\else
  \providecommand{\doi}{doi: \begingroup \urlstyle{rm}\Url}\fi

\bibitem[Abadi et~al.(2016)Abadi, Barham, Chen, Chen, Davis, Dean, Devin,
  Ghemawat, Irving, Isard, et~al.]{abadi2016tensorflow}
Abadi, M., Barham, P., Chen, J., Chen, Z., Davis, A., Dean, J., Devin, M.,
  Ghemawat, S., Irving, G., Isard, M., et~al.
\newblock Tensorflow: A system for large-scale machine learning.
\newblock In \emph{OSDI}, volume~16, pp.\  265--283, 2016.

\bibitem[Cline \& Dhillon(2006)Cline and Dhillon]{cline2006computation}
Cline, A.~K. and Dhillon, I.~S.
\newblock \emph{Computation of the Singular Value Decomposition}.
\newblock CRC Press, jan 2006.

\bibitem[Cooijmans \& Martens(2019)Cooijmans and
  Martens]{cooijmans2019variance}
Cooijmans, T. and Martens, J.
\newblock On the variance of unbiased online recurrent optimization.
\newblock \emph{arXiv preprint arXiv:1902.02405}, 2019.

\bibitem[Eckart \& Young(1936)Eckart and Young]{Eckart1936}
Eckart, C. and Young, G.
\newblock The approximation of one matrix by another of lower rank.
\newblock \emph{Psychometrika}, 1\penalty0 (3):\penalty0 211--218, Sep 1936.
\newblock ISSN 1860-0980.
\newblock \doi{10.1007/BF02288367}.
\newblock URL \url{https://doi.org/10.1007/BF02288367}.

\bibitem[Golub \& Van~Loan(1996)Golub and Van~Loan]{golub1996matrix}
Golub, G.~H. and Van~Loan, C.~F.
\newblock \emph{Matrix Computations (3rd Ed.)}.
\newblock Johns Hopkins University Press, Baltimore, MD, USA, 1996.
\newblock ISBN 0-8018-5414-8.

\bibitem[Hillar \& Lim(2013)Hillar and Lim]{hillar2013most}
Hillar, C.~J. and Lim, L.-H.
\newblock Most tensor problems are np-hard.
\newblock \emph{J. ACM}, 60\penalty0 (6):\penalty0 45:1--45:39, November 2013.
\newblock ISSN 0004-5411.
\newblock \doi{10.1145/2512329}.
\newblock URL \url{http://doi.acm.org/10.1145/2512329}.

\bibitem[Hochreiter \& Schmidhuber(1997)Hochreiter and
  Schmidhuber]{hochreiter1997long}
Hochreiter, S. and Schmidhuber, J.
\newblock Long short-term memory.
\newblock \emph{Neural computation}, 9\penalty0 (8):\penalty0 1735--1780, 1997.

\bibitem[Israel(2011)]{post}
Israel, R.
\newblock Constructing idempotent matrices, 2011.
\newblock URL
  \url{https://math.stackexchange.com/questions/42283/constructing-idempotent-matrices}.

\bibitem[Jaderberg et~al.(2017)Jaderberg, Czarnecki, Osindero, Vinyals, Graves,
  Silver, and Kavukcuoglu]{jaderberg2016decoupled}
Jaderberg, M., Czarnecki, W.~M., Osindero, S., Vinyals, O., Graves, A., Silver,
  D., and Kavukcuoglu, K.
\newblock Decoupled neural interfaces using synthetic gradients.
\newblock In \emph{Proceedings of the 34th International Conference on Machine
  Learning-Volume 70}, pp.\  1627--1635. JMLR. org, 2017.

\bibitem[Jaeger(2001)]{jaeger2001echo}
Jaeger, H.
\newblock The “echo state” approach to analysing and training recurrent
  neural networks-with an erratum note.
\newblock \emph{Bonn, Germany: German National Research Center for Information
  Technology GMD Technical Report}, 148\penalty0 (34):\penalty0 13, 2001.

\bibitem[Ke et~al.(2018)Ke, GOYAL, Bilaniuk, Binas, Mozer, Pal, and
  Bengio]{ke2018sparse}
Ke, N.~R., GOYAL, A. G. A.~P., Bilaniuk, O., Binas, J., Mozer, M.~C., Pal, C.,
  and Bengio, Y.
\newblock Sparse attentive backtracking: Temporal credit assignment through
  reminding.
\newblock In \emph{Advances in Neural Information Processing Systems}, pp.\
  7640--7651, 2018.

\bibitem[Kingma \& Ba(2015)Kingma and Ba]{kingma2014adam}
Kingma, D.~P. and Ba, J.~L.
\newblock Adam: A method for stochastic optimization.
\newblock In \emph{International Conference on Learning Representations}, 2015.

\bibitem[Knuth(1997)]{Knuth97}
Knuth, D.~E.
\newblock \emph{The Art of Computer Programming, Volume 2 (3rd Ed.):
  Seminumerical Algorithms}.
\newblock Addison-Wesley Longman Publishing Co., Inc., Boston, MA, USA, 1997.
\newblock ISBN 0-201-89684-2.

\bibitem[Liao et~al.(2018)Liao, Xiong, Fetaya, Zhang, Yoon, Pitkow, Urtasun,
  and Zemel]{liao2018reviving}
Liao, R., Xiong, Y., Fetaya, E., Zhang, L., Yoon, K., Pitkow, X., Urtasun, R.,
  and Zemel, R.
\newblock Reviving and improving recurrent back-propagation.
\newblock In \emph{International Conference on Machine Learning}, pp.\
  3088--3097, 2018.

\bibitem[Luko{\v{s}}evi{\v{c}}ius \& Jaeger(2009)Luko{\v{s}}evi{\v{c}}ius and
  Jaeger]{lukovsevivcius2009reservoir}
Luko{\v{s}}evi{\v{c}}ius, M. and Jaeger, H.
\newblock Reservoir computing approaches to recurrent neural network training.
\newblock \emph{Computer Science Review}, 3\penalty0 (3):\penalty0 127--149,
  2009.

\bibitem[Maass et~al.(2002)Maass, Natschl{\"a}ger, and Markram]{maass2002real}
Maass, W., Natschl{\"a}ger, T., and Markram, H.
\newblock Real-time computing without stable states: A new framework for neural
  computation based on perturbations.
\newblock \emph{Neural computation}, 14\penalty0 (11):\penalty0 2531--2560,
  2002.

\bibitem[Marcus et~al.(1993)Marcus, Marcinkiewicz, and
  Santorini]{marcus1993building}
Marcus, M.~P., Marcinkiewicz, M.~A., and Santorini, B.
\newblock Building a large annotated corpus of english: The {P}enn {T}reebank.
\newblock \emph{Computational linguistics}, 19\penalty0 (2):\penalty0 313--330,
  1993.

\bibitem[Mehri et~al.(2017)Mehri, Kumar, Gulrajani, Kumar, Jain, Sotelo,
  Courville, and Bengio]{mehri2016samplernn}
Mehri, S., Kumar, K., Gulrajani, I., Kumar, R., Jain, S., Sotelo, J.,
  Courville, A.~C., and Bengio, Y.
\newblock Samplernn: An unconditional end-to-end neural audio generation model.
\newblock In \emph{5th International Conference on Learning Representations,
  {ICLR} 2017, Toulon, France, April 24-26, 2017, Conference Track
  Proceedings}, 2017.
\newblock URL \url{https://openreview.net/forum?id=SkxKPDv5xl}.

\bibitem[Melis et~al.(2018)Melis, Dyer, and Blunsom]{melis2017state}
Melis, G., Dyer, C., and Blunsom, P.
\newblock On the state of the art of evaluation in neural language models.
\newblock In \emph{International Conference on Learning Representations}, 2018.

\bibitem[Merity et~al.(2018)Merity, Keskar, and Socher]{merity2018analysis}
Merity, S., Keskar, N.~S., and Socher, R.
\newblock An analysis of neural language modeling at multiple scales.
\newblock \emph{arXiv preprint arXiv:1803.08240}, 2018.

\bibitem[Mikolov et~al.(2012)Mikolov, Sutskever, Deoras, Le, Kombrink, and
  Cernocky]{mikolov2012subword}
Mikolov, T., Sutskever, I., Deoras, A., Le, H.-S., Kombrink, S., and Cernocky,
  J.
\newblock Subword language modeling with neural networks.
\newblock \emph{preprint (http://www. fit. vutbr. cz/imikolov/rnnlm/char.
  pdf)}, 2012.

\bibitem[Mnih et~al.(2016)Mnih, Badia, Mirza, Graves, Lillicrap, Harley,
  Silver, and Kavukcuoglu]{mnih2016asynchronous}
Mnih, V., Badia, A.~P., Mirza, M., Graves, A., Lillicrap, T., Harley, T.,
  Silver, D., and Kavukcuoglu, K.
\newblock Asynchronous methods for deep reinforcement learning.
\newblock In \emph{International conference on machine learning}, pp.\
  1928--1937, 2016.

\bibitem[Mujika et~al.(2018)Mujika, Meier, and Steger]{mujika2018approximating}
Mujika, A., Meier, F., and Steger, A.
\newblock Approximating real-time recurrent learning with random kronecker
  factors.
\newblock In \emph{Advances in Neural Information Processing Systems}, pp.\
  6594--6603, 2018.

\bibitem[Ororbia et~al.(2018)Ororbia, Mali, Giles, and
  Kifer]{ororbia2018continual}
Ororbia, A., Mali, A., Giles, C.~L., and Kifer, D.
\newblock Online learning of recurrent neural architectures by locally aligning
  distributed representations.
\newblock \emph{CoRR}, abs/1810.07411, 2018.
\newblock URL \url{http://arxiv.org/abs/1810.07411}.

\bibitem[Pascanu et~al.(2013)Pascanu, Mikolov, and
  Bengio]{pascanu2013difficulty}
Pascanu, R., Mikolov, T., and Bengio, Y.
\newblock On the difficulty of training recurrent neural networks.
\newblock In \emph{International conference on machine learning}, pp.\
  1310--1318, 2013.

\bibitem[Robbins \& Monro(1951)Robbins and Monro]{Robbins1951sgd}
Robbins, H. and Monro, S.
\newblock A stochastic approximation method.
\newblock \emph{The Annals of Mathematical Statistics}, 22\penalty0
  (3):\penalty0 400--407, 1951.

\bibitem[Rumelhart et~al.(1986)Rumelhart, Hinton, and
  Williams]{Rumelhart1986learning}
Rumelhart, D.~E., Hinton, G.~E., and Williams, R.~J.
\newblock Parallel distributed processing: Explorations in the microstructure
  of cognition, vol. 1.
\newblock chapter Learning Internal Representations by Error Propagation, pp.\
  318--362. MIT Press, Cambridge, MA, USA, 1986.
\newblock ISBN 0-262-68053-X.
\newblock URL \url{http://dl.acm.org/citation.cfm?id=104279.104293}.

\bibitem[Scellier \& Bengio(2017)Scellier and Bengio]{scellier2017equilibrium}
Scellier, B. and Bengio, Y.
\newblock Equilibrium propagation: Bridging the gap between energy-based models
  and backpropagation.
\newblock \emph{Frontiers in Computational Neuroscience}, 11:\penalty0 24,
  2017.
\newblock ISSN 1662-5188.
\newblock \doi{10.3389/fncom.2017.00024}.
\newblock URL
  \url{https://www.frontiersin.org/article/10.3389/fncom.2017.00024}.

\bibitem[Tallec \& Ollivier(2017{\natexlab{a}})Tallec and
  Ollivier]{tallec2017unbiased}
Tallec, C. and Ollivier, Y.
\newblock Unbiased online recurrent optimization.
\newblock \emph{arXiv preprint arXiv:1702.05043}, 2017{\natexlab{a}}.

\bibitem[Tallec \& Ollivier(2017{\natexlab{b}})Tallec and
  Ollivier]{tallec2017unbiasing}
Tallec, C. and Ollivier, Y.
\newblock Unbiasing truncated backpropagation through time.
\newblock \emph{arXiv preprint arXiv:1705.08209}, 2017{\natexlab{b}}.

\bibitem[Vaswani et~al.(2017)Vaswani, Shazeer, Parmar, Uszkoreit, Jones, Gomez,
  Kaiser, and Polosukhin]{vaswani2017attention}
Vaswani, A., Shazeer, N., Parmar, N., Uszkoreit, J., Jones, L., Gomez, A.~N.,
  Kaiser, {\L}., and Polosukhin, I.
\newblock Attention is all you need.
\newblock In \emph{Advances in neural information processing systems}, pp.\
  5998--6008, 2017.

\bibitem[Williams \& Peng(1990)Williams and Peng]{Williams90anefficient}
Williams, R.~J. and Peng, J.
\newblock An efficient gradient-based algorithm for on-line training of
  recurrent network trajectories.
\newblock \emph{Neural Computation}, 2:\penalty0 490--501, 1990.

\bibitem[Williams \& Zipser(1989)Williams and Zipser]{williams1989learning}
Williams, R.~J. and Zipser, D.
\newblock A learning algorithm for continually running fully recurrent neural
  networks.
\newblock \emph{Neural computation}, 1\penalty0 (2):\penalty0 270--280, 1989.

\bibitem[Zilly et~al.(2017)Zilly, Srivastava, Koutn{\'\i}k, and
  Schmidhuber]{zilly2017recurrent}
Zilly, J.~G., Srivastava, R.~K., Koutn{\'\i}k, J., and Schmidhuber, J.
\newblock Recurrent highway networks.
\newblock In \emph{Proceedings of the 34th International Conference on Machine
  Learning-Volume 70}, pp.\  4189--4198. JMLR. org, 2017.

\end{thebibliography}
\bibliographystyle{icml2019}


\section*{APPENDIX}
We begin this appendix by stating a precise condition on the subclass of RNNs to which the algorithms OK and KF-RTRL can be applied (Section \ref{sec:condition}) and show that in a setting as given by the algorithm KTP, the concept of a minimum-variance approximator is not well defined (Section \ref{sec:ill}). We also give details on how the memory requirement of KTP can be kept at $O(n)$ (Section \ref{sec:KTPmem}).

In the remaining two sections, we prove Theorems 1 and 2 from the paper (Section \ref{sec:theory}) and provide additional experiments (Section \ref{sec:exp}). 

\subsection{Subclass of RNNs for OK and KF-RTRL}\label{sec:condition}
Recall, that similarly to KF-RTRL~\cite{mujika2018approximating}, we restrict our attention to RNNs for which the term $F_t$ (see description of RTRL in the paper) can be factored as $F_t = h_t\otimes D_t$. We restate the condition given in~\cite{mujika2018approximating}:
\begin{lemma}\label{lem:applicable}
	Assume that the learnable parameters $\theta$ are a set of matrices $W^1,...,W^r$, let $\hat{h}_{t-1}$ be the hidden state $h_{t-1}$ concatenated with the input $x_t$ and let $z^k = \hat{h}_{t-1}W^k$ for $k = 1, . . . , r$. Assume that $h_t$ is obtained by point-wise operations over the $z^k$'s, that is, $(h_t)_j = f(z^1_j, . . . , z^r_j)$. Let $D^k \in \R^{n\times n}$ be the diagonal matrix defined by $D^j_{kk} = \frac{\partial{(h_t)_j}}{\partial{z^k_j}}$, and let $D =(D^1\vert\ldots\vert D^r)$. Then, it holds that $\frac{\partial{h_t}}{\partial{\theta}} = \hat{h}_{t-1}\otimes D$.
\end{lemma}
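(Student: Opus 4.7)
The statement is essentially a bookkeeping result: unroll the chain rule, exploit the point\-wise structure of $h_t$ to kill off all off-diagonal terms in $\partial (h_t)_i/\partial z^k_j$, and then recognise the resulting tensor as a Kronecker product. I will work entry by entry and then re-assemble.

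\emph{Step 1: entrywise chain rule.} The parameters decompose as $\theta = (W^1,\ldots,W^r)$ with $W^k \in \R^{(n+d)\times n}$ and the pre-activations are $z^k_j = \sum_\ell (\hat{h}_{t-1})_\ell W^k_{\ell j}$. By the chain rule, for any fixed $i,k,\ell,j$,
\begin{equation*}
\frac{\partial (h_t)_i}{\partial W^k_{\ell j}}
= \sum_{k'=1}^{r}\sum_{m=1}^{n}
\frac{\partial (h_t)_i}{\partial z^{k'}_m}\cdot
\frac{\partial z^{k'}_m}{\partial W^k_{\ell j}}.
\end{equation*}
The inner partial $\partial z^{k'}_m/\partial W^k_{\ell j}$ equals $(\hat{h}_{t-1})_\ell$ if $k'=k$ and $m=j$, and vanishes otherwise, collapsing the double sum to the single term $\frac{\partial(h_t)_i}{\partial z^k_j}\,(\hat{h}_{t-1})_\ell$.

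\emph{Step 2: use the pointwise assumption.} Because $(h_t)_i = f(z^1_i,\ldots,z^r_i)$ depends only on the $i$-th coordinate of each $z^{k}$, the partial $\partial(h_t)_i/\partial z^k_j$ is zero whenever $i\neq j$, and equals the diagonal entry $D^k_{ii}$ when $i=j$. Hence
\begin{equation*}
\frac{\partial (h_t)_i}{\partial W^k_{\ell j}}
= (\hat{h}_{t-1})_\ell \cdot [\,i=j\,]\cdot D^k_{ii}
= (\hat{h}_{t-1})_\ell \cdot D^k_{ij}.
\end{equation*}

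\emph{Step 3: identify the Kronecker product.} Flatten $\theta$ column-wise in the natural order induced by $(k,\ell,j)$, i.e.\ arrange the parameters so that the column of $\partial h_t/\partial \theta$ indexed by $W^k_{\ell j}$ coincides with column $\ell\cdot(rn) + (k-1)n + j$. By definition of the Kronecker product of the row vector $\hat{h}_{t-1}\in\R^{1\times(n+d)}$ and the matrix $D=(D^1|\cdots|D^r)\in\R^{n\times rn}$, the entry of $\hat{h}_{t-1}\otimes D$ at row $i$ and column $\ell\cdot(rn)+(k-1)n+j$ is exactly $(\hat{h}_{t-1})_\ell\cdot D^k_{ij}$. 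This matches the expression obtained in Step~2 for every $(i,k,\ell,j)$, so $\partial h_t/\partial\theta = \hat{h}_{t-1}\otimes D$.

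The only subtlety is the bookkeeping in Step~3: one must pin down a single convention for flattening $\theta$ into a vector and verify that the column ordering agrees with the one produced by the Kronecker product. Steps~1 and~2 are mechanical applications of the chain rule and of the point\-wise hypothesis, so I expect no real obstacle beyond writing the indices carefully.
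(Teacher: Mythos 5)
Your proof is correct. Note that the paper does not actually prove this lemma itself --- it restates it from Mujika et al.\ (2018) and refers the reader there --- and the argument given in that reference is essentially the same direct verification you carry out: an entrywise chain-rule computation showing $\partial (h_t)_i/\partial W^k_{\ell j} = (\hat{h}_{t-1})_\ell\, D^k_{ij}$, followed by matching the column ordering against the Kronecker-product convention (your only caveat, the flattening bookkeeping in Step~3, is indeed the only point requiring care, and it checks out).
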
 
We refer the reader to~\cite{mujika2018approximating} for the simple proof. There, it is  also shown that this class of RNNs includes standard RNNs and the popular LSTM and RHN architectures.

\subsection{No Optimal Approximation for 3-Tensors}\label{sec:ill}
Here, we show that the concept of a minimum-variance approximator is ill-defined for a situation as encountered by KTP. We also explain how similar problems are related to NP-hardness. 

For the next lemma, we slightly adapt an example from~\cite{hillar2013most} based on an exercise in~\cite{Knuth97}. We first recall some notions related to 3-tensors. For $a,b,c\in\R^{n}\setminus\{0\}$, we call $a\otimes b\otimes c$ a rank-1 tensor. In general, the rank of a tensor $X$ is the minimum number $k$, so that $X$ can be written as the sum of $k$ rank-1 tensors. The following is related to the fact that the set of rank-2 tensors is not closed.
\begin{lemma}\label{lem:ill}
	For $i=1,2,3$, let $x_i,y_i\in \R^n$ so that the pairs $x_i,y_i$ are linearly independent and define ${X= x_1\otimes x_2 \otimes y_3 + x_1 \otimes y_2 \otimes x_3 + y_1\otimes x_2 \otimes x_2}$.\\
	Then, there are rank-2 approximators of $X$ with arbitrarily small variance, but no rank-2 approximator of variance 0. Thus, the concept of a `minimum-variance' rank-2 approximator of $X$ is ill-defined.
\end{lemma}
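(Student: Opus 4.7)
The plan is to establish two complementary facts. First, I will exhibit a one-parameter family of rank-$2$ tensors $X_\epsilon$ with $\|X_\epsilon - X\|^2 \to 0$ as $\epsilon \to 0$; interpreted as a deterministic approximator, each $X_\epsilon$ has variance $\|X_\epsilon - X\|^2$, so the infimum of variances over rank-$2$ approximators is $0$. Second, I will show that $X$ itself has tensor rank exactly $3$, which forbids any random rank-$\le 2$ tensor from equalling $X$ almost surely and hence from having variance $\E\|X'-X\|^2 = 0$. Together these prove that the infimum is not attained, so ``minimum-variance rank-$2$ approximator'' is ill-defined.

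For the first part I use the classical boundary-rank construction. Define
\[
X_\epsilon \;:=\; \tfrac{1}{\epsilon}\Bigl[(x_1+\epsilon y_1)\otimes(x_2+\epsilon y_2)\otimes(x_3+\epsilon y_3) \;-\; x_1\otimes x_2\otimes x_3\Bigr],
\]
which is manifestly a difference of two rank-$1$ tensors, hence of tensor rank at most $2$. Trilinear expansion gives $X_\epsilon = X + \epsilon R_1 + \epsilon^2 R_2$ for fixed tensors $R_1,R_2$, so $\|X_\epsilon - X\|^2 = O(\epsilon^2) \to 0$.

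For the second part I show $\mathrm{rank}(X) = 3$. Tensor rank is invariant under applying invertible linear maps along each mode, so after changing basis in each factor (using that $(x_i,y_i)$ are linearly independent) I may assume $x_i = e_1$ and $y_i = e_2$, reducing $X$ to
\[
X \;=\; e_1\otimes e_1\otimes e_2 \;+\; e_1\otimes e_2\otimes e_1 \;+\; e_2\otimes e_1\otimes e_1.
\]
Its third-mode slices are $S_1 = \bigl(\begin{smallmatrix}0&1\\1&0\end{smallmatrix}\bigr)$ and $S_2 = \bigl(\begin{smallmatrix}1&0\\0&0\end{smallmatrix}\bigr)$. Assuming for contradiction that $X = a\otimes b\otimes c + d\otimes e\otimes f$, every third-mode slice $S_k = c_k\,ab^T + f_k\,de^T$ lies in $\mathrm{span}(ab^T, de^T)$. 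Degenerate cases (one summand zero, or $ab^T \parallel de^T$) make the slice span at most one-dimensional, contradicting the linear independence of $S_1,S_2$. Otherwise $\mathrm{span}(S_1,S_2) = \mathrm{span}(ab^T,de^T)$ and must contain two linearly independent rank-$1$ matrices. But the rank-$\le 1$ elements of $\mathrm{span}(S_1,S_2)$ are exactly the matrices $\alpha S_1 + \beta S_2 = \bigl(\begin{smallmatrix}\beta & \alpha\\ \alpha & 0\end{smallmatrix}\bigr)$ of determinant $-\alpha^2 = 0$, i.e.\ the one-dimensional family $\{\beta\, e_1 e_1^T\}$, which cannot contain two linearly independent elements. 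Contradiction.

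The main obstacle is the rank lower bound in step two: tensor rank is notoriously subtler than matrix rank, and indeed all three mode-flattenings of $X$ have matrix rank $2$, so no naive flattening argument works. The slicing-plus-change-of-basis reduction above is the standard Knuth/Strassen-style workaround and makes the final contradiction a two-line determinant computation. Once $\mathrm{rank}(X) = 3$ is in hand, the conclusion is immediate: every realization of a random rank-$\le 2$ tensor $X'$ differs from $X$, so $\Var(X') = \E\|X' - X\|^2 > 0$, while the family $X_\epsilon$ witnesses that the infimum is nonetheless $0$.
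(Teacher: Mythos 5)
Your second step---the proof that $\mathrm{rank}(X)=3$ via restriction to a $2\times2\times2$ tensor and the slice/determinant argument---is correct and in fact more complete than the paper, which leaves exactly this point ``to the reader.'' Your observation that no flattening can certify rank $3$ is also right, and the conclusion that no rank-$2$ approximator can have variance $0$ follows as you say.

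The gap is in the first step. In the context of this lemma (the mixing step of KTP, and the whole framework of the paper), an ``approximator'' is an \emph{unbiased} random variable: one needs $\E[X']=X$, and the variance is $\E\bigl[\lVert X'-\E[X']\rVert^2\bigr]$. Your family $X_\epsilon$ is deterministic with $X_\epsilon\neq X$, so it is a biased approximator; it witnesses that the infimum of $\lVert X'-X\rVert$ over fixed rank-$2$ tensors is not attained (the border-rank phenomenon), but it does not produce unbiased rank-$2$ approximators of arbitrarily small variance, which is what the lemma needs. Note also that the obvious fixes fail: symmetrizing $\tfrac12(X_\epsilon+X_{-\epsilon})$ is still biased by the $\epsilon^2$ term, and writing $X+s(X_\epsilon-X)$ for a random sign $s$ destroys the rank bound. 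The paper instead threads a uniformly random sign $s\in\{\pm1\}$ \emph{through} the border-rank decomposition,
\begin{equation*}
X_n = x_1\otimes x_2\otimes\bigl(y_3-s\cdot nx_3\bigr)+\Bigl(s\cdot x_1+\tfrac1n y_1\Bigr)\otimes\Bigl(x_2+s\cdot\tfrac1n y_2\Bigr)\otimes nx_3,
\end{equation*}
which is a sum of two rank-$1$ terms and, using $s^2=1$, expands to $X_n=X+s\cdot\tfrac1n\,(y_1\otimes y_2\otimes x_3)$. The residual is exactly mean-zero, so $\E[X_n]=X$ and $\Var[X_n]=\tfrac1{n^2}\lVert y_1\otimes y_2\otimes x_3\rVert^2\to0$. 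Your construction needs to be replaced (or augmented) by something of this form to prove the stated claim; once that is done, the rest of your argument stands.
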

\begin{proof}
	The statement that there is no rank-2 approximator with variance 0 is equivalent to $X$ having rank larger than $2$, the details of which we leave to the reader.
	
	Now, let $s\in\{\pm1\}$ be a uniformly random sign and for each positive integer $n$ define a random variable 
	\begin{eqnarray*}
	X_n =&& x_1\otimes x_2 \otimes \left(y_3 - s\cdot nx_3\right) \\
	&+& \left(s\cdot x_1 + \frac{1}{n}y_1\right)\otimes\left(x_2 + s\cdot \frac{1}{n}y_2\right)\otimes nx_3
	\end{eqnarray*}
	A simple calculation shows
	$
	X_n = X +s\cdot \frac{1}{n}\left(y_1\otimes y_2 \otimes x_3\right).
	$
	From this we conclude $\E[X_n] = X$ and $\Var[X_n]\rightarrow 0$ as $n\rightarrow\infty$, finishing the proof of the lemma.
\end{proof}
In addition to the concept of a minimum-variance approximator not being well defined, we note that finding `good' approximators (which might still be possible given the above lemma) seems to be closely related to finding the rank of a 3-tensor, which is, like many other problems for 3-tensors~\cite{hillar2013most}, NP-hard.

\subsection{Memory of KTP}\label{sec:KTPmem}
Here, we describe how the memory requirement of KTP can be kept at $O(n)$ despite the need of calculating $H_t b$ (see description of KTP in the paper, $H_t\in\R^{n\times n},b\in\R^{n\times 1})$. One way to do this was already used in~\cite{tallec2017unbiased}. Recall $h_t =f(x_t,h_{t-1},\theta)$ and $H_t=\frac{\partial h_t}{\partial h_{t-1}}$. Therefore, $H_t b$ is a directional derivative of $h_t$ in the direction of $b$, which implies
\begin{eqnarray}
H_t b = \lim_{\epsilon\rightarrow 0} \frac{f(x_t, h_{t-1}+\epsilon b,\theta) - f(x_t, h_{t-1},\theta)}{\epsilon \lVert b\rVert}.
\end{eqnarray}
To evaluate $H_t b$ it therefore suffices to choose a small $\epsilon$ and evaluate the expression above. The expression above can be calculated together with the forward pass of the RNN, so that no additional batch-memory is needed.

\section{Proofs}\label{sec:theory}
In this section, we prove Theorems 1 and 2 from the main paper. Their statements and the related algorithms are restated below for convenience. 

The outline of this section is as follows. 
We start  in Section \ref{sec:prelim} by introducing some notation and reviewing the concept of a Singular Value Decomposition along with some of its properties, which will be useful later.
In Section \ref{sec:opt} we prove Theorem \ref{thm:opt} assuming correctness of Algortihm \ref{alg:Opt} and Theorem \ref{thm:mat}. These are then jointly proved in
 Section \ref{sec:mat}.

\begin{thm}
	\label{thm:opt}
	Let $G$ be an $(r+1)$-Kronecker-Sum and let $G'$ be the random $r$-Kronecker-Sum constructed by OK. Then $G'$ unbiasedly approximates $G$. Moreover, for any random $r$-Kronecker-Sum $Y$ of the same format as $G'$ which satisfies $\E[Y] = G$,  it holds that $\Var[Y]\ge \Var[G']$.
\end{thm}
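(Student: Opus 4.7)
My plan is to prove the theorem in three stages: reduce the Kronecker-Sum approximation to a low-rank matrix approximation in $\R^{(r+1)\times(r+1)}$, use SVD to further reduce to the diagonal case, and establish optimality there via Lagrangian duality.

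\textbf{First reduction.} Expanding each $u_j = \sum_i L_{i,j} v_i$ and each $A_j = \sum_i R_{i,j} B_i$ in the orthonormal bases produced by Algorithm \ref{alg:OK}, the system $\{v_i \otimes B_k\}$ is orthonormal and identifies the target $G$ with the matrix $C := LR^T$. For an arbitrary random $r$-Kronecker-Sum $Y = \sum_j u'_j \otimes A'_j$, I would first project: let $P,Q$ be the orthogonal projections onto $\text{span}\{v_i\}$ and $\text{span}\{B_i\}$ and set $\hat Y := \sum_j (Pu'_j) \otimes (QA'_j)$. This is still a (rank-$r$) Kronecker-Sum, and since $G$ lies in the range of $P\otimes Q$, $\E[\hat Y] = (P\otimes Q)\E[Y] = G$. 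Decomposing $Y - G = (\hat Y - G) + (Y - \hat Y)$ into orthogonal components and taking expectation gives $\Var[\hat Y] \le \Var[Y]$. Hence WLOG it suffices to minimize $\E\|C' - C\|_F^2$ over random matrices $C' \in \R^{(r+1)\times(r+1)}$ of rank $\le r$ with $\E[C'] = C$.

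\textbf{Second reduction.} Writing $C = UDV^T$ with orthogonal $U,V$ and $D = \diag(d_1,\ldots,d_{r+1})$, $d_1 \ge \ldots \ge d_{r+1}$, the map $D' \mapsto UD'V^T$ is a rank-preserving Frobenius isometry, reducing the problem to approximating $D$. In Algorithm \ref{alg:Opt} the threshold $m$ splits the problem into a deterministic top block (entries $d_1,\ldots,d_{m-1}$ kept exactly) and a random bottom block $ZZ^T$. A direct computation gives $\E[ZZ^T] = (s_1/k)(\Id - z_0 z_0^T) = \diag(d_m,\ldots,d_{r+1})$, where the calibrated definition of $z_0$ is exactly what matches the diagonal. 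A second short calculation (using the random signs and $\|z_0\|^2 = 1$) reveals the remarkable fact that $\|ZZ^T\|_F^2 = s_1^2/k$ is in fact deterministic, so the achieved variance is $s_1^2/k - \sum_{i \ge m} d_i^2$.

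\textbf{Main obstacle: matching lower bound.} I would use Lagrangian duality. Rewriting the primal as $\min \E\|D'\|_F^2$ subject to $\E[D'] = D$ and $\mathrm{rank}(D') \le r$ almost surely, introducing a symmetric dual $\Lambda$ for the mean constraint, and applying Eckart--Young to the pointwise inner minimization yields the weak-duality bound
\begin{align*}
\min \E\|D'\|_F^2 \;\ge\; 2\langle \Lambda, D\rangle - \|\Lambda_{(r)}\|_F^2,
\end{align*}
where $\Lambda_{(r)}$ is the best rank-$r$ truncation of $\Lambda$. To close the gap, I would take $\Lambda$ diagonal with top entries $d_1,\ldots,d_{m-1}$ and bottom $k+1$ entries equal to the constant $s_1/k$. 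The threshold condition defining $m$ is precisely what ensures $d_{m-1} > s_1/k \ge d_m$, so the $k+1$ smallest entries of $\Lambda$ all lie in the bottom block; the Eckart--Young truncation zeros out exactly one of them, and a short arithmetic check shows $2\langle\Lambda,D\rangle - \|\Lambda_{(r)}\|_F^2 - \|D\|_F^2 = s_1^2/k - \sum_{i\ge m} d_i^2$, matching the variance achieved in stage two. The subtle point — and the place where the minimality in the definition of $m$ is essential — is verifying that the rank-$r$ truncation of $\Lambda$ does drop a bottom-block entry and not a top one; this is also what makes the explicit Lagrange multiplier piecewise and not just the square root of $D$.
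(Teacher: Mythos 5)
Your proposal is correct and follows essentially the same route as the paper: the same two reductions (projection onto the spans to get a $(r+1)\times(r+1)$ matrix $C=LR^T$, then SVD to a diagonal), the same sign-randomized construction achieving variance $s_1^2/k - s_2$, and a lower bound that is the paper's duality argument in Lagrangian form (your multiplier $\Lambda$ is exactly the paper's shifted target $D-B$, with the identical choice of constant bottom block $s_1/k$). The only slip is cosmetic: $\E[ZZ^T]$ equals the diagonal part of $(s_1/k)(\Id - z_0 z_0^T)$ rather than that matrix itself, which does not affect the argument.
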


\begin{thm}
	\label{thm:mat}
	Given $C\in\R^{m\times n}$ and  $r\le \min\{m,n\}$, one can (explicitly) construct an unbiased approximator $C'$ of $C$, so that $C'$ always has rank at most $r$, and so that $C'$ has minimal variance among all such unbiased, low-rank approximators. This can be achieved asymptotically in the same runtime as computing the SVD of $C$.
\end{thm}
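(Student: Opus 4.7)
My approach is to reduce to the diagonal case via the SVD, construct the approximator by an analogue of Algorithm~\ref{alg:Opt}, and prove optimality via a matching dual lower bound. First I would compute $C = U\Sigma V^T$; since the Frobenius norm is orthogonally invariant, the map $Y \leftrightarrow U Y V^T$ is a variance- and rank-preserving bijection between unbiased rank-$r$ approximators of $C$ and of $\Sigma$, so it suffices to build an optimal unbiased rank-$r$ approximator $D'$ of the non-negative diagonal matrix $D := \Sigma = \diag(d_1,\ldots,d_N)$ (with $N := \min(m,n)$ and $d_1\ge\cdots\ge d_N\ge 0$) and then output $C' := U D' V^T$. Following Algorithm~\ref{alg:Opt}, let $t$ be the smallest index with $(r-t+1)d_t \le \sum_{j\ge t} d_j$, put $k := r - t + 1$ and $s_1 := \sum_{j\ge t} d_j$, and output $D' := \diag(d_1,\ldots,d_{t-1}) \oplus Z Z^T$ with $Z$ built as in the algorithm applied to the tail $(d_t,\ldots,d_N)$. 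Then $\mathrm{rank}(D') \le (t-1)+k = r$; unbiasedness reduces to $\E[Z Z^T] = \diag(d_t,\ldots,d_N)$, which follows from a direct computation using independence of the random signs to kill off-diagonal entries and orthonormality of $\{z_0,\ldots,z_k\}$ to obtain $\E[(Z Z^T)_{ii}] = (s_1/k)(1-z_{0,i}^2) = d_{t+i-1}$.

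The main step is the lower bound. For any unbiased rank-$r$ approximator $X$ of $D$ and any matrix $M$ of the same shape, von Neumann's trace inequality combined with Cauchy--Schwarz gives, pointwise,
\[\langle X, M\rangle \le \sum_{i=1}^r \sigma_i(X)\sigma_i(M) \le \lVert X\rVert_F\sqrt{\textstyle\sum_{i=1}^r \sigma_i(M)^2}.\]
Taking expectations (noting $\E[\langle X,M\rangle]=\langle D,M\rangle$) and applying Jensen's inequality yield the dual bound $\E[\lVert X\rVert_F^2] \ge \langle D,M\rangle^2 \big/ \sum_{i=1}^r \sigma_i(M)^2$. To make this tight I would take $M := \diag(d_1,\ldots,d_{t-1},\mu,\ldots,\mu)$ with $\mu := s_1/k$. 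The failure of the defining inequality at index $t-1$ gives $d_{t-1} > \mu$, so the top-$r$ singular values of $M$ are $d_1,\ldots,d_{t-1}$ together with $k$ copies of $\mu$. A short simplification reduces the bound to $\E[\lVert X\rVert_F^2] \ge \sum_{i<t} d_i^2 + s_1^2/k$; by a direct computation $\lVert D'\rVert_F^2$ is deterministic for the construction above and equals this same quantity, since the signs cancel in $\lVert ZZ^T\rVert_F^2 = s_1^2/k$. Subtracting $\lVert D\rVert_F^2$ from both sides then yields $\Var[X] \ge \Var[D']$.

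The hard part will be identifying this dual variable $M$: the naive choice $M=I$ only gives $\lVert D\rVert_*^2/r$, which is tight only when all singular values of $D$ are comparable. The two-level structure of $M$, deterministic on the top singular values and uniform on the tail, is what comes out of optimizing the dual ratio, and the threshold $t$ is precisely where this optimization forces the split; the fact that $\lVert D'\rVert_F^2$ is deterministic is what makes Jensen's inequality tight in the dual argument. Finally, the runtime claim is immediate: the SVD of $C$ dominates at $O(mn\min(m,n))$, and finding $t$, assembling $Z$ via Gram--Schmidt on $O(r)$ vectors, and forming the matrix products $UD'V^T$ together cost at most $O((m+n)rN)$, all subsumed by the SVD.
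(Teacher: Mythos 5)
Your overall strategy---the SVD reduction, the same threshold index, the block construction that keeps the large singular values deterministically and randomizes only the tail, and a duality lower bound---coincides with the paper's, and your lower bound is correct. Your dual certificate $M=\diag(d_1,\ldots,d_{t-1},\mu,\ldots,\mu)$ with $\mu=s_1/k$ is exactly the matrix $D-B$ that the paper feeds into the Eckart--Young theorem after completing the square in $\E\bigl[\Tr[(D'-D)B]\bigr]=0$; your alternative route via von Neumann's trace inequality, Cauchy--Schwarz on the top $r$ singular values, and Jensen yields the same tight bound $\Var[X]\ge s_1^2/k-s_2$, and observing that $\lVert D'\rVert_F^2$ is deterministic closes the optimality argument without having to trace the equality cases that the paper records in Theorem~\ref{thm:optcond}. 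The inequality $d_{t-1}>\mu$ and the resulting identification of the top $r$ singular values of $M$ are also handled correctly.

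The genuine gap is in the construction of $Z$. Writing $N=\min(m,n)$ and $L=N-t+1$ for the length of the tail, Algorithm~\ref{alg:Opt} builds $z_0\in\R^{(k+1)\times 1}$ and completes it to an orthonormal basis $z_0,\ldots,z_k$ of $\R^{(k+1)\times 1}$; your identity $\E[(ZZ^T)_{ii}]=(s_1/k)(1-z_{0,i}^2)$ relies on $\sum_{i=1}^k z_iz_i^T=\Id_{k+1}-z_0z_0^T$, which requires the ambient dimension of the tail to be exactly $k+1$, i.e.\ $N=r+1$ (the case actually needed for Theorem~\ref{thm:opt}). For a general $C\in\R^{m\times n}$ with $N>r+1$ you need $k$ orthonormal vectors $z_1,\ldots,z_k\in\R^{L\times 1}$ with $L>k+1$ whose outer-product sum has the prescribed diagonal $\tfrac{k}{s_1}(d_t,\ldots,d_N)$---equivalently, a rank-$k$ orthogonal projection with a prescribed diagonal whose entries lie in $[0,1]$ and sum to $k$. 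That existence statement is precisely the paper's Lemma~\ref{lem:diag}, a converse Schur--Horn-type fact about diagonals of symmetric idempotent matrices, proved by induction on $k$ with a Givens rotation repairing two diagonal entries at each step; it does not follow from completing a single vector to a basis, and your proposal neither states nor proves it. Your runtime accounting (``Gram--Schmidt on $O(r)$ vectors'') inherits the same restriction, although the inductive construction is still dominated by the cost of the SVD, so the runtime claim survives once the missing construction is supplied.
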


\begin{algorithm}
	\caption{The OK approximation}
	\label{alg:OK}
	\begin{algorithmic}
		\STATE{{\bfseries Input:} Vectors $u_1,\ldots, u_{r+1}$ and matrices $A_1,\ldots,A_{r+1}$}
		\STATE{{\bfseries Output:} Random vectors $u'_1,\ldots, u'_r$ and matrices $A'_1\ldots A'_r$, such that $\sum_{i=1}^r u'_i\otimes A'_i$ is an unbiased, minimum-variance approximator of $\sum_{i=1}^{r+1} u_i\otimes A_i$}
		\vspace{3pt}
		\STATE{\bfseries /*Rewrite in terms of orthonormal basis (onb)*/}
		\STATE{$v_1,\ldots,v_{r+1}\gets$ onb of $\text{span}\{u_1,\ldots,u_{r+1}\}$}
		\STATE{$B_1,\ldots,B_{r+1}\gets$ onb $\text{span}\{A_1,\ldots,A_{r+1}\}$}
		\FOR{$1\leq i,j \leq r+1$}
		\STATE{$L_{i,j}\gets \langle v_i,u_j\rangle,\quad R_{i,j}\gets \langle B_i,A_j\rangle$}
		\ENDFOR
		\vspace{3pt}
		\STATE{\bfseries /*Find optimal rank $r$ approximation of matrix $C$ */ }
		\STATE{$C\gets  LR^T$}
		\STATE{$(L',R')\gets Opt(C)$} \COMMENT{see Algorithm \ref{alg:Opt} for $Opt(\cdot)$}
		\vspace{3pt}
		\STATE{\bfseries /*Generate output*/ }
		\FOR{$1\leq j \leq r$}
		\STATE $u'_j\gets \sum_{i=1}^{r+1} L'_{i,j}v_i,\quad A'_j\gets \sum_{i=1}^{r+1} R'_{i,j}B_i$
		\ENDFOR
	\end{algorithmic}
\end{algorithm}

\begin{algorithm}
	\caption{$Opt(C)$}
	\label{alg:Opt}
	\begin{algorithmic}
		\STATE{{\bfseries Input:} Matrix $C\in\mathbb{R}^{(r+1)\times (r+1)}$}
		\STATE{{\bfseries Output:} Random matrices $L',R'\in\mathbb{R}^{(r+1)\times r}$, so that $L'R'^T$ is an unbiased, min-variance approximator of $C$}
		\vspace{3pt}
		\STATE{\bfseries /* Reduce to diagonal matrix $D$*/}
		\STATE $(D,U,V) \gets \mathrm{SVD}(C)$ 
		\STATE $(d_1,\ldots,d_{r+1}) \gets$ diagonal entries of $D$
		\vspace{3pt}
		\STATE{\bfseries /* Find approximator $ZZ^T$ for small $d_i$ ($i\geq m$)*/}
		\STATE $m \gets \min\{i\colon (r-i+1)d_i\leq \sum_{j=i}^r d_j\}$ 
		\STATE $s_1\gets \sum_{i=m}^{r+1} d_i,\quad k \gets r-m +1$
		\STATE $z_0 \gets \left(\sqrt{1 - \frac{d_m k} {s_1}},\ldots,\sqrt{1 - \frac{d_{r+1} k} {s_1}}\right)^T \in \R^{(k+1)\times 1}$
		\STATE $z_1,\ldots, z_k \gets$ so that $z_0,z_1,\ldots,z_k$ is an onb of $\R^{(k+1)\times 1}$
		\STATE $s\gets$ vector of $k+1$ uniformly random signs
		\STATE $Z\gets \sqrt{\frac{s_1}{k}}\cdot (s\odot z_1,\ldots,s\odot z_k)$ \COMMENT{\textit{pointwise product }$\odot$}
		\vspace{3pt}
		\STATE {\bfseries /* Initialise $L',R'$ to approximate $D$*/}
		\STATE $L',R' \gets \mathrm{diag}(\sqrt{d_1},\ldots, \sqrt{d_{m-1}},Z)$ \COMMENT{\textit{Block-diagonal}}
		\STATE{\bfseries{/*Approximate $C=UDV^T$*/}}
		\STATE $L'\gets UL',\quad R'\gets  VR'$
	\end{algorithmic}
\end{algorithm}	

\subsection{Preliminaries}\label{sec:prelim}
\subsubsection{Notation}
We denote matrices by upper case letters, e.g. $C\in\mathbb{R}^{m\times n}$, and denote their entries by indexing this letter, e.g. $C_{i,j}$ where $1\leq i \leq m$ and $1\leq j \leq n$. For vectors $s,z_1,\ldots,z_n \in \mathbb{R}^{n\times 1}$, we denote by $s\odot z_i$ the pointwise product and by $Z=(z_1,\ldots,z_r)\in\mathbb{R}^{n\times r}$ the matrix whose $i$-th column is $z_i$. We write $\Id_n$ for the identity matrix of dimension $n$.

For a random variable $X'\in\mathbb{R}^{n\times m}$ and some fixed value $X\in\mathbb{R}^{n\times m}$, we say that $X'$ is an \textit{unbiased approximator} of $X$, if $\E[X'] = X$. We further call $X'$ a \textit{rank-$r$ approximator}, if $X$ always (with probability 1) has rank \textit{at most} $r$. We will usually name random variables by adding a `` ' '' to the deterministic quantity they represent. The variance of $X'$ is defined as $\Var[X'] = \E[\lVert X' -\E[X']\rVert^2]$, where we use the Frobenius norm and the corresponding inner product $\langle X_1,X_2\rangle = \Tr(X_1^TX_2)$ throughout.

\subsubsection{Singular Value Decomposition}
The Singular Value Decomposition (SVD) is a standard tool from Linear Algebra, and has countless applications in and outside Machine Learning. We refer to the textbook~\cite{golub1996matrix} for an introduction and~\cite{cline2006computation} for a review of algorithms to compute the SVD. 

To simplify notation, we restrict our attention to square matrices. The concepts straightforwardly generalise to arbitrary matrices, we refer to the above mentioned textbook.

For a matrix $C\in\R^{n\times n}$, the Singular Value Decomposition (SVD) of $C$ is a triple of matrices $U,V,D\in\R^{n\times n}$ satisfying $C=UDV^T$, so that $U,V$ are orthogonal and $D=\diag(d_1,\ldots,d_n)$ is a diagonal matrix with non-negative, non-decreasing entries. The existence of a SVD is a standard result in Linear Algebra.

The values $d_i$ are refered to as \textit{singular values} of $C$ and are uniquely determined by $C$. In fact they are the square-roots of the eigenvalues of $CC^T$. The number of non-zero singular values of $C$ equals the rank of $C$.
The columns of $U,V$ are called left, respectively right, \textit{singular vectors}. They correspond to eigen-bases of the matrices $CC^T$ and $C^TC$, respectively.   
The singular vectors are uniquely determined if and only if the singular values are pairwise distinct. If a singular value $d_i$ appears more than once, the corresponding singular vectors form an orthonormal basis of a subspace uniquely determined by $C$ and $d_i$ (corresponding to an eigen-space of $CC^T$ or $C^TC$).

One of the important applications of the SVD is the following result, known as the Eckart-Young Theorem~\cite{Eckart1936}.
\begin{thm}[Eckart-Young Theorem]
	Let $C\in\R^{n\times n}$ be a matrix with singular values $d_1,\ldots, d_n$ and let $X\in\R^{n\times n}$ be a fixed (non-random) matrix of rank at most $r$. \\
	Then, $\lVert C-X\rVert^2 \geq \sum_{i=r+1}^n d_i^2$ and equality is achieved if and only if $X$ is of the form ${X=U\cdot \diag(d_1,\ldots,d_r,0\ldots,0)\cdot V^T}$ for an arbirarty singular value decomposition $C=UDV^T$ of $C$.
\end{thm}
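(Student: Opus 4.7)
The plan is to exploit the unitary invariance of the Frobenius norm to reduce to a diagonal matrix, and then extract the lower bound from Weyl's inequalities for singular values; this keeps the proof essentially self-contained modulo a classical inequality.

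First I would fix the ordering $d_1 \geq d_2 \geq \ldots \geq d_n \geq 0$ and write $C = U D V^T$ with $D = \diag(d_1, \ldots, d_n)$. Since the Frobenius norm is invariant under orthogonal change of basis on either side, setting $Y := U^T X V$ yields $\lVert C - X \rVert^2 = \lVert D - Y \rVert^2$, and since orthogonal multiplication preserves rank, $\mathrm{rank}(Y) = \mathrm{rank}(X) \leq r$. It therefore suffices to prove the inequality in the diagonal case $C = D$ and then translate back via $U, V$.

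For the lower bound I would invoke Weyl's inequality for singular values, $\sigma_{i+j-1}(A + B) \leq \sigma_i(A) + \sigma_j(B)$, applied with $A = Y$ and $B = D - Y$. Taking $i = r + 1$, so that $\sigma_{r+1}(Y) = 0$ by the rank constraint, and letting $j$ range from $1$ to $n - r$, this yields $d_{r+j} = \sigma_{r+j}(D) \leq \sigma_j(D - Y)$. Squaring and summing gives
\[
\sum_{i=r+1}^n d_i^2 \;=\; \sum_{j=1}^{n-r} \sigma_{r+j}(D)^2 \;\leq\; \sum_{j=1}^{n-r} \sigma_j(D - Y)^2 \;\leq\; \lVert D - Y \rVert^2,
\]
using at the last step that the squared Frobenius norm equals the sum of all squared singular values. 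This proves the inequality in the diagonal case and hence, by the reduction above, in general.

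For the equality characterization, one direction is immediate: if $X = U\, \diag(d_1, \ldots, d_r, 0, \ldots, 0)\, V^T$ for any SVD $C = U D V^T$, then $C - X = U\, \diag(0, \ldots, 0, d_{r+1}, \ldots, d_n)\, V^T$, whose squared Frobenius norm is exactly $\sum_{i > r} d_i^2$. For the converse I would trace back through the chain of inequalities above: equality forces $\sigma_j(D - Y) = 0$ for $j > n - r$ (so $D - Y$ has rank at most $n - r$) and $\sigma_j(D - Y) = d_{r+j}$ for $j \leq n - r$, and examining where the Weyl bounds become tight forces the top $r$ singular subspaces of $Y$ to align with those of $D$, pinning $Y$ down to the claimed block form. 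The main obstacle will be this equality direction when $C$ has singular values that coincide across the cutoff $d_r = d_{r+1}$: the SVD is then non-unique, and different orthonormal bases of the degenerate eigenspace of $C^T C$ produce different optimal $X$'s. My plan is to partition the ambient space into the distinct eigenspaces of $C^T C$ and argue eigenspace by eigenspace, so that on eigenspaces strictly above or strictly below the cutoff the contribution of an optimal $X$ is fully determined, while on the eigenspace straddling the cutoff any orthonormal basis of appropriate dimension yields an optimizer, matching precisely the flexibility in the ``arbitrary SVD'' clause of the theorem.
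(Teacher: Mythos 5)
The paper does not actually prove this statement: it cites the Eckart--Young Theorem as a classical result from the literature and only uses it as a black box (most directly through Observation 1 and in the proof of Theorem 3). So there is no in-paper proof to compare against, and your proposal should be judged on its own. Your route is a standard and correct one. The reduction to the diagonal case via unitary invariance of the Frobenius norm is exactly right, and the lower bound via Weyl's inequality $\sigma_{i+j-1}(A+B)\le \sigma_i(A)+\sigma_j(B)$ with $i=r+1$, giving $d_{r+j}\le \sigma_j(D-Y)$ and hence $\lVert D-Y\rVert^2\ge \sum_{i=r+1}^n d_i^2$, is complete as written. The ``if'' direction of the equality characterization is also fully argued. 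The one place where you have a plan rather than a proof is the ``only if'' direction: from $\sigma_j(D-Y)=d_{r+j}$ for $j\le n-r$ and $\mathrm{rank}(D-Y)\le n-r$ you still need to extract that $Y$ is the truncation of \emph{some} SVD of $D$, and tracing tightness through Weyl's inequality to force subspace alignment is the genuinely delicate step (it is where the $d_r=d_{r+1}$ degeneracy you flag actually bites). Your eigenspace-by-eigenspace strategy for $C^TC$ is the right way to organize it --- note that the left singular vectors then come for free from $u_i=Cv_i/d_i$ on the nonzero part of the spectrum --- but as submitted this portion is an outline, not a proof. For the purposes of this paper that is harmless, since only the inequality and the ``if'' direction (via Observation 1) are ever used.
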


Noting that every SVD of the identity matrix $\Id_n$ is of the form $\Id_n = U \cdot\Id_n\cdot U^T$ for some orthogonal matrix $U$, we can deduce the following observation.

\begin{obs}\label{obs}
	Let $X\in\R^{n\times n}$ be a fixed (non-random) matrix of rank at most $r$. 
	Then $\lVert X-\Id_n\rVert^2 \geq n-r$ and equality is achieved if and only if $X$ is of the form $X= \sum_{i=1}^r u_iu_i^T$, where the $u_i$ are orthonormal vectors in $\R^{n\times 1}$. 
\end{obs}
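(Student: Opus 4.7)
The plan is to deduce this observation as a direct corollary of the Eckart--Young Theorem applied to the special case $C = \Id_n$. All singular values of $\Id_n$ equal $1$, so the bound $\sum_{i=r+1}^n d_i^2 = n-r$ follows immediately, yielding $\lVert X - \Id_n\rVert^2 \geq n-r$ for any $X$ of rank at most $r$.

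The substantive part is the characterization of the equality case. Eckart--Young guarantees equality iff $X = U \cdot \diag(1,\ldots,1,0,\ldots,0)\cdot V^T$ for \emph{some} SVD $\Id_n = UDV^T$. So the key step is to enumerate all SVDs of $\Id_n$. This is where one must be a little careful: since all singular values of $\Id_n$ coincide, the singular vectors are highly non-unique, but the constraint $UDV^T = \Id_n$ together with $D = \Id_n$ forces $UV^T = \Id_n$, hence $V = U$. Thus every SVD of $\Id_n$ has the form $\Id_n = U \Id_n U^T$ for some orthogonal $U$, and the equality case becomes $X = U\cdot\diag(1,\ldots,1,0,\ldots,0)\cdot U^T = \sum_{i=1}^r u_i u_i^T$ where $u_1,\ldots,u_r$ are the first $r$ columns of $U$. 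These are orthonormal by orthogonality of $U$.

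Conversely, given orthonormal $u_1,\ldots,u_r \in \R^{n\times 1}$ and $X = \sum_{i=1}^r u_iu_i^T$, I would extend them to an orthonormal basis $u_1,\ldots,u_n$ of $\R^{n\times 1}$ (using Gram--Schmidt on any completion), assemble them as columns of an orthogonal $U$, and observe that $X = U\cdot\diag(1,\ldots,1,0,\ldots,0)\cdot U^T$ is exactly in the form Eckart--Young requires, so equality holds. I do not anticipate a real obstacle here; the only subtlety is the observation about degenerate singular values forcing $V = U$, which is the reason the equality family is parametrized by a single orthonormal $r$-frame rather than two.
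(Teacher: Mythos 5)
Your proposal is correct and follows exactly the route the paper takes: the paper derives the Observation from the Eckart--Young Theorem together with the remark that every SVD of $\Id_n$ has the form $U\,\Id_n\,U^T$, which is precisely the fact you establish via $UV^T=\Id_n \Rightarrow V=U$. Your write-up simply makes explicit the details (including the converse direction) that the paper leaves to the reader.
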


\subsection{Proof of Theorem \ref{thm:opt}}\label{sec:opt}
Let us first restate the objective encountered by OK. We are given vectors $u_1,\ldots, u_{r+1}\in\mathbb{R}^{1\times n}$ and matrices $A_1,\ldots,A_{r+1}\in \mathbb{R}^{n\times n}$ and we want to construct random vectors $u'_1,\ldots,u'_r$ and matrices $A'_1,\ldots,A'_r$ such that the $r$-Kronecker-Sum $G' = \sum_{i=1}^r{u'_i\otimes A'_i}$ is an unbiased approximator of the $(r+1)$-Kronecker-Sum $G= \sum_{i=1}^{r+1} u_i\otimes A_i$, and such that $G'$ has minimum variance. 

Theorem \ref{thm:opt} follows directly from the following lemma.
\begin{lemma}
	Assume that Algorithm \ref{alg:Opt} gives a minimum-variance rank-$r$ approximation $C'=L'R'^T$ of the matrix $C=L^T R$ as constructed by Algorithm \ref{alg:OK}. Then Algorithm \ref{alg:OK} gives a minimum-variance unbiased approximator $G' = \sum_{i=1}^r{u'_i\otimes A'_i}$ of $G= \sum_{i=1}^{r+1} u_i\otimes A_i$.
\end{lemma}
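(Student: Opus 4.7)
The plan is to show that Algorithm 1 sets up an isometric bijection between a certain $(r+1)^2$-dimensional subspace of Kronecker-sums and $\R^{(r+1)\times(r+1)}$, under which the problem of approximating $G$ by an $r$-Kronecker-sum reduces exactly to the matrix problem that Algorithm 2 is assumed to solve optimally.

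First I would unfold the notation. By definition of $L$ and $R$ we have $u_j=\sum_i L_{i,j}v_i$ and $A_j=\sum_i R_{i,j}B_i$, so bilinearity of the Kronecker product gives $G=\sum_{i,k}(LR^T)_{i,k}\,v_i\otimes B_k=\sum_{i,k}C_{i,k}\,v_i\otimes B_k$, and analogously $G'=\sum_{i,k}C'_{i,k}\,v_i\otimes B_k$. Since $\langle u\otimes A,u'\otimes A'\rangle=\langle u,u'\rangle\langle A,A'\rangle$ in the Frobenius inner product, orthonormality of the $v_i$'s and $B_k$'s makes $\{v_i\otimes B_k\}$ an orthonormal family of $(r+1)^2$ elements. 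Hence $\Phi:M\mapsto \sum_{i,k}M_{i,k}\,v_i\otimes B_k$ is a linear isometry from $\R^{(r+1)\times(r+1)}$ onto $W:=\text{span}\{v_i\otimes B_k\}$, with $\Phi(C)=G$ and $\Phi(C')=G'$. So $G'$ is unbiased for $G$ iff $C'$ is unbiased for $C$ (true by hypothesis) and $\Var[G']=\Var[C']$.

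For optimality, let $Y=\sum_{i=1}^r y_i\otimes Y_i$ be any random $r$-Kronecker-sum of the prescribed format satisfying $\E[Y]=G$. Decompose $y_i=y_i^\parallel+y_i^\perp$ with $y_i^\parallel\in\text{span}\{v_j\}$, and similarly $Y_i=Y_i^\parallel+Y_i^\perp$, and set $Y^{PP}:=\sum_i y_i^\parallel\otimes Y_i^\parallel$. All cross-terms $y_i^\parallel\otimes Y_i^\perp$, $y_i^\perp\otimes Y_i^\parallel$, $y_i^\perp\otimes Y_i^\perp$ are orthogonal to every $v_j\otimes B_k$ by the inner-product formula above, hence lie in $W^\perp$, so $Y^{PP}$ is precisely the orthogonal projection of $Y$ onto $W$. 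Because $G\in W$, Pythagoras yields $\lVert Y-G\rVert^2=\lVert Y^{PP}-G\rVert^2+\lVert Y-Y^{PP}\rVert^2$, and taking expectations gives $\Var[Y]\ge \Var[Y^{PP}]$. Linearity of the projection and $G\in W$ also give $\E[Y^{PP}]=G$, so $Y^{PP}$ is still unbiased.

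Finally, writing $y_i^\parallel=\sum_j \alpha_{j,i}v_j$ and $Y_i^\parallel=\sum_k \beta_{k,i}B_k$ with $\alpha,\beta\in\R^{(r+1)\times r}$ gives $Y^{PP}=\Phi(\alpha\beta^T)$, so $C'':=\alpha\beta^T$ is a random unbiased rank-$r$ approximator of $C$. The assumed optimality of Algorithm 2 then yields $\Var[C']\le \Var[C'']$, and chaining the three steps gives $\Var[G']=\Var[C']\le \Var[C'']=\Var[Y^{PP}]\le \Var[Y]$, as required. The main conceptual obstacle is the projection step: recognising that an arbitrary competitor may be orthogonally projected onto $W$ without destroying unbiasedness, without increasing variance, and while preserving the rank-$r$ Kronecker-sum structure. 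Once that is in place, everything else is bookkeeping inside the isometry $\Phi$.
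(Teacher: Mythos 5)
Your proof is correct and follows essentially the same route as the paper's: express everything in the orthonormal bases $v_i$, $B_k$ so that the Kronecker-sum problem becomes, via the isometry onto $\mathrm{span}\{v_i\otimes B_k\}$, exactly the rank-$r$ matrix approximation problem solved by $Opt(C)$. The only difference is that you make the projection step fully rigorous (parallel/perpendicular decomposition, Pythagoras, preservation of unbiasedness and of the rank-$r$ structure), whereas the paper dispatches it with a one-line remark that an optimal approximator may be assumed to live in the spans.
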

\begin{proof}
	The first important observation is that for an optimal approximator $G'$, the random variables $u'_i$ are always elements of $\mathrm{span}\{u_1,\ldots,u_{r+1}\}$ and the $A'_i$ are always elements of $\mathrm{span}\{A_1,\ldots,A_{r+1}\}$. Otherwise, we could simply take the $u'_i$ and project them (orthogonally) onto $\text{span}\{u_1,\ldots,u_{r+1}\}$ (and similarly for the $A'_i$) to obtain a new unbiased approximator $G''$ of $G$ which has less variance than $G'$.
	
	From this observation, it follows that the $u'_i$ are a (random) linear combination of the $u_i$ (and similarly for $A'_i$). In order to be able to get simple closed-form expressions for the variance of $G'$, we now choose orthonormal bases of the spaces $\mathrm{span}\{u_1,\ldots,u_{r+1}\}$ and  $\mathrm{span}\{A_1,\ldots,A_{r+1}\}$, let us denote them by $v_1,\ldots,v_{r+1}$ and $B_1,\ldots,B_{r+1}$ respectively. Now define matrices $L,R\in\mathbb{R}^{(r+1)\times(r+1)}$ by setting $L_{i,j}:=\langle v_i,u_j\rangle$ and $R_{i,j}:=\langle B_i,A_j\rangle$. Especially, we have $u_{j} = \sum_{i=1}^{r+1} L_{i,j}v_i$ (and an analogous equation for $A_j$). Observe that the matrix $C:=LR^T$ has coefficients representing $G$ in terms of an orthogonal bases, more precisely 
	\begin{eqnarray}\label{G}
	G = \sum_{1\leq i,j\leq r+1} C_{i,j} \left(v_i \otimes B_j\right), 
	\end{eqnarray}
	where it is not difficult to see that the $\left(v_i\otimes B_j\right)_{i,j}$ are orthonormal.
	
	As noted above, the $u'_i,A'_i$ forming $G'$ are linear combinations of  $u_i, A_i$ respectively,  so they can be written in terms of the ONBs $u'_i,A'_i$. Let us record all the corresponding coefficients in (random) matrices $L',R'\in\mathbb{R}^{(r+1)\times r}$ meaning that $L'_{i,j}=\langle v_i, u'_j\rangle$ (or equivalently $u'_j = \sum_{i} L'_{i,j} v_i$ ) and $R'_{i,j} =\langle B_i,A_j\rangle$. The same calculations as above then show that, for the matrix $C':= L' R'^T$, we have 
	\begin{eqnarray}\label{G'}
	G' = \sum_{1\leq i,j\leq r+1} C'_{i,j} \left(v_i \otimes B_j\right). 
	\end{eqnarray}
	Now, with linearity of expectation and independence of $(u'_i\otimes A'_j)_{i,j}$, we can conclude from \eqref{G}, \eqref{G'} that 
	\[
	\E[G'] = G \quad\Leftrightarrow\quad \E[C'] = C. 
	\]
	From the orthonormality of $(u'_i\otimes A'_j)_{i,j}$ we moreover obtain
	\[
	\Var[G'] = \E\left[\sum_{i,j}\left(C'_{i,j}-\E[C'_{i,j}]\right)^2\right] = \Var[C'].
	\]
	Combined, the last two statements show that finding a minimum variance, unbiased approximator $G'$ of $G$ is equivalent to finding a minimum variance, unbiased rank-$r$ approximator $C' = L' R'^T$ of $C$. 
	This finishes the proof of the lemma. 
\end{proof}

\subsection{Proof of Theorem \ref{thm:mat}, Correctness of Algorithm \ref{alg:Opt}}\label{sec:mat}
Here, we prove Theorem \ref{thm:mat}. The calculations carried out in the proof precisely match the ones carried out by Algorithm \ref{alg:Opt}, so that its correctness is an immediate consequence of the proof given below. Theorem \ref{thm:mat} is a direct consequence of Theorems \ref{thm:optcond} and \ref{thm:achievable} below.

To simplify notation, we shall assume that $C$ has dimension $C\in\R^{n\times n}$, the more general case $C\in\R^{m\times n}$ can be proved in the same way without additional complications. The outline of the proof is as follows: We will first reduce finding an unbiased rank-$r$ approximator of $C$ to the problem of finding an unbiased, rank-$r$ approximator of a diagonal matrix $D$ using SVD. We will then use a duality argument to give a sufficient condition for an approximator of $D$ to have minimal variance and conclude by constructing an approximator fulfilling this condition.

\subsubsection{Reducing the problem to diagonal matrices}
In this subsection, we give the simple explanation of how finding an optimal rank-$r$ approximator $C'$ of $C\in\mathbb{R}^{n\times n}$ can be reduced to finding an optimal rank-$r$ approximator $D'$ of a diagonal matrix $D=\diag(d_1,\ldots,d_n)$ with non-negative entries. 
\begin{lemma}
	Let $C$ be as above and let $UDV^T = C$ be a SVD of $C$. Then, given an unbiased rank-$r$ approximator $D'$ of $D$, it holds that $C'=UD'V^T$ is an unbiased approximator of $C$. Moreover, $C'$ is optimal if  $D'$ is optimal.
\end{lemma}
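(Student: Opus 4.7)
The statement combines three easy facts plus one short optimality argument by bijection. My plan is to treat them in that order.

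First, for unbiasedness, the matrices $U$ and $V$ are fixed (not random), so linearity of expectation gives $\E[C']=\E[UD'V^T]=U\,\E[D']\,V^T = UDV^T = C$. For the rank bound, since $U$ and $V$ are orthogonal and hence invertible, $\mathrm{rank}(UD'V^T)=\mathrm{rank}(D')\le r$ almost surely, so $C'$ is indeed a rank-$r$ approximator of $C$.

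Next, I would use orthogonal invariance of the Frobenius norm: for any matrix $M$ and orthogonal $U,V$, $\lVert UMV^T\rVert^2 = \Tr(VM^TU^TUMV^T)=\Tr(M^TM)=\lVert M\rVert^2$. Applying this with $M=D'-D$ yields
\[
\Var[C']=\E\bigl[\lVert C'-C\rVert^2\bigr]=\E\bigl[\lVert U(D'-D)V^T\rVert^2\bigr]=\E\bigl[\lVert D'-D\rVert^2\bigr]=\Var[D'].
\]

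Finally, for the optimality transfer, the main (and really only non-trivial) step is to establish a variance-preserving bijection between unbiased rank-$r$ approximators of $C$ and of $D$. Given any random $\tilde C$ with $\E[\tilde C]=C$ and $\mathrm{rank}(\tilde C)\le r$, define $\tilde D:=U^T\tilde C V$. Then $\E[\tilde D]=U^T C V = D$ by linearity, $\mathrm{rank}(\tilde D)\le r$ because $U,V$ are invertible, and by the same orthogonal-invariance computation as above, $\Var[\tilde D]=\Var[\tilde C]$. Conversely, every unbiased rank-$r$ approximator $D'$ of $D$ gives one of $C$ via $C'=UD'V^T$ with matching variance. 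Therefore, assuming $D'$ is variance-minimising among unbiased rank-$r$ approximators of $D$, any competing $\tilde C$ would supply a $\tilde D$ with $\Var[\tilde D]=\Var[\tilde C]$, which cannot beat $\Var[D']=\Var[C']$. Hence $C'$ is optimal.

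I do not anticipate a real obstacle; the only subtlety is making sure orthogonality (as opposed to just invertibility) is used, because it is needed both to preserve rank and to keep the Frobenius norm invariant. The whole argument is a direct consequence of the fact that conjugation by orthogonal matrices is a Frobenius isometry that preserves rank and commutes with expectation.
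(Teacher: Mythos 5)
Your proof is correct and follows essentially the same route as the paper's: unbiasedness by linearity of expectation, and optimality via the variance-preserving one-to-one correspondence $D' = U^T C' V$ enabled by orthogonality of $U$ and $V$. You simply spell out the details (rank preservation, Frobenius isometry) that the paper leaves implicit.
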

\begin{proof}
	The proof is almost immediate. The fact that $C'$ unbiasedly approximates $C$ follows from the fact that $D'$ unbiasedly approximates $D$, linearity of expectation and $C=UDV^T$. Note that given $C'$, we can write $D' = U^T C' V$, so that there is a one-to-one correspondence between $C'$ and $D'$. Since $U,V$ are orthogonal, it follows that $\Var[C']=\Var[D']$, so that $C'$ is optimal if and only if $D'$ is optimal.
\end{proof}

\subsubsection{Optimally approximating diagonal matrices}
In this subsection, we construct a minimum-variance, unbiased approximator for diagonal matrices $D=\diag(d_1,\ldots,d_n)$ with non-negative, non-increasing entries. The first step is giving a sufficient condition for any such approximator to be optimal. The second step is the construction of an unbiased approximator satisfying this condition.

\subsubsection*{Sufficient Optimality Condition}
For stating our condition, we first need some notation. Let $D=\diag(d_1,\ldots,d_n)$ be a diagonal matrix such that $d_1\geq d_2\ldots \geq d_n\geq 0$.
Let 
$$m=\min\left\{i\colon (r-i+1)d_i\leq \sum_{j=i}^n d_j\right\},\quad k = r-m+1.$$ We can already give some intuition on the meaning of $m$. We will see later that it is defined so that the first $m-1$ diagonal entries are so large, that an optimal approximation consists of approximating $D$ by 
\begin{eqnarray}\label{eq:keep}
D'=
\left(
\begin{matrix}
\begin{matrix} d_1 &\ldots& 0 \\ 0 &\ddots& 0 \\ 0&\ldots &d_{m-1} \end{matrix}  & \large{0}  \\
\large{0}      & \large{D'_2}  \\
\end{matrix}
\right)
\end{eqnarray}
where $D'_2$ is an optimal, unbiased rank-$k$ approximator of $\diag(d_m,\ldots,d_n)$ (note that if the rank of $D'_2$ was larger then $k$ then the rank of $D'$ would be larger than $r$). In other words, some large diagonal entries are kept deterministically and only smaller ones are `mixed' into a matrix of lower rank.

Defining $$s_1:= \sum_{j=m}^n d_j\quad\text{and }\quad s_2:= \sum_{j=m}^n d_j^2$$ we can state our optimality condition.

\begin{thm}\label{thm:optcond}
	Let $D,m,k, s_1,s_2$ be as above. Then, any unbiased rank-$r$ approximator $D'$ of $D$ satisfies $$\Var[D']\geq \frac{s_1^2}{k} - s_2.$$
	Equality is achieved if and only if, in addition to being unbiased, $D'$ satisfies the following two conditions: 
	\begin{enumerate}
		\item\label{cond1} $D'$ is of the form given in equation \eqref{eq:keep}, such that  
		\item\label{cond:close} $D'_2$ always (with probability 1) satisfies $${\left\lVert \frac{k}{s_1}D_2' -  \Id_{n-(m-1)}\right\rVert^2 = n-r}.$$
	\end{enumerate}
\end{thm}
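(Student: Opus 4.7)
The plan is to derive the lower bound on $\Var[D']$ by a duality argument based on the Eckart--Young theorem, using a carefully chosen test matrix $Y$. Since $\E[D']=D$, we have $\Var[D']=\E[\lVert D'\rVert^2]-\lVert D\rVert^2$, so it suffices to show $\E[\lVert D'\rVert^2]\ge \sum_{i<m}d_i^2 + s_1^2/k$. For the test matrix I will take the diagonal matrix $Y=\diag(d_1,\ldots,d_{m-1},s_1/k,\ldots,s_1/k)\in\R^{n\times n}$ in which the value $s_1/k$ is repeated $n-m+1$ times. The minimality of $m$ forces $(k+1)d_{m-1}>d_{m-1}+s_1$, i.e.\ $d_{m-1}>s_1/k$, and the defining inequality for $m$ yields $d_m\le s_1/k$; hence the diagonal of $Y$ is non-increasing and already encodes its SVD. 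In particular, the top $m-1$ singular values of $Y$ are $d_1>\cdots>d_{m-1}$ (all strictly greater than $s_1/k$), and the remaining $n-m+1$ are all equal to $s_1/k$.

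Applying Eckart--Young to $D'-Y$ for every realization (which is allowed since $\mathrm{rank}(D')\le r$) gives $\lVert D'-Y\rVert^2\ge \sum_{i>r}\sigma_i(Y)^2=(n-r)s_1^2/k^2$ almost surely. Taking expectation and expanding $\E[\lVert D'-Y\rVert^2]=\E[\lVert D'\rVert^2]-2\langle D,Y\rangle+\lVert Y\rVert^2$ using unbiasedness, a direct computation gives $\langle D,Y\rangle=\sum_{i<m}d_i^2+s_1^2/k$ and $\lVert Y\rVert^2=\sum_{i<m}d_i^2+(n-m+1)s_1^2/k^2$. The identity $n-m+1-k=n-r$ then collapses everything to $\E[\lVert D'\rVert^2]\ge \sum_{i<m}d_i^2+s_1^2/k$, whence $\Var[D']\ge s_1^2/k-s_2$ after subtracting $\lVert D\rVert^2=\sum_i d_i^2$.

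For the equality characterization, the chain above forces $\lVert D'-Y\rVert^2=(n-r)s_1^2/k^2$ almost surely. Since $Y$ has strictly decreasing top singular values $d_1>\cdots>d_{m-1}>s_1/k$ followed by an $(n-m+1)$-fold repeated singular value $s_1/k$, the equality clause of Eckart--Young forces $D'$ to match $Y$ exactly on the uniquely determined top $m-1$ singular directions and to be a best rank-$k$ approximation of $(s_1/k)\Id_{n-m+1}$ on the remaining subspace. The former is precisely the block-diagonal structure of condition (\ref{cond1}), and by Observation \ref{obs} the latter is equivalent to $\lVert (k/s_1)D'_2-\Id_{n-m+1}\rVert^2=n-r$, which is condition (\ref{cond:close}). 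Conversely, assuming (\ref{cond1}) and (\ref{cond:close}) together with unbiasedness, one verifies by computing $\E[\lVert D'_2\rVert^2]=s_1^2/k$ directly from the equality case of Observation \ref{obs} that all inequalities in the chain become equalities.

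The main obstacle will be handling the equality clause of Eckart--Young when $Y$ has a repeated top singular value: one must argue that the distinct top $m-1$ singular directions are pinned down by $Y$ alone (so $D'$ must match them deterministically), while the repeated $(s_1/k)$-block gives a genuine degree of freedom that is then characterized by Observation \ref{obs}. Verifying the strict inequality $d_{m-1}>s_1/k$ from the minimality of $m$ is the small combinatorial fact that makes this split clean; the rest is bookkeeping with norms and traces.
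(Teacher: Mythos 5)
Your proof is correct and is essentially the paper's own argument: your test matrix $Y=\diag(d_1,\ldots,d_{m-1},s_1/k,\ldots,s_1/k)$ is exactly the paper's $D-B$, and expanding $\E\left[\lVert D'-Y\rVert^2\right]$ via unbiasedness is the same duality trick, followed by the same pointwise Eckart--Young bound and the same analysis of its equality case. The only cosmetic nit is that $d_1,\ldots,d_{m-1}$ need not be \emph{strictly} decreasing as you assert; all that matters for pinning down the top block and isolating the repeated $s_1/k$ eigenspace is $d_{m-1}>s_1/k$, which you correctly derive from the minimality of $m$.
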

Before proving the theorem, let us explain Condition \ref{cond:close}.~of the theorem. Note that $D'_2$ has (square) dimension $n-(m-1)$ and must have rank at most $k=r-(m-1)$. Thus, by the Eckart-Young Theorem
\begin{eqnarray*}\left\lVert \frac{k}{s_1}D_2' -  \Id\right\rVert^2 \geq \bigl((n-(m-1)) - k))\bigr)
	=n-r.
\end{eqnarray*}
In other words, the approximator $D'_2$ is optimal, if and only if $\frac{k}{s_1}D'_2$ is as close to $\Id$ as it can be (given its rank).
\begin{proof}[Proof of Theorem \ref{thm:optcond}]
	As mentioned before, we will prove the theorem using a duality argument. Let $D'$ be an unbiased rank-$r$ approximator of $D$. Observe that for any matrix $B \in \R^{n\times n}$, due to linearity of expectation, it holds that $\E\bigl[\Tr[(D'-D)B]\bigr] = 0$. We can therefore write 
	\begin{align}
	\Var[D'] &= \E\Biggl[\Tr\left[(D'-D)(D'-D)^T\right]\Biggr] \nonumber\\
	&= \E\Biggl[\Tr\left[(D'-D)(D'-D)^T\right] +2\Tr\left[(D'-D)B^T\right] \Biggr] \nonumber\\
	&= \E\Biggl[\Tr\left[\left(D'-D +B\right)\left(D'-D + B\right)^T\right] -\Tr\left[BB^T\right]\Biggr] \nonumber\\
	&= \E\left[\lVert D'- \left(D-B\right)\rVert^2\right]  - \E\left[\lVert B\rVert^2\right] \nonumber\\
	\label{eq:lowerBound}
	& \geq \min_{\substack{X\in \R^{n\times n}, \\ \text{rank}(X)\leq r} }\Bigl(\lVert X - (D-B)\rVert^2\Bigr) - \lVert B\rVert^2. 
	\end{align}
	Thus, for any $B \in \R^{n\times n}$, we get the lower bound from equation \eqref{eq:lowerBound} on the variance of $D'$. 
	We now choose $B$ to maximize the lower bound. Namely, we choose $B$ so that 
	\[
	D-B = 
	\left(
	\begin{matrix}
	\begin{matrix} d_1 &\ldots& 0 \\ 0 &\ddots& 0 \\ 0&\ldots &d_{m-1} \end{matrix}  & \large{0}  \\
	\large{0}      & \large{\frac{s_1}{k}\Id_{r-(m-1)}}  \\
	\end{matrix}
	\right).
	\]
	This implies 
	\begin{eqnarray}
	\lVert B\rVert^2 &=& \sum_{j=m}^n \left(d_j - \frac{s_1}{k}\right)^2 \nonumber\\
	&=& s_2 - 2\frac{s_1^2}{k} + (n-m+1)\frac{s_1^2}{k^2}.\label{eq:B}
	\end{eqnarray} 
	Moreover, note that the diagonal entries of $D-B$ are non-increasing. For $m=1$ this is immediate, for $m>1$, the definition of $m$ implies $ (r-(m-1) + 1) d_{m-1} > \sum_{j= m-1}^n d_j$ giving $d_{m-1}>\frac{s_1}{k}$ showing that diagonal entries are indeed non-decreasing. Thus, by the Eckart-Young Theorem, we have
	\begin{eqnarray}\label{eq:min}
	\min_{\substack{X\in \R^{n\times n}, \\ \text{rank}(X)\leq r} }\Bigl(\lVert X - (D-B)\rVert^2\Bigr) \geq (n-r)\frac{s_1^2}{k^2}.
	\end{eqnarray}
	We now obtain the statement of the theorem by plugging \eqref{eq:B} and \eqref{eq:min} into \eqref{eq:lowerBound} and recalling $k = r-m+1$
	\begin{eqnarray*}
		\Var[D'] &\geq &\frac{s_1^2}{k^2}\biggl((n-r) + 2k - (n-m+1) \biggr) -s_2 \\
		&=& \frac{s_1^2}{k} - s_2.
	\end{eqnarray*}
Note that equality is achieved if and only if it is always achieved in \eqref{eq:min}. Since $d_{m-1}>\frac{s_1}{k}$, it follows from the Eckart-Young Theorem that equality  in \eqref{eq:min} is achieved if an only if the Conditions \ref{cond1} and \ref{cond:close} from the theorem hold. 
\end{proof}

\subsubsection*{Construction of Approximator fulfilling the Optimality Condition}
We now show that the condition from Theorem \ref{thm:optcond} can be satisfied by a rank-$r$ approximator $D'$ of $D$.  
\begin{thm}\label{thm:achievable}
	In the setting of Theorem \ref{thm:optcond}, there is an unbiased approximator $D'$ of $D$ satisfying the optimality Conditions \ref{cond1} and \ref{cond:close} from Theorem \ref{thm:optcond}.
\end{thm}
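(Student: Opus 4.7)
The plan is to construct $D'$ explicitly as a block-diagonal matrix $D' = \diag(d_1,\ldots,d_{m-1},D'_2)$, which immediately satisfies Condition~\ref{cond1} and reduces the problem to producing a random rank-$k$ matrix $D'_2 \in \R^{(n-m+1)\times(n-m+1)}$ with $\E[D'_2] = \diag(d_m,\ldots,d_n)$ such that $\frac{k}{s_1}D'_2$ almost surely attains the Eckart--Young lower bound of Condition~\ref{cond:close}. By Observation~\ref{obs} the latter is equivalent to requiring that $\frac{k}{s_1}D'_2$ be a.s.\ an orthogonal projector of rank exactly $k$ on $\R^{n-m+1}$.

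In the setting where Algorithm~\ref{alg:Opt} is invoked from OK, the input $C$ is $(r+1)\times(r+1)$, so $n-m+1 = k+1$ and the image of the complementary projector is one-dimensional. The first key step, and what I expect to be the heart of the argument, is to define $z_0 \in \R^{(k+1)\times 1}$ by $(z_0)_i := \sqrt{1 - d_{m+i-1}k/s_1}$. The radicands are non-negative exactly because of the minimality of $m$: the inequality $(r-m+1)d_m \le \sum_{j=m}^n d_j$ rewrites as $d_m k \le s_1$, and $d_m$ is the largest of $d_m,\ldots,d_n$. A direct computation then gives $\|z_0\|^2 = \sum_{i=1}^{k+1}(1 - d_{m+i-1}k/s_1) = (k+1) - k = 1$, so $z_0$ is a unit vector.

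Next, I would extend $z_0$ to an orthonormal basis $z_0,z_1,\ldots,z_k$ of $\R^{k+1}$, draw a uniform sign vector $s \in \{\pm1\}^{k+1}$ independently, and set $U := (s\odot z_1,\ldots,s\odot z_k)$ and $D'_2 := \tfrac{s_1}{k}UU^T$. The orthonormality of the columns of $U$ is immediate from $\langle s\odot z_i,\, s\odot z_j\rangle = \sum_\ell s_\ell^2\, z_i^{(\ell)} z_j^{(\ell)} = \langle z_i,z_j\rangle = \delta_{ij}$, so $UU^T$ is a.s.\ a rank-$k$ orthogonal projector and Condition~\ref{cond:close} is satisfied. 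For unbiasedness I would use the completeness identity $\sum_{j=0}^k z_j z_j^T = \Id_{k+1}$ to write $(D'_2)_{ij} = \tfrac{s_1}{k}\,s_i s_j\bigl(\delta_{ij} - (z_0)_i (z_0)_j\bigr)$; the off-diagonal expectations vanish because $\E[s_i s_j]=0$ for $i\ne j$, and on the diagonal $\E[(D'_2)_{ii}] = \tfrac{s_1}{k}\bigl(1-(z_0)_i^2\bigr) = d_{m+i-1}$ by the very definition of $z_0$, giving $\E[D'_2] = \diag(d_m,\ldots,d_n)$ as required.

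Thus $D'$ is an unbiased rank-$r$ approximator of $D$ satisfying both Conditions~\ref{cond1} and~\ref{cond:close} of Theorem~\ref{thm:optcond}. The main obstacle in the proof is confirming the well-definedness and unit norm of $z_0$, which is precisely where the characterization of $m$ from Theorem~\ref{thm:optcond} feeds back into the construction; once that is in place, the sign-trick and basis-extension steps are routine. If a general $n-m+1 > k+1$ were required outside the OK pipeline, the same plan still works by replacing the one-dimensional complement argument with a Schur--Horn construction of a deterministic $W_0\in\R^{(n-m+1)\times k}$ with orthonormal columns whose diagonal of $W_0 W_0^T$ equals $(d_{m+i-1}k/s_1)_i$ and then applying $W := SW_0$ with a diagonal random sign matrix $S$; the resulting $D'_2 := \tfrac{s_1}{k}WW^T$ is unbiased by the same sign-trick argument.
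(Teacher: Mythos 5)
Your proof is correct and follows essentially the same route as the paper: the block-diagonal reduction to $D_2'$, the unit vector $z_0$ with entries $\sqrt{1-d_ik/s_1}$, the completion to an orthonormal basis, and the random-sign projector $\frac{s_1}{k}\sum_i (s\odot z_i)(s\odot z_i)^T$ are exactly the paper's construction (Algorithm~\ref{alg:Opt} together with Lemmas~\ref{lem:rescaled} and~\ref{lem:diag}). The only difference is that for the general case $n-m+1>k+1$ you invoke Schur--Horn as a black box, whereas the paper proves the needed statement explicitly by induction in Lemma~\ref{lem:diag}; since the majorization hypothesis is easily verified there, this is an acceptable shortcut for the existence claim.
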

To simplify the exposition of the proof of this theorem, we state two lemmas. Their proofs are given in the end of this section. 
\begin{lemma}\label{lem:rescaled}
	Let $D= \diag(d_1,\ldots,d_n)$ such that $d_1,\ldots,d_n\in [0,1]$ with $\sum_{i=1}^n d_i = r$ a positive integer. Moreover, assume there exist orthonormal vectors $z_1,\ldots, z_r\in\R^{n\times 1}$ so that the matrix $Z=\sum_{i=1}^r z_i z_i^T$ has diagonal entries $d_1,\ldots,d_n$ (in this order). For a vector $s\in\R^{n\times1}$ of uniformly random signs (i.e. each entry is chosen uniformly and independently from $\{\pm1\}$), and $z'_i = s\odot z_i$, define $D' = \sum_{i=1}^k z'_i {z'_i}^T$.  
	
	Then $D'$ is an unbiased rank-$r$ approximator of $D$ satisfying the optimality Conditions \ref{cond1} and \ref{cond:close} from Theorem \ref{thm:optcond}.
\end{lemma}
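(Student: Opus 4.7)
The plan is to verify the three properties needed: unbiasedness, rank at most $r$, and the deterministic squared-distance identity $\|D' - \Id_n\|^2 = n - r$. Together these will yield the optimality conditions of Theorem~\ref{thm:optcond} in the reduced setting of the lemma, where $m = 1$ and $k = r = s_1$ (so that the rescaling factor $k/s_1$ equals $1$). Under this reduction Condition~\ref{cond1} is vacuously satisfied because the ``keep'' block is empty, and Condition~\ref{cond:close} collapses to exactly the squared-distance identity above.

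First I would check unbiasedness entry-wise. For indices $a,b$ one has $(z'_i (z'_i)^T)_{ab} = s_a s_b (z_i)_a (z_i)_b$. Since the entries of $s$ are independent uniform $\pm 1$, we have $\E[s_a s_b] = \delta_{ab}$, so summing over $i$ and using the hypothesis $Z_{aa} = \sum_i (z_i)_a^2 = d_a$ gives $\E[D'_{ab}] = \delta_{ab} d_a$, i.e.\ $\E[D'] = D$.

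Next I would check that the vectors $z'_1,\ldots,z'_r$ are (with probability one) still orthonormal, despite sharing the same sign vector $s$. This is the only step where one might initially worry, and it is in fact the main conceptual obstacle: the point is that $s_a^2 = 1$ for every $a$, so
\[
\langle z'_i, z'_j\rangle \;=\; \sum_a s_a^2 (z_i)_a (z_j)_a \;=\; \langle z_i,z_j\rangle \;=\; \delta_{ij}.
\]
Consequently $D' = \sum_{i=1}^r z'_i(z'_i)^T$ is deterministically the orthogonal projector onto the $r$-dimensional subspace spanned by $\{z'_i\}$, and in particular has rank exactly $r$.

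Finally, since $D'$ is always of the form $\sum_{i=1}^r u_i u_i^T$ with $u_1,\ldots,u_r$ orthonormal, Observation~\ref{obs} immediately gives $\|D' - \Id_n\|^2 = n - r$ with probability one. In the lemma's reduced setting this is precisely Condition~\ref{cond:close} from Theorem~\ref{thm:optcond}, while Condition~\ref{cond1} holds trivially as noted above. Combined with the unbiasedness established in the first step, this completes the verification that $D'$ attains the lower bound of Theorem~\ref{thm:optcond}.
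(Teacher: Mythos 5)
Your proof is correct and follows essentially the same route as the paper's: entry-wise computation of $\E[s_a s_b]$ for unbiasedness, the observation that the $z'_i$ inherit orthonormality from the $z_i$ (since $s_a^2 = 1$), and an appeal to Observation~\ref{obs} to verify Condition~\ref{cond:close} in the reduced setting $m=1$, $k = s_1 = r$. Your write-up is slightly more explicit about why the shared sign vector preserves orthonormality and why Condition~\ref{cond1} is vacuous, but the substance is identical.
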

The pointwise multiplication by random signs $s$ in this lemma can be interpreted as a generalization of the `sign-trick' from~\cite{tallec2017unbiased}.

To make use of the above lemma, we need to construct $z_1,\ldots, z_r$ as described in its statement. This is achieved by the following lemma, whose proof uses ideas from~\cite{post}
\begin{lemma}\label{lem:diag}
	Let $D=\diag(d_1,\ldots,d_n)$ such that $d_1,\ldots,d_n\in[0,1]$ with $\sum_{i=1}^n d_i = r$ a positive integer. Then, there exists orthonormal vectors $z_1,\ldots,z_r\in\R^{n\times1}$ so that the matrix $Z:= \sum_{i=1}^r z_i z_i^T$ has the same diagonal entries $d_1,\ldots,d_n$ as $D$ (in this order). 
\end{lemma}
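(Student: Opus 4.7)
The plan is to reformulate the problem as constructing an $n\times n$ orthogonal projection matrix of rank $r$ with prescribed diagonal $(d_1,\ldots,d_n)$: indeed, any such projection decomposes as $\sum_{i=1}^r z_i z_i^T$ for $z_1,\ldots,z_r$ an orthonormal basis of its range, and conversely the $z_i$'s produce the projection $Z$. In this form the claim is the projection case of the Schur--Horn theorem, and I would prove it by induction on $n$, reducing in each step to a strictly smaller instance that still satisfies the hypotheses of the lemma.

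The base case $n=0$ (hence $r=0$) is vacuous. For the inductive step I would distinguish three cases. If $d_i = 1$ for some $i$, take $z_1 := e_i$ and apply the induction hypothesis to the diagonal with $d_i$ removed (dimension $n-1$, rank $r-1$, sum $r-1$), padding each returned vector with a zero in coordinate $i$. If $d_i = 0$ for some $i$, simply drop coordinate $i$, apply induction in dimension $n-1$ with rank $r$, and pad back with a zero. The substantive case is when every $d_i \in (0,1)$: pick two coordinates, say $1$ and $2$, and split on whether $d_1+d_2\geq 1$ or $d_1+d_2<1$.

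When $d_1+d_2\geq 1$, I would apply induction to the diagonal $(d_1+d_2-1,d_3,\ldots,d_n)$ in dimension $n-1$ with rank $r-1$: the new first entry lies in $[0,1]$ because $d_1,d_2\in(0,1)$ force $d_1+d_2-1 \in [0,1)$, and the sum drops to $r-1$. This yields a rank-$(r-1)$ projection $P''$ on $\R^{n-1}$, which I would embed into $e_1^\perp\subset\R^n$ and augment to $P' := e_1 e_1^T + P''$, a rank-$r$ projection with diagonal $(1,\,d_1+d_2-1,\,d_3,\ldots,d_n)$. Since $P''$ annihilates $e_1$, the upper-left $2\times 2$ sub-block of $P'$ is already diagonal, and a short calculation shows that conjugating by the rotation $R$ in the $(1,2)$-plane with $\cos^2\theta = (1-d_2)/(2-d_1-d_2)$ (which lies in $[0,1]$ by the case hypothesis) changes the $(1,1)$ and $(2,2)$ entries to exactly $d_1$ and $d_2$, preserves the other diagonal entries, and preserves the projection property. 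The subcase $d_1+d_2<1$ is completely analogous, using the auxiliary diagonal $(0,\,d_1+d_2,\,d_3,\ldots,d_n)$ in dimension $n-1$ with rank still $r$, followed by the rotation $\sin^2\theta = d_1/(d_1+d_2)$.

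The main obstacle I expect is the bookkeeping rather than any hard content: one must verify in every subcase that the reduced instance is again valid (entries in $[0,1]$, non-negative integer sum, strictly smaller $n$) and that the rotation-angle formulas are well-defined. The $2\times 2$ trigonometric step itself is short precisely because the off-diagonal $(1,2)$ entry of $P'$ vanishes, so the rotation interpolates linearly in $\cos^2\theta$ between the two diagonal values while holding their sum constant. Termination is automatic since $n$ strictly decreases in each reduction.
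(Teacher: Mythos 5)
Your proof is correct, and it proves the same statement the paper proves (the projection case of Schur--Horn), but the induction is organized differently. The paper fixes the ordering $d_1\ge\dots\ge d_n$, inducts on $r$, and in each step isolates the longest prefix $d_1,\dots,d_m$ summing to at most $1$, transfers mass $\alpha$ from $d_{m+1}$ to $d_m$ so the prefix sums to exactly $1$, realizes that prefix as a single rank-one block $qq^T$, recurses on the suffix with rank $r-1$, and finally undoes the mass transfer with a rotation in the $(m,m+1)$-plane. You instead induct on $n$, merging two coordinates into one (peeling off a deterministic $e_ie_i^T$ when $d_1+d_2\ge 1$, or dropping a zero when $d_1+d_2<1$) and then splitting the merged diagonal entry back with a rotation in the $(1,2)$-plane. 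The engine is identical in both cases --- a Jacobi-type $2\times 2$ rotation applied to a block-diagonal projection redistributes mass between two diagonal entries while preserving the projection property and the remaining diagonal --- and your angle formulas check out ($\cos^2\theta=(1-d_2)/(2-d_1-d_2)$, resp.\ $\sin^2\theta=d_1/(d_1+d_2)$, both well defined in the relevant cases). What your version buys is that it avoids sorting and handles the degenerate entries $d_i\in\{0,1\}$ explicitly, at the cost of $n$ recursion levels rather than $r$; the paper's version removes a whole block of coordinates per level, which is what its Algorithm~3 exploits in the special case $n=r+1$, where no recursion is needed at all ($Z=\Id-z_0z_0^T$). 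Your bookkeeping of the reduced instances (entries in $[0,1]$, integer sum, strictly smaller $n$) is complete, so there is no gap.
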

Note that $Z$ as defined in this lemma is a symmetric idempotent matrix with trace $r$. It is not difficult to show that every symmetric, idempotent matrix $Z$ with trace $r$ can be decomposed as a sum $Z=\sum_{i=1}^r z_iz_i^T$, where the $z_i$ are orthonormal. So the Lemma can also be interpreted as the following statement about symmetric, idempotent matrices: Symmetric idempotent matrices can have any diagonal up to the constraint that diagonal entries are between 0 and 1 and sum up to an integer. It is easy to check that any symmetric, idempotent matrix also satisfies these two conditions, so that the lemma fully classifies the diagonals of symmetric, idempotent matrices.

We are now ready to give the proof of Theorem \ref{thm:achievable}.

\begin{proof}[Proof of Theorem \ref{thm:achievable}]
	Note that in order to construct an optimal rank-$r$ approximator $D'$ of $D$ it suffices to find a rank-$k$ approximator $D'_2$ of $D_2 = \diag(d_m,\ldots,d_n)$ satisfying condition \ref{cond:close}. from Theorem \ref{thm:optcond}. 
	
	Note that $\frac{k}{s_1}D_2$ satisfies the conditions of Lemma \ref{lem:diag}, since its diagonal entries sum to $k$ and are in $[0,1]$ by the definition of $m$. Therefore, there exist orthonormal vectors $z_1,\ldots, z_k\in\R^{(n-m+1)\times 1}$ so that $Z=\sum_{i=1}^k z_iz_i^T$ has the same diagonal as $\frac{k}{s_1}D_2$. By Lemma \ref{lem:rescaled}, choosing a vector $s\in\R^{(n-m+1)\times1}$ of random signs (i.e. each entry is uniformly and independently drawn from $\{\pm1\}$) gives an optimal unbiased rank-$k$ approximator $\sum_{i=1}^k (s\odot z_i)(s\odot z_i)^T$ of $\frac{k}{s_1}D_2$ satisfying the (rescaled) Conditions \ref{cond1} and \ref{cond:close} from Theorem \ref{thm:optcond}. Multiplying this approximator by $\frac{s_1}{k}$ therefore gives an unbiased rank-$k$ approximator of $D_2$ satisfying the same conditions.
\end{proof}

\subsubsection*{Proof of Lemma \ref{lem:rescaled}}
\begin{proof}
	We will first check that $D'$ is actually an unbiased approximator of $D$ and then check the condition from Theorem \ref{thm:optcond}. 
	
	In order to show $\E[D']=D$, consider the $(a,b)$-th entry $(z'_i {z'_i}^T)_{a,b}$ of the matrix $z'_i{z'_i}^T$. Observe that $(z'_i {z'_i}^T)_{a,b}= s_as_b (z_i z_i^T)_{a,b}$, so that for $a\neq b$ we have $\E[(z'_i {z'_i}^T)_{a,b}]= 0$ and for $a=b$ we have $\E[(z'_i {z'_i}^T)_{a,a}]= (z_i z_i^T)_{a,a}$. From this, it follows that $\E[D']$ has 0 off-diagonal entries and that its diagonal entries equal the ones of $Z$. In other words, $\E[D']=D$ as desired.
	
	We now check the conditions given by Theorem \ref{thm:optcond}. First of all, note that in the notation of the theorem we have $m=1$ (since $d_1\leq 1$ by assumption) and therefore $s_1 =r$, so that we just have to show $\Vert D' - \Id_r\Vert^2 = (n-r)$. This is immediate from the fact that the $z'_i$ always inherit orthonormality from the $z_i$ and Observation \ref{obs}.
\end{proof}

\subsubsection*{Proof of Lemma \ref{lem:diag}}
Let us first give a simplified construction for the special case $n=r+1$ which is used by Algorithm \ref{alg:Opt}.
We simply define the unit-norm vector 
${z_0 = (\sqrt{1-d_1},\ldots,\sqrt{1-d_{r+1}})^T}$. 
Now, we find $z_1,\ldots,z_{r}$ completing an orthonormal bases $z_0,z_1,\ldots,z_r$ of $\R^{r+1}$(for example, one can first complete the basis arbitrarily and then apply the (modified) Gram-Schmidt algorithm). We then have $\sum_{i=0}^{r} z_i z_i^T = \Id_n$ and therefore $Z=\sum_{i=1}^{r} z_i z_i^T= \Id_n - z_0z_0^T$ has the desired diagonal entries.

We now give the full proof of the lemma, it uses ideas from~\cite{post}.
\begin{proof}
	First, we may without loss of generality assume that $d_1\geq\ldots \geq d_n$, since reordering the diagonal entries of $Z$ can be achieved by reordering the coordinates of the $z_i$.
	
	We will prove the statement by induction on $r$ and we note that the proof can easily be turned into an algorithm constructing the $z_i$. 
	
	For $r=1$, the statement is trivial: Simply set $z_1 = (\sqrt{d_1},\ldots,\sqrt{d_n})$ and note that it has norm 1.
	
	Now assume the statement holds for $r-1$. We want to show that it holds for $r$. Our plan is as follows: We will change two diagonal entries $d_m, d_{m+1}$, so that the first $m$ diagonal entries sum up to $1$ and the remaining ones sum up to $r-1$. We then apply the induction hypothesis to find vectors $x_i$ such that $\sum_{i}x_ix_i^T$ has the slightly changed values on the diagonal (with new $d_m,d_{m+1}$) and then apply a rotation $R$ to restore the original diagonal entries $d_m,d_{m+1}$ and giving the desired $z_i=Rx_i$.
	
	 We now give the details. Set $$m = \max\left\{j\in\{1,\ldots,n\}\colon \sum_{t=1}^j d_t \leq 1     \right\}$$ and let $\alpha = 1 - \sum_{t=1}^j d_t$. Now, set $d'_i = d_i$ for $i\neq m,m+1$ and set $d'_{m}= d_{m} + \alpha$ as well as $d'_{m+1}= d_{m+1} -\alpha$ (note that $1\leq m < r$ so that $m, m+1$ are valid indices). Then we claim that $d'_{m+1},\ldots,d'_{n}$ satisfy the conditions of the lemma for $r-1$.
	
	This is not difficult to see: Note that $\sum_{i=1}^m d'_i = \sum_{i=1}^m d_i +\alpha = 1$ by the definition of $\alpha$. Moreover, $\sum_{i=1}^n d'_i = \sum_{i=1}^n d_i = r.$ Therefore, we get $\sum_{i=m+1}^{n} d'_i = r -\sum_{i=1}^m d'_i = r-1$. Moreover, $d'_{m+1}\leq d_{m+1} \leq 1$ and 
	\begin{eqnarray*}d'_{m+1} &=& d_{m+1} - \alpha = d_{m+1} - \left(1-\sum_{i=1}^m d_i\right) \\
		&=& \sum_{i=1}^{m+1} d_i -1\geq 0
	\end{eqnarray*}
	by definition of $m$. For $i>m+1$, the condition $d'_i\in[0,1]$ is trivial. So we have indeed checked that $d'_{m+1},\ldots,d'_{n}$ satisfy the conditions of the lemma for $r-1$.
	
	By induction, there exist vectors $y_1,\ldots,y_{r-1}\in\R^{(n-m)\times 1}$, so that $Y = \sum_{i=1}^{r-1} y_i{y_i}^T$ has diagonal entries $d'_{m+1},\ldots,d'_{n}$. We write $q= (\sqrt{d'_1},\ldots, \sqrt{d'_m})^T$ and let $${x_i = \left(\begin{matrix} 0\\y_i \end{matrix}\right) \in\R^{n \times 1}}
	\;\text{and }\;
	{x_r = \left(\begin{matrix} q\\0_{}\end{matrix}\right)}\in\R^{n\times 1}.$$ Then, the $x_i$ are clearly orthonormal and the matrix $X = \sum_{i=1}^{r} x_i x_i^T$ can be written as a block diagonal matrix of the form
	\[
	X =
	\left(
	\begin{matrix}
	qq^T & 0 \\
	0 &  Y
	\end{matrix}
	\right).
	\]
	Especially, $X$ has diagonal entries $d'_1,\ldots, d'_n$. On top of that, when we restrict the indices of $X$ to be $m$ or $m+1$, we obtain the submatrix $$\left(\begin{matrix}d'_{m} & 0 \\ 0& d'_{m+1}\end{matrix}\right)=\diag(d_{m}+x,d_{m+1}-x)=:D_m.$$ 
	
	Let $$R(\phi) = \left(\begin{matrix}\cos \phi & \sin\phi \\ -\sin\phi& \cos\phi\end{matrix}\right)$$ be a rotation matrix (with angle $\phi$) and choose $\phi$ so that $R(\phi)D_mR(\phi)^T$ has diagonal entries $d_{m}, d_{m+1}$, i.e. $\phi = \arcsin\left(\sqrt{\frac{x}{2x + d_m- d_{m+1}}}\right)$, which is well-defined since $d_m\geq d_{m+1}$. 
	
	Now, consider the block-diagonal matrix 
	$$R=\left(
	\begin{matrix}
	\Id_{m-1} & 0 & 0 \\
	0 & R(\phi) & 0 \\
	0 & 0 & \Id_{n-m-1}
	\end{matrix}
	\right).
	$$
	By the choice of $\phi$, we then get that $RXR^T = \sum_{i=1}^r (Rx_i)(Rx_i)^T$ has diagonal entries $d_1,\ldots,d_n$. Since $R$ is orthogonal, we get that the $z_i = Rx_i$ are orthonormal and we have therefore constructed the $z_i$ as desired.
\end{proof}

\newpage
\section{Additional Experiments}\label{sec:exp}

	Here we include $5$ additional experiments complementing the ones presented in the main paper. The first one illustrates that the batch size chosen does not affect the observation that the performance of OK matches that of TBPTT. The next three analyze the cosine between the true gradient and the approximated one. The first of the three shows the cosine for untrained networks on CHAR-PTB. The second of the three shows the cosine on the Copy task after training until the algorithm learns sequences of length $40$. The third one shows that the specific set of trained weights does not affect the cosine significantly, by repeating the previous experiment while retraining the network. The last experiment analyzes the quality of $r$-OK, the optimal unbiased Kronecker rank $r$ approximation, from a different point of view: by comparing it to the 'best' biased rank $r$ approximation of the gradient, that is, the approximation that stores the closest approximation of the gradient as an $r$-Kronecker-Sum. Intuitively, the performance of the biased version of the algorithm measures how far away the gradient is from a low rank approximation, which also influences how well one can do unbiased low-rank approximations of the gradient.

	\subsection{CHAR-PTB with larger batch size}
		For the first experiment, we would like to illustrate that the results obtained in Figure~$2$, regarding $8$-OK matching TBPTT-$25$ did not depend on the batch size. As noted in the paper, this is in principle clear as the batch size $b$ divides the batch noise and the approximation noise by the same factor $b$. Figure~\ref{fig:ptb_comparison_b64} shows the validation performance of $8$-OK and TBPTT-$5$ and $25$ on the Penn TreeBank dataset in bits per character (BPC). The experimental setup is exactly the same as in Figure~$2$, except the batch size chosen is $64$. Table~\ref{table:ptb_val_test_b64} summarizes the results.

		\begin{figure}[ht]
			\vskip 0.2in
			\begin{center}
				\centerline{\includegraphics[width=\columnwidth]{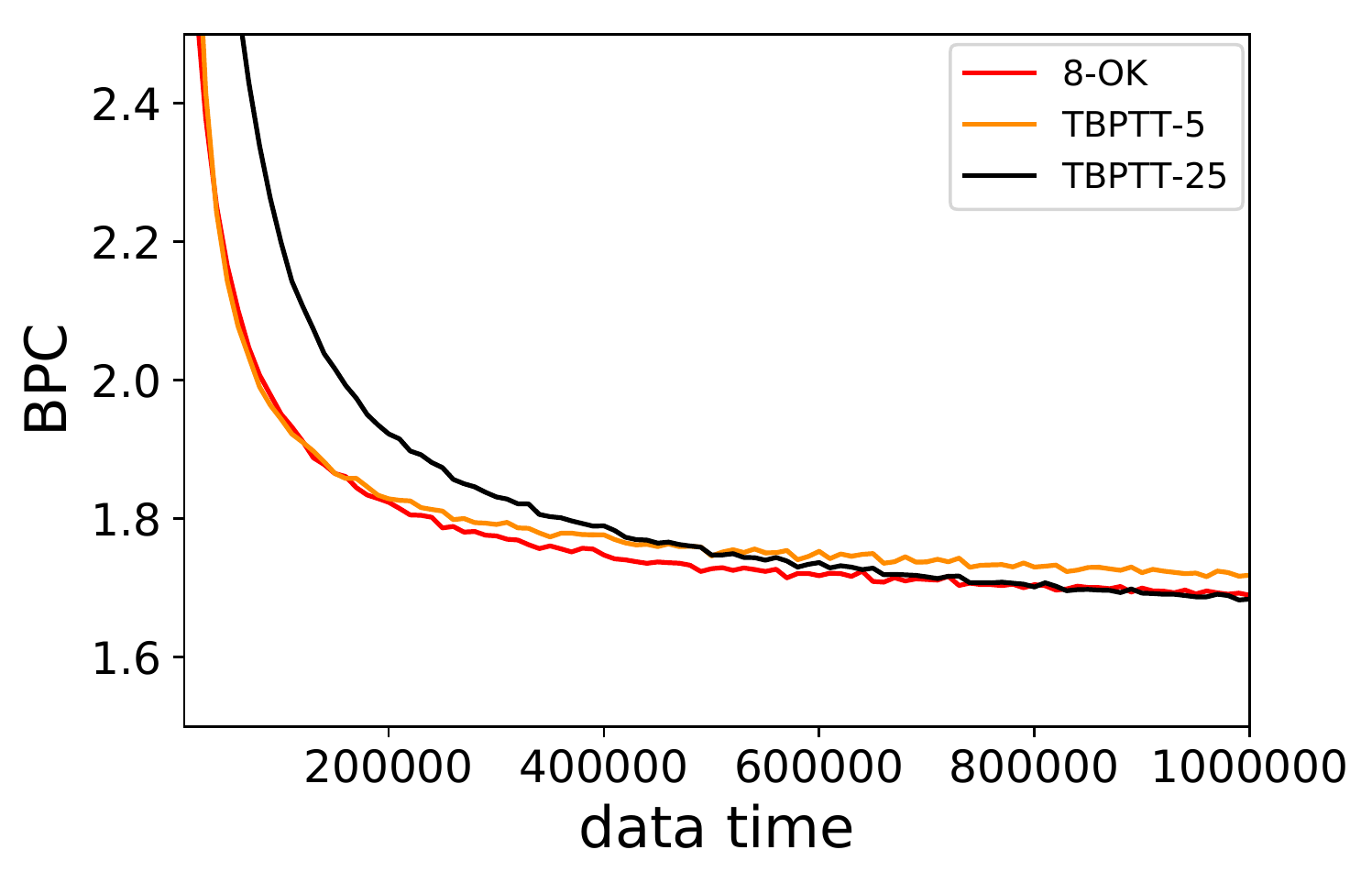}}
				\caption{Validation performance on CHAR-PTB in bits per character (BPC). Even with larger batch sizes, $8$-OK matches the performance of TBPTT-$25$. We trained a RHN with $256$ units, with a batch size of $64$.}
				\label{fig:ptb_comparison_b64}
			\end{center}
			\vskip -0.2in
		\end{figure}

	\begin{table}[h]
		\caption{Results on Penn TreeBank with a batch size $64$. Standard deviations are smaller than $0.01$.}
		\label{table:ptb_val_test_b64}
		\vskip 0.15in
		\begin{center}
			\begin{small}
				\begin{sc}
					\begin{tabular}{lcccr}
						\toprule
						Name & Validation & Test & \#params \\
						\midrule
						$8$-OK    & $1.69$ & $1.64$ & $133K$ \\
						TBPTT-$5$    & $1.72$ & $1.67$ & $133K$ \\
						TBPTT-$25$    & $1.68$ & $1.63$ & $133K$ \\
						\bottomrule
					\end{tabular}
				\end{sc}
			\end{small}
		\end{center}
		\vskip -0.1in
	\end{table}

	\subsection{Cosine analysis between the true gradient and the approximated one}
	
	For the second experiment, we pick an untrained RHN with $256$ units in the CHAR-PTB task. This contrasts with Figure~$3$, where we first trained the network weights to assess the gradient estimate at the end of training (which, as indicated there, is more challenging). We compute the cosine of the angle $\phi$ between the gradient estimates provided by OK and KF-RTRL and the true RTRL gradient for $10000$ steps. We plot the mean and standard deviation for $20$ different untrained RHNs with random weights. Figure~\ref{fig:noise_untrained_ptb_comparison} shows that the gradient can be almost perfectly approximated by a sum of $2$ Kronecker factors, at least at the start of training. This illustrates the advantage of using an optimal approximation, as opposed to the one in KF-RTRL. 

	\begin{figure}[ht]
		\vskip 0.2in
		\begin{center}
			\centerline{\includegraphics[width=\columnwidth]{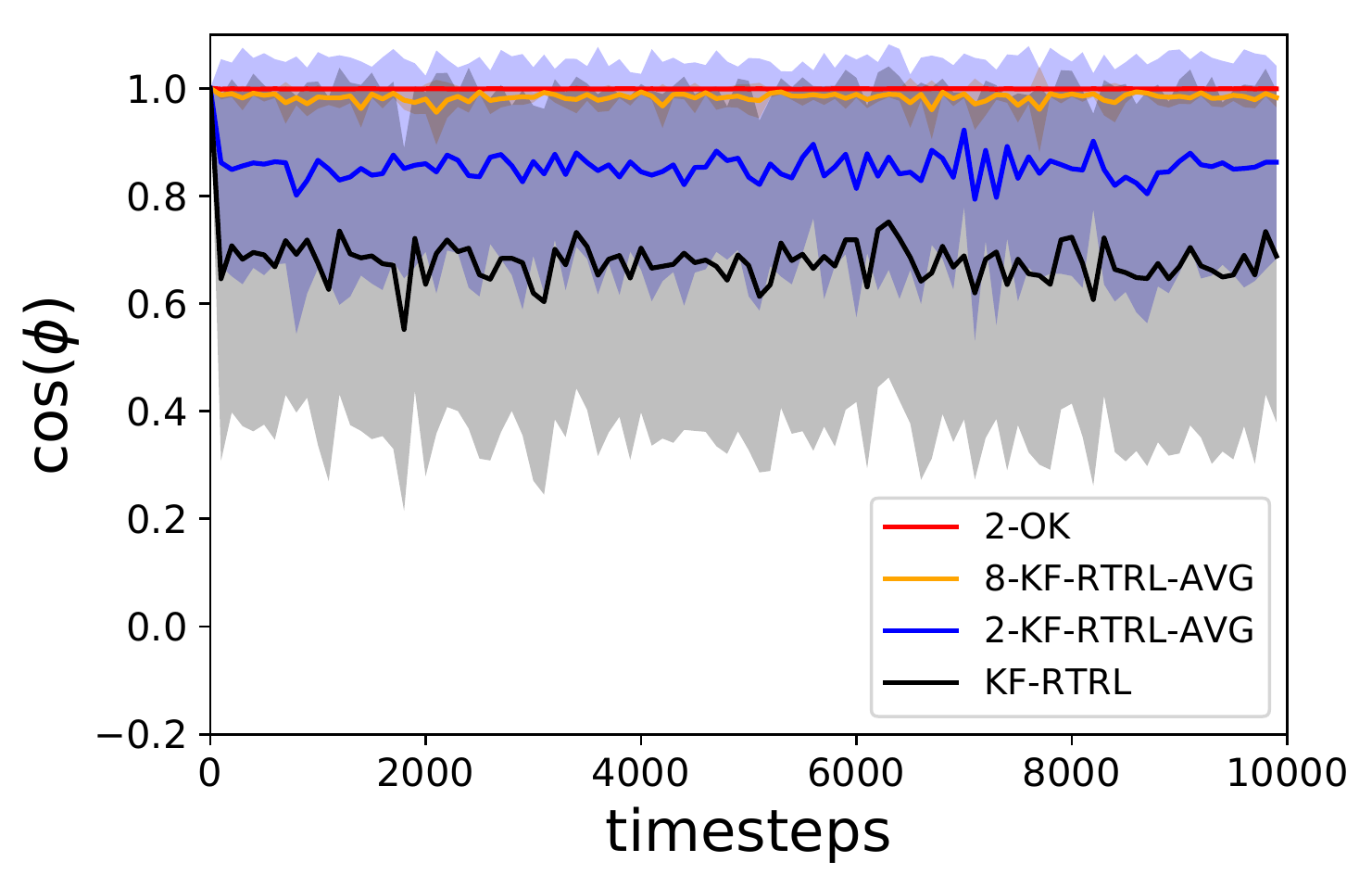}}
			\caption{Variance analysis on an untrained RHN for the CHAR-PTB task. At the start of training, even a sum of $2$ Kronecker factors suffices to perfectly capture the information in the gradient.}
			\label{fig:noise_untrained_ptb_comparison}
		\end{center}
		\vskip -0.2in
	\end{figure}

	Naturally, as shown in the paper, the most interesting behavior appears later in training. The third experiment in the appendix is equivalent to the one performed to produce Figure $3$, except we use the Copy task and a RHN with $128$ units trained until it learns a sequence of length $40$. The results are similar in spirit to the Figure shown in the main paper and are shown in Figure~\ref{fig:cp_noise_trained_exploration}. Observe that datapoints where the true gradient is smaller than 0.0001 were removed. This is necessary in the Copy task because there are a lot of steps (say when the network is reading the input), where the task is trivial and the performance has already saturated (leading to very small gradients). Of course, small gradient steps are also irrelevant for learning so removing them is justified.

	\begin{figure}[ht]
		\vskip 0.2in
		\begin{center}
			\centerline{\includegraphics[width=\columnwidth]{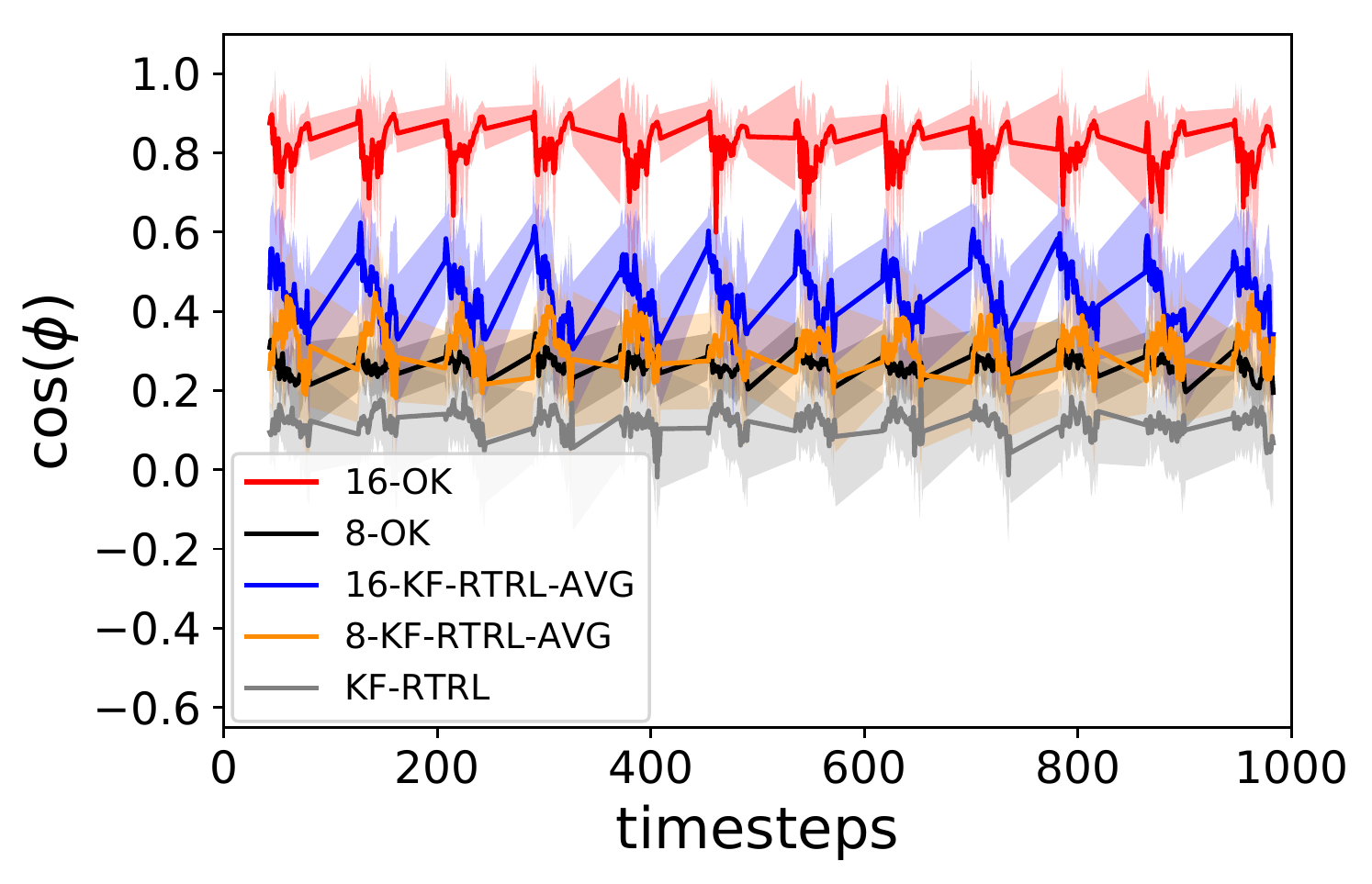}}
			\caption{Variance analysis on the Copy task for a RHN trained until it has learned sequences of length $T=40$. Even later in training, $16$-OK keeps a very good estimate of true gradient. For this plot, we remove datapoints where the true gradient is smaller than 0.0001, as those are irrelevant for learning. In particular, the steps corresponding to the network reading the input are not plotted.}
			\label{fig:cp_noise_trained_exploration}
		\end{center}
		\vskip -0.2in
	\end{figure}

	One might wonder whether the behavior observed in Figure~\ref{fig:cp_noise_trained_exploration} was not specific to the set of trained weights used there. To that end, we retrain the network and repeat the experiment. Figure~\ref{fig:noise_memo_2_comparison} shows that the behavior of the cosine does not depend much on the particular set of trained weights used.

	\begin{figure}[ht]
		\vskip 0.2in
		\begin{center}
			\centerline{\includegraphics[width=\columnwidth]{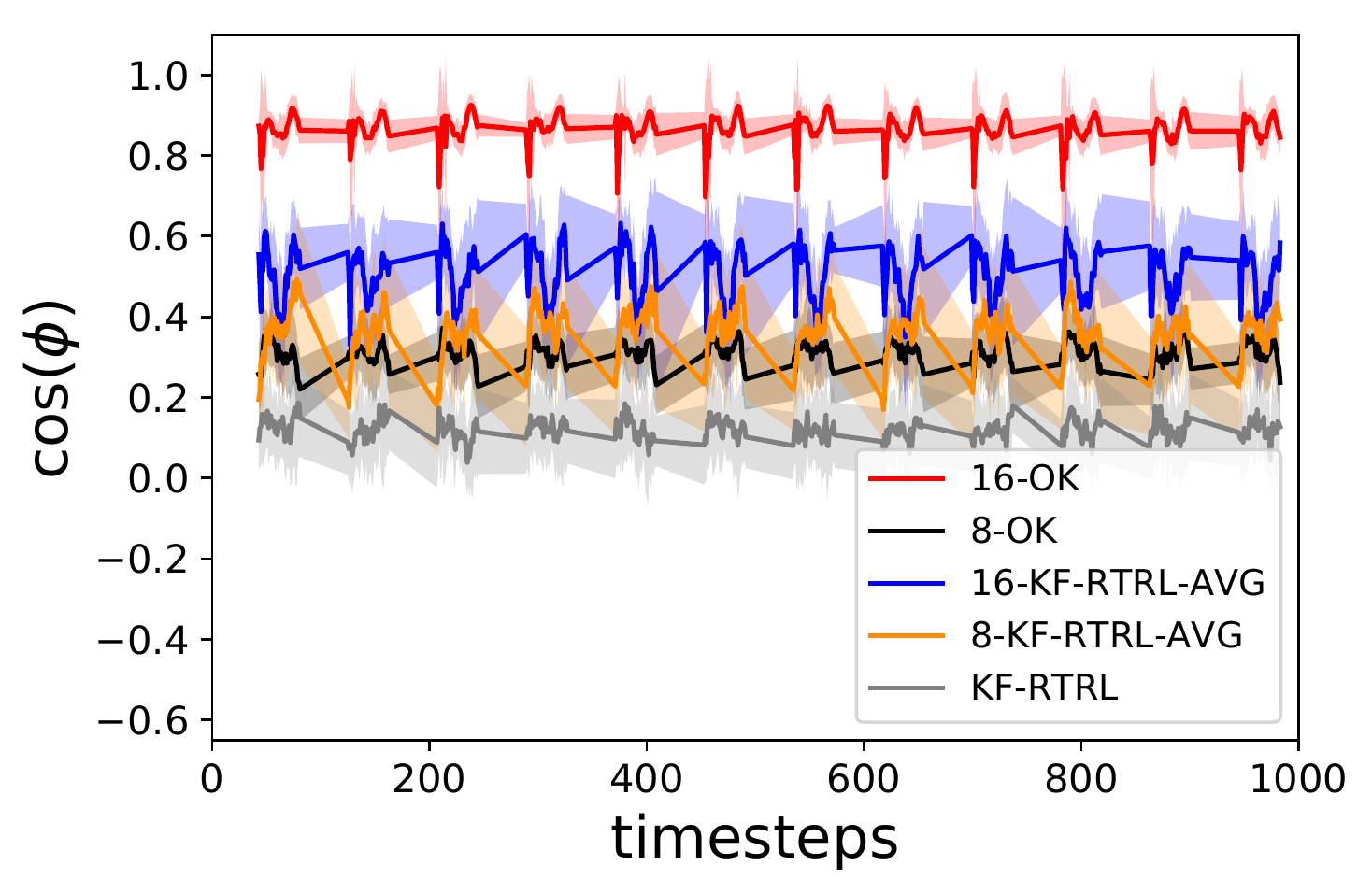}}
			\caption{Variance analysis on the Copy task for a RHN trained until it has learned sequences of length $T=40$. Repeated experiment with retrained weights.}
			\label{fig:noise_memo_2_comparison}
		\end{center}
		\vskip -0.2in
	\end{figure}

	Lastly, we analyze the effect of changing the number of units in the RHN. First, we pick untrained RHNs with sizes as powers of $2$ from $8$ to $512$ in the CHAR-PTB task. We compute the cosine of the angle $\phi$ between the gradient estimates provided by OK and KF-RTRL and the true RTRL gradient after $100$ steps. We plot the mean and standard deviation for $10$ different untrained RHNs with random weights (in the case of KF-RTRL and $2$-KF-RTRL-AVG, we use $100$ untrained RHNs). Figure~\ref{fig:noise_untrained_ptb_units} shows that the number of units does not affect the results seen in Figure~\ref{fig:noise_untrained_ptb_comparison}, at least for an untrained network.
	
	\begin{figure}[ht]
		\vskip 0.2in
		\begin{center}
			\centerline{\includegraphics[width=\columnwidth]{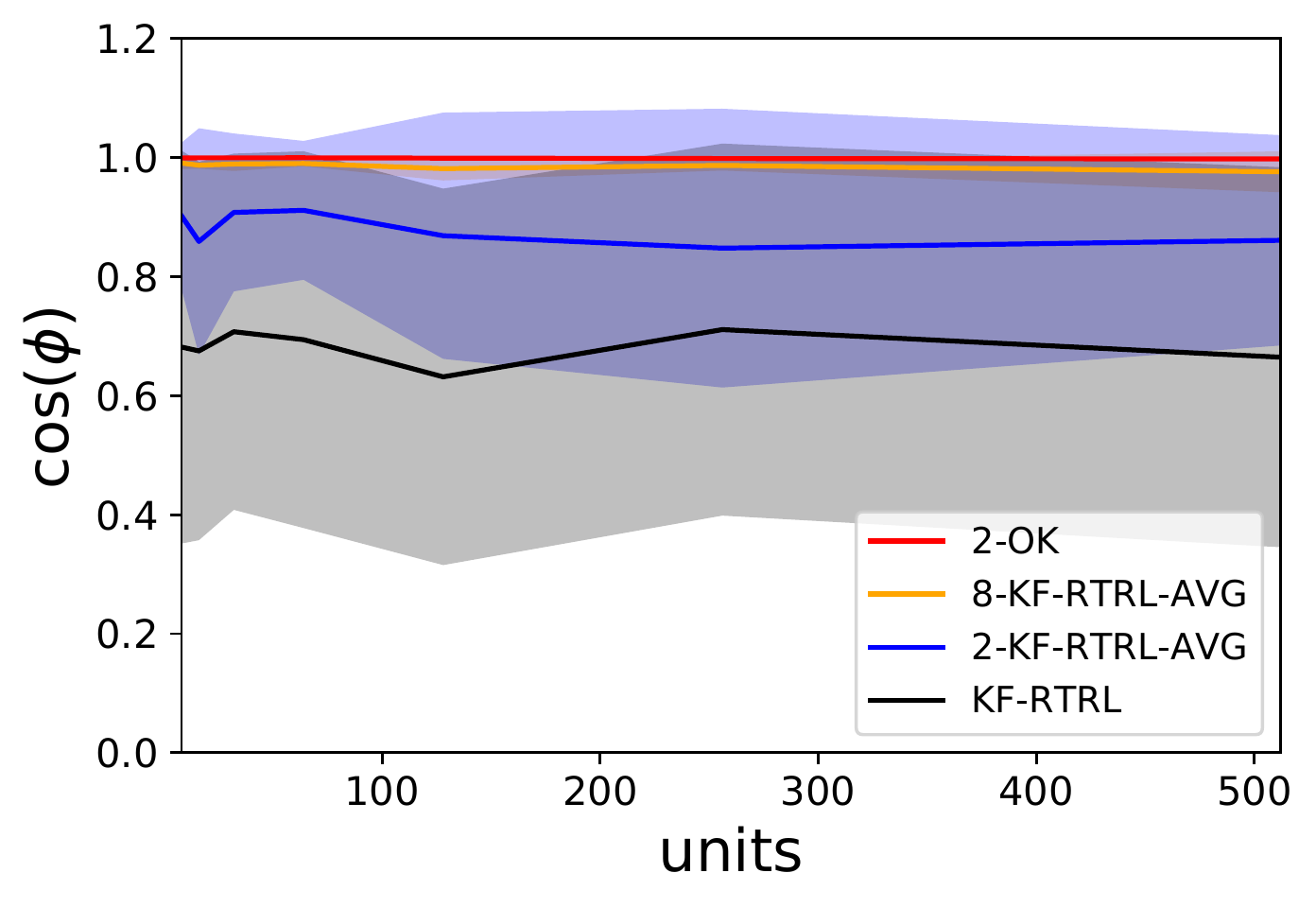}}
			\caption{Variance analysis on an untrained RHN for the CHAR-PTB task, varying the number of units from $8$ to $512$.}
			\label{fig:noise_untrained_ptb_units}
		\end{center}
		\vskip -0.2in
	\end{figure}
	
	As mentioned above, the most interesting behavior occurs at the end of training. To this end, we make Figures~\ref{fig:noise_trained_cp_n_512} and~\ref{fig:noise_trained_ptb_n_512} analogous to Figure~\ref{fig:cp_noise_trained_exploration} and Figure~$3$ from the main paper, where we include also an RHN of size $512$ for comparison. Observe that there is only a small difference in the performance of both OK and KF-RTRL when the network size is increased in Figure~\ref{fig:noise_trained_cp_n_512}. However, in Figure~\ref{fig:noise_trained_ptb_n_512}, OK drops more than KF-RTRL, with the advantage of using the optimal approximation almost completely vanishing. We believe that this is due to the gradients in the larger network containing longer term information than the gradients in the smaller network (that is, taking longer to vanish, due to the spectral norm of $H_t$ being closer to $1$). In particular, this effect is not present in Figure~\ref{fig:noise_trained_cp_n_512}, as both networks were trained until they learned sequences of length $40$. As a result, the gradients probably contain comparable amount of long term information. Naturally, the better test of the quality of the approximations used would be to train a network of larger size in either task. However, due to the computational costs, we have been unable to fully explore the effect of changing the network size experimentally.


	\begin{figure}[ht]
		\vskip 0.2in
		\begin{center}
			\centerline{\includegraphics[width=\columnwidth]{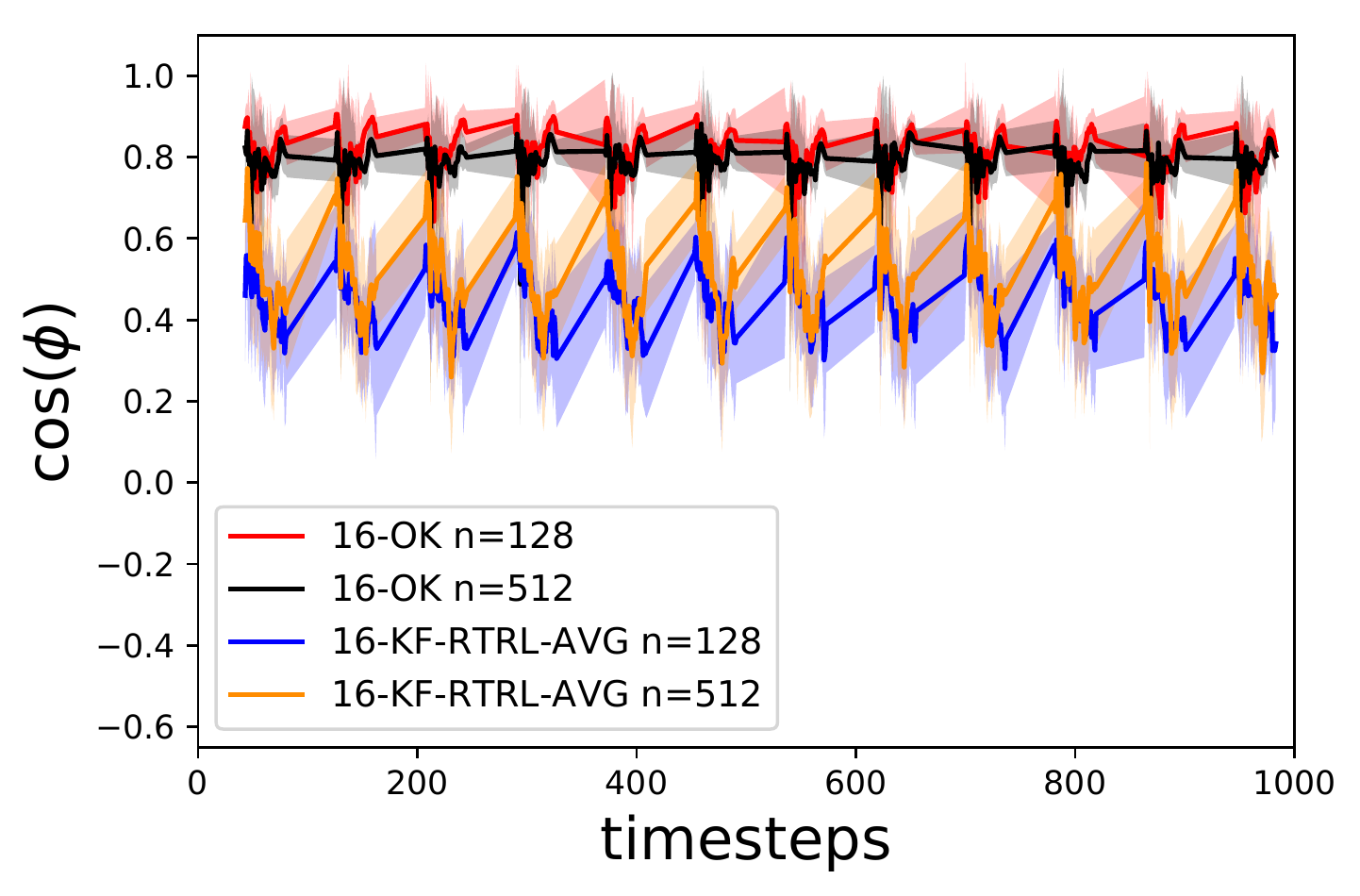}}
			\caption{Variance analysis on the Copy task for a RHN trained until it has learned sequences of length $T=40$. We vary the size of the RHN used, to show that both the OK and KF-RTRL-AVG approximations do not decay significantly, even later in training, for larger network sizes. As in Figure~\ref{fig:cp_noise_trained_exploration}, we remove datapoints where the true gradient is smaller than 0.0001.}
			\label{fig:noise_trained_cp_n_512}
		\end{center}
		\vskip -0.2in
	\end{figure}

	\begin{figure}[ht]
		\vskip 0.2in
		\begin{center}
			\centerline{\includegraphics[width=\columnwidth]{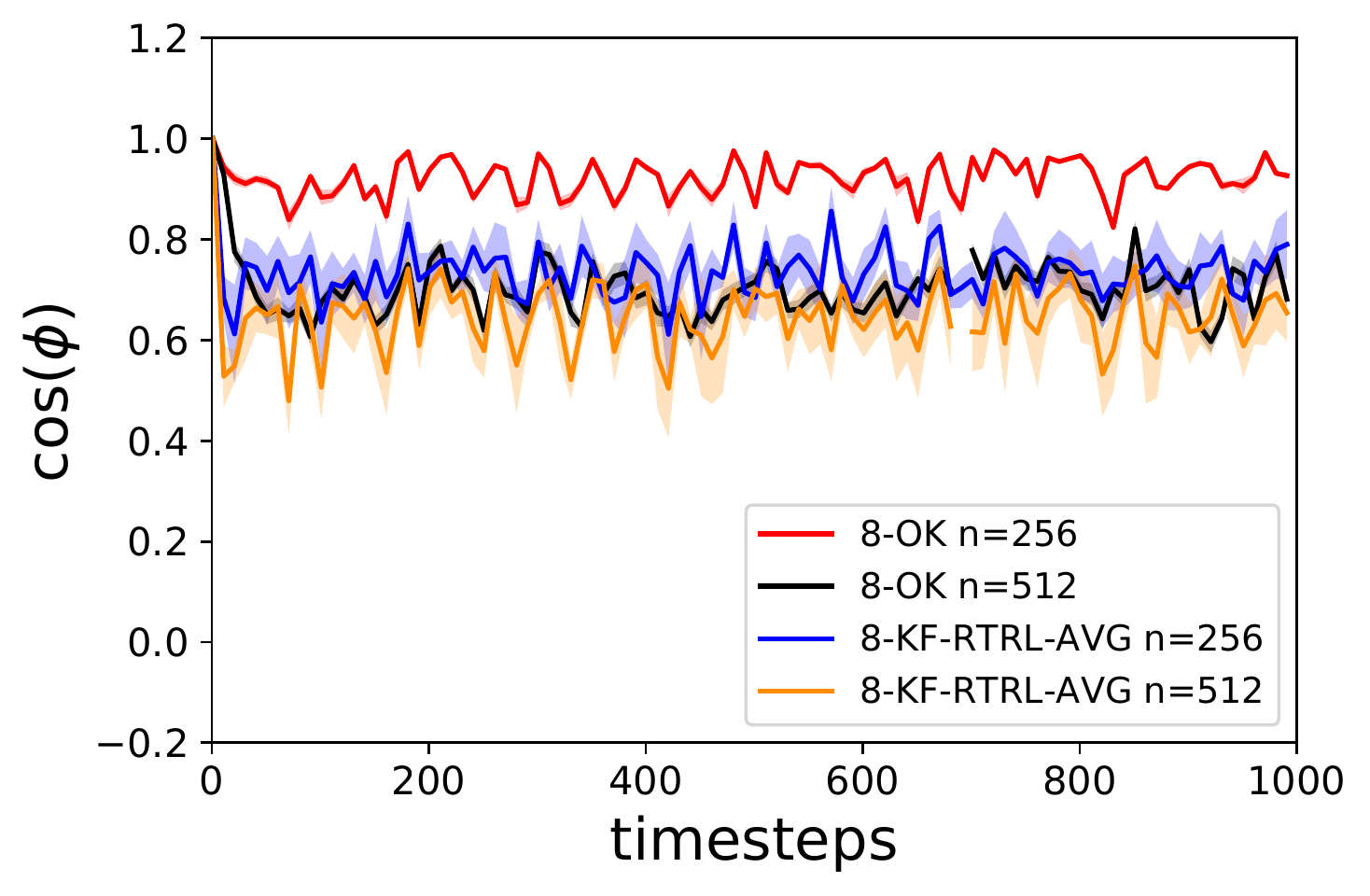}}
			\caption{Variance analysis on the Copy task for a RHN trained for $1$ million steps, with sizes either $256$ or $512$. Observe that $8$-OK decays more than $8$-KF-RTRL-AVG with the increase in network size. As in Figure~\ref{fig:cp_noise_trained_exploration}, we remove datapoints where the true gradient is smaller than 0.0001.}
			\label{fig:noise_trained_ptb_n_512}
		\end{center}
		\vskip -0.2in
	\end{figure}

	\subsection{Bias experiments}
	For the last set of experiments, we perform a Copy task experiment where we compare the optimal unbiased approximations used in OK to the corresponding optimal, biased ones. We first describe the biased approximations and then present the experiments.
	
	\subsubsection{Description of the Optimal Biased Approximation}
	In the paper, we were faced with approximating an $(r+1)$-Kronecker-Sum
	\begin{eqnarray}\label{eq:product}
	G = u_1\otimes (H_t A_1) +\ldots + u_r \otimes (H_t A_r) + h\otimes D
	\end{eqnarray}
	by an $r$-Kronecker-Sum $G'$. We solved the problem of finding an optimal, \textit{unbiased} approximator $G'$ of $G$. Instead, one can also construct an optimal biased approximator. Concretely, this means approximating $G$ by a (fixed, non-random) $r$-Kronecker-Sum $G'$, which minimizes $\lVert G - G'\rVert$. To clearly distinguish between unbiased and biased approximations, we refer to the corresponding algorithms as Unbiased Optimal Kronecker-Sum, $r$-U-OK, and Biased Optimal Kronecker-Sum, $r$-B-OK. 
	
	We now give details of how to construct $r$-B-OK. Similarly to $r$-U-OK, we first reduce the problem to approximating a matrix $C\in\R^{(r+1)\times(r+1)}$ optimally by a rank-r matrix $C'$ (which is now deterministic). The steps are exactly the ones given in Section \ref{sec:opt} and the matrix $C$ is also the same as the one presented there. Now, we need to minimize $\lVert C- C'\rVert$ subject to $C'$ having rank at most $r$. This is a well known problem and solved by the Eckart-Young Theorem (see Section \ref{sec:prelim}). This finishes the construction of $r$-B-OK. We also note that almost the same pseudo-code as Algorithm 2 from the paper (Algorithm \ref{alg:OK}) can be used. Rather than calling $Opt(C)$, we need to call $OptBias(C)$ as described in Algorithm \ref{alg:OptBias}, which is basically an implementation of the Eckart-Young Theorem.
	
	\begin{algorithm}
		\caption{$OptBias(C)$}
		\label{alg:OptBias}
		\begin{algorithmic}
			\STATE{{\bfseries Input:} Matrix $C\in\mathbb{R}^{(r+1)\times (r+1)}$}
			\STATE{{\bfseries Output:} Matrices $L',R'\in\mathbb{R}^{(r+1)\times r}$, so that $C'=L'R'^T$ minimizes $\lVert C-C'\rVert$.}
			\vspace{3pt}
			\STATE{\bfseries /* Reduce to diagonal matrix $D$*/}
			\STATE $(D,U,V) \gets \mathrm{SVD}(C)$ 
			\STATE $(d_1,\ldots,d_{r+1}) \gets$ diagonal entries of $D$
			\vspace{3pt}
			\STATE {\bfseries /* Initialise $L',R'$ to approximate $D$*/}
			\STATE{$L',R'\gets 0$}
			\FOR{$1\leq i \leq r$}
			\STATE $L'_{i,i}, R'_{i,i}\gets \sqrt{d_i}$
			\ENDFOR
			\STATE{\bfseries{/*Approximate $C=UDV^T$*/}}
			\STATE $L'\gets UL',\quad R'\gets  VR'$
		\end{algorithmic}
	\end{algorithm}	

\subsubsection{Experiments}
	The last experiment has essentially two goals. The first is to illustrate that biased approximations are not really desirable when doing gradient descent. This becomes clear in the difference in performance between the biased version of OK and the unbiased ones. The second goal is to show that, throughout training, and not just for specific points as shown in the cosine plots, the gradient can be well approximated by an $r$-Kronecker-Sum, for small values of $r$. In particular, this indicates that the noise in $r$-U-OK is small. Figure~\ref{fig:cp_OK_bias_exp} shows that $16$-B-OK performs almost as well as $16$-U-OK. The performance of $1$-B-OK is far worse than the corresponding unbiased OK. For the experiment, we use the same setup as described in Section 4.1.1 of the main paper. Apart from that, the rank 1 algorithms shown in the plot have been run with a batch size of 256. We repeat each experiment 5 times and plot the mean and standard deviation for each.


	\begin{figure}[ht]
		\vskip 0.2in
		\begin{center}
			\centerline{\includegraphics[width=\columnwidth]{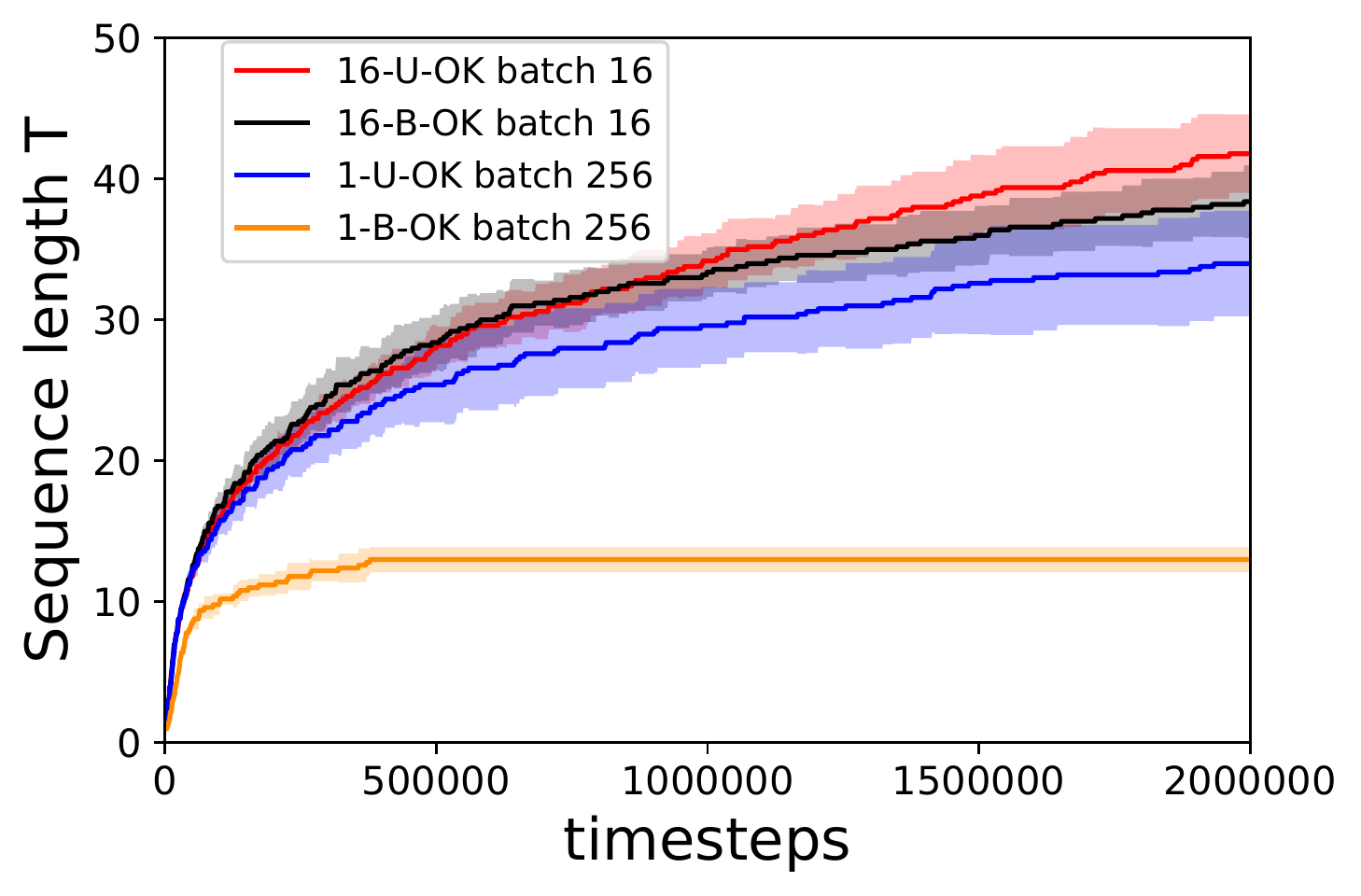}}
			\caption{Analysis of the Kronecker rank of the gradient. The biased rank 16 approximation of the gradient performs almost as well as the 16-OK. This implies the rank of the gradient throughout training can be well approximated by a sum of 16 Kronecker factors.
			}
			\label{fig:cp_OK_bias_exp}
		\end{center}
		\vskip -0.2in
	\end{figure}



\end{document}